\def\arxivpreprint{1}
\PassOptionsToPackage{table}{xcolor}
\documentclass{article}
\usepackage{iclr2027_conference,times}
\setcitestyle{numbers,square,comma,citesep={,},sort&compress}
\usepackage{amsmath,amssymb,mathtools}
\usepackage{booktabs}
\usepackage{multirow}
\usepackage{graphicx}
\usepackage{wrapfig}
\usepackage{tikz}
\usepackage{enumitem}
\usepackage{microtype}
\usepackage{xcolor}
\usepackage{hyperref}
\usepackage{arydshln}
\usepackage{url}
\usepackage{cleveref}
\usepackage{colortbl}
\usepackage{graphicx}
\graphicspath{{figures/}}
\usepackage{subcaption}
\usepackage{multirow}
\usepackage{makecell}
\usepackage{tcolorbox}

\usepackage{float}
\usepackage{algorithm}
\usepackage{algpseudocode}

\newcommand{\KL}{D_{\mathrm{KL}}}
\newcommand{\TV}{\mathrm{TV}}

\newcommand{\Sset}{\mathcal S}
\newcommand{\name}{\textsc{LEDFlow}}

\usepackage{amsthm}
\theoremstyle{plain}
\newtheorem{theorem}{Theorem}
\newtheorem{lemma}{Lemma}
\newtheorem{proposition}{Proposition}
\newtheorem{corollary}{Corollary}
\theoremstyle{definition}
\newtheorem{definition}{Definition}

\theoremstyle{remark}
\newtheorem{remark}{Remark}

\newcommand{\papertitle}{\name{}: Introducing Entropy-guided Generation Order into Uniform Discrete Flow}
\newcommand{\papersubtitle}{\\[0.35em]{\large\normalfont \name{}: Introducing Entropy-guided Generation Order into Uniform Discrete Flow}}

\ifdefined\arxivpreprint
\title{Entropy can flow, or it can guide. Be Entropy.\papersubtitle}
\iclrfinalcopy
\author{Tung Sum Thomas Kwok*, Yidong Ouyang*, Yingjia Wan, Ying Nian Wu\\
University of California, Los Angeles\\
\texttt{tk1018@ucla.edu}
\And
Zhijiang Guo**\\
Hong Kong University of Science and Technology\\
\texttt{zhijiangguo@hkust-gz.edu.cn}
\And
Oscar Leong**\\
Shanghai Jiao Tong University\\
\texttt{oleong@sjtu.edu.cn}
}
\else
\title{\papertitle}
\author{Anonymous authors\\ Paper under double-blind review}
\fi

\begin{document}
\maketitle
\ifdefined\arxivpreprint
% Override the conference-publication header inserted by \iclrfinalcopy.
\lhead{Preprint}
\fi

\begin{abstract}
Uniform discrete flow permits repeated updates at every generation position.
While continued revision supports correction of wrong tokens, it also exposes correct intermediate predictions to later errors.
An experiment on Sudoku puzzles shows that 9.4\% of generated cells are correct at an intermediate step but incorrect in the final output.
We introduce generation order into uniform discrete flow through selective absorption, which fixes chosen predictions while preserving the uniform-flow velocity at active positions.
To prioritize reliable predictions for absorption, we propose Low-Entropy Discrete Flow (\name{}), a training-free sampler that adaptively orders absorption by local entropy.
By decomposing absorption error into joint dependence and conditional prediction terms, we show that, under entropy-error regularity, selecting the lowest-entropy positions under a fixed absorption count minimizes an upper bound on the conditional term.
We further analyze sensitivity of global lookahead, whose worst-case decision-error bound grows with lookahead window under an imperfect denoiser.
Across reasoning benchmarks, \name{} attains 0.845 Nikoli Sudoku solve accuracy, with largest gains on strongly constrained tasks.
On a text-to-image generation benchmark it attains the best overall score among decode-time samplers, and on multimodal understanding it improves over the default sampler on all six benchmarks, at an inference cost comparable to standard flow sampling.
\end{abstract}

% ===== body sections =====
\section{Introduction}
Discrete flow matching allows multimodal generation by transporting a source distribution to a target distribution over discrete states through a continuous time probability velocity~\citep{gat2024discrete}, while uniform discrete flow progressively transforms every token in each time step under a uniform source. 
FUDOKI~\citep{wang2025fudoki} applies this framework to multimodal understanding and generation using a metric-induced probability path and a kinetic-optimal velocity. 
Conditioned on a target token, the velocity moves probability mass toward tokens closer to that estimated target, 
so that generated positions can be updated repeatedly as sampling proceeds. 

\begin{figure}[t]
\centering
\includegraphics[width=\linewidth]{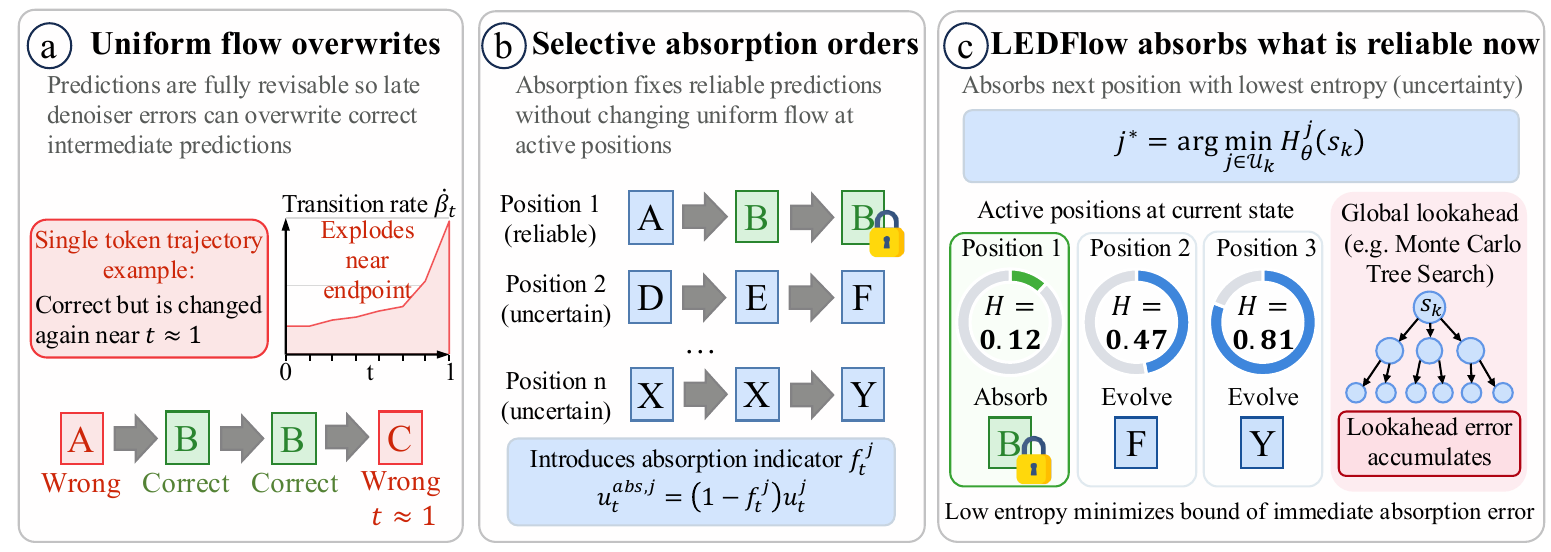}
\caption{\textbf{Entropy-guided absorption in uniform discrete flow.}
\textbf{(a)} Late revisions can overwrite correct predictions.
\textbf{(b)} Selective absorption fixes chosen predictions while other positions continue evolving.
\textbf{(c)} \name{} prioritizes low-entropy positions for absorption; global lookahead aggregates potentially noisy estimates across the remaining positions.}
\label{fig:motivation}
% \vspace{-3mm}
\end{figure}

Continued revision can replace correct intermediate tokens with incorrect predictions, especially near the end of sampling. 
Under the kinetic-optimal formulation, the schedule gradient diverges as $t$ approaches the endpoint. The outgoing transition rate then grows without bound when a token is semantically distant from the estimated target. 
This late denoiser error exposes an otherwise correct token to a large transition rate. 
A Sudoku trajectory diagnostic (Fig.~\ref{fig:overjumpy}) using the kinetic-optimal update shows on 100 puzzles with 64 sampling steps that 9.4\% of generated cells are correct at least once during sampling but incorrect in the final output.
This motivates us to introduce generation order into uniform discrete flow through selective absorption.
We equip each position with a binary absorption indicator that sets its outgoing rates to zero once absorbed, allowing reliable predictions to remain fixed while the other positions continue evolving. 
However, fixing an incorrect token prevents correction, and an arbitrary order does not distinguish reliable predictions from uncertain ones. 
Therefore, the absorption policy must account for this irreversible prediction risk.
%\textcolor{blue}{*} %\textcolor{blue}{* This random absorption may also have a bad influence. More transition to the next paragraph.}

To address this ordering problem, we propose Low-Entropy Discrete Flow (\name{}), a training-free sampler that introduces generation order into uniform discrete flow through entropy-guided
absorption (Fig.~\ref{fig:motivation}).
At each sampling step, \name{} selects the prescribed number of active positions with the lowest predictive entropy and absorbs them with the most likely predicted values. 
The remaining positions continue to evolve under the kinetic-optimal velocity, %\textcolor{blue}{conditioned on the current state,} 
with subsequent predictions conditioned on the values already fixed by absorption. %\textcolor{blue}{*} 
At each step, the sampler recomputes entropy for an adaptive order, allowing the policy to preserve the uniform flow velocity at active positions while prioritizing low-entropy predictions for absorption. 
This reduces the above correct-to-wrong reversions from 9.4\% to 2.6\% of generated cells and the fraction of token changes occurring after $t=0.75$ from 46\% to 15\%.

We then provide a conditional analysis of \name{} by analyzing the error introduced by irreversible absorption. Absorbing several positions independently can discard their dependence, while previously absorbed values affect the context of subsequent predictions. 
We characterize the stepwise absorption error through a Kullback-Leibler (KL) decomposition into joint dependence and conditional prediction terms. 
We show that selecting the lowest-entropy positions, under entropy-error regularity on oracle-reachable states and a fixed absorption count, minimizes an upper bound on the conditional error term.
We provide further analysis on global entropy lookahead~\citep{yang2026informationgain} which evaluates candidate absorptions through predicted successor states. 
With an imperfect denoiser, aggregating future estimates can compound errors, and the resulting worst-case decision-error bound scales with the lookahead window. This motivates an empirical comparison with local predictive entropy.

Experiments evaluate \name{} across tasks with increasing structural dependence.
The gains are largest on constrained reasoning, where \name{} attains 0.845 Nikoli Sudoku solve accuracy and attains the best average rank across the nine reasoning tasks.
On text-to-image generation \name{} attains the best GenEval overall score among decode-time samplers (2.8\% absolute over the native sampler), and on multimodal understanding it improves over the native sampler on all six benchmarks.  
We also examine the behavior motivating the analysis: entropy-guided absorption outperforms other absorption policies in puzzle solve accuracy. As the lookahead window increases, the mean absolute score error grows from 0.17 with a single window to 1.76 under full lookahead, while accuracy peaks at eight windows and then drops to 0.31. Lookahead therefore costs about eight times the inference runtime without bringing a significant reasoning improvement.
Our contributions are:
\begin{itemize}[leftmargin=*]
    \item We analyze rapid terminal transitions and correct-to-wrong revisions in uniform discrete flow, and introduce explicit generation order through selective absorption to preserve chosen predictions~(Sec.~\ref{sec:retention},~\ref{sec:commit}).
    \item We develop \name{}, a training-free entropy-guided absorption policy. Under entropy-error regularity on oracle-reachable states and a fixed absorption count, it minimizes an upper bound on the local conditional absorption error. We also show that a worst-case global lookahead decision-error bound grows with its window~(Sec. \ref{sec:ledflow-method},~\ref{sec:ledflow-theory}).
    \item We evaluate \name{} across reasoning, multimodal understanding, and image generation. Controlled experiments compare absorption orders and measure how lookahead window size affects score error and position selection, while runtime measurements and transfer comparisons against each model's default order assess efficiency and generality~(Sec. \ref{sec:exp}).
\end{itemize}

\section{Preliminaries and Motivation}\label{sec:prelim}
\subsection{Preliminaries}\label{sec:dfm}
We generate a target sequence
$a_1=(a_1^1,\ldots,a_1^M)\in\mathcal S^M$
over a finite vocabulary $\mathcal S$ and $M$ generated positions.
Let $x$ denote the fixed input, set $\mathcal C:=x$, and write $q(a_1\mid\mathcal C)$ for the target distribution, $\mathcal C_t$ the conditioning information available at time $t$, $q$ and $p^q$ the oracle (data) distributions and their conditionals, $p_t$ the prescribed probability path, and $\mathbb Q^\theta$ the learned denoiser posterior. Discrete flow matching transports a source distribution $p_0$ to the target
over $t\in[0,1]$ under a Continuous Time Markov Chain (CTMC)~\citep{campbell2024generative,gat2024discrete,shaul2025flow}. Let $a$ and $z$ denote the current state and a candidate destination state, respectively. For any state $x$, let $x^{-j}$ denote all coordinates except $j$. We use the discrete delta probability mass function $\delta(x,y):=\mathbf{1}\{x=y\}$. A factorized probability velocity is $u_t(z,a)=
\sum_{j=1}^{M}
\delta(z^{-j},a^{-j})\,u_t^j(z^j,a)$, where $u_t^j$ gives the instantaneous transition rate of coordinate $j$.
For a conditional path factorized on coordinates
$p_{t\mid1}(a_t\mid a_1)=\prod_j p^j_{t\mid1}(a_t^j\mid a_1^j)$,
the marginal coordinate velocity follows from posterior averaging: $u_t^j(z^j,a,\mathcal C_t)
=
\mathbb E_{a_1\sim p_{1\mid t}(\cdot\mid a,\mathcal C_t)}
\!\left[
u_t^j(z^j,a^j\mid a_1^j)
\right]$. At generation time, the reverse process $p_{1\mid t}$ is approximated by the learned denoiser
$\mathbb Q^\theta_{1\mid t}(\cdot\mid a_t,\mathcal C_t)$. FUDOKI~\citep{wang2025fudoki} implements the uniform distribution framework in \cite{gat2024discrete} with a metric-induced token path: $p^j_{t\mid1}(a^j\mid a_1^j)
=
\operatorname{softmax}_{a^j\in\mathcal S}
\!\left[-\beta_t d(a^j,a_1^j)\right]
\propto
\exp\!\left[-\beta_t d(a^j,a_1^j)\right]$, and $d$ corresponds to the token distance between embeddings, $\beta_t=c\left(t/(1-t)\right)^{\alpha}$ %with $c=3$, $\alpha=0.9$, 
so that $\beta_0=0$ and $\beta_1\rightarrow\infty$, allowing the source to be uniform over the vocabulary at $t=0$ and the path to concentrate on target token as $t\rightarrow 1$. By denoting $\dot\beta_t$ the derivative of $\beta_{t}$, %=c\alpha\,t^{\alpha-1}(1-t)^{-\alpha-1}$ the derivative of $\beta_{t}$, %it increases without bound as $t\rightarrow1$, so that 
the kinetic optimal velocity is 
\begin{equation}
u_t^j(z^j,a_t\mid a_1^j)
=
p_t(z^j\mid a_1^j)\,\dot\beta_t
\left[
d(a_t^j,a_1^j)-d(z^j,a_1^j)
\right]_+,
\label{eq:kinetic-velocity}
\end{equation}
where $[\cdot]_+=\max\{\cdot,0\}$. Probability mass moves from $a_t^j$ to $z^j$ only when
$d(z^j,a_1^j)<d(a_t^j,a_1^j)$, allowing the flow to progress monotonically toward the target value $a^{j}_{1}$ supplied to the velocity field~\citep{wang2025fudoki}. 
However, at inference time where the true target is unavailable, FUDOKI's Euler solver instead samples a predicted target $\hat a_1^j\sim\mathbb Q^\theta_{1\mid t}(\cdot\mid a_t,\mathcal C_t)$ at every step and measures the velocity (Eq.~\ref{eq:kinetic-velocity}) at that sample, so that every generated position, including those already holding a reliable prediction, is resampled at every step.
Detailed formulations from existing work are provided in App.~\ref{app:dfm-background}.

%==============================================================================
% Section 3: empirical motivation and selective absorption.
%==============================================================================
%Uniform discrete flow allows every generated position to change throughout sampling.
%While this supports correction, it also allows later denoiser errors which replace correct intermediate values.
\subsection{Motivation and Ordered Absorption}
\label{sec:retention}
The schedule gradient $\dot\beta_t$ in Eq.~\ref{eq:kinetic-velocity} diverges as $t\rightarrow 1$, which increases transition exposure at positions that disagree with the predicted target. With denoiser error, this exposure costs correctness.
\begin{proposition}[Terminal transition exposure]\label{cor:terminal-blowup}
At time $t$, consider position $j$ whose current token differs from its target $a^{j}_{t}\neq a^{j}_{1}$. The total outflow rate under Eq.~\ref{eq:kinetic-velocity} is
$\lambda_t^j=\dot\beta_t\,G_t^j$, where
$G_t^j=\sum_{z\in\mathcal S}p_t(z\mid a_1^j)\big[d(a_t^j,a_1^j)-d(z,a_1^j)\big]_+
\ge p_t(a_1^j\mid a_1^j)\,d(a_t^j,a_1^j)>0$.
Since $\dot\beta_t=c\alpha\,t^{\alpha-1}(1-t)^{-\alpha-1}\rightarrow\infty$ while $p_t(a_1^j\mid a_1^j)\rightarrow1$ as
$t\rightarrow1$, the rate diverges, $\lambda^{j}_{t}\rightarrow \infty$, and the corresponding frozen-rate jump probability for a fixed step size $h>0$ satisfies: $1-e^{-h\lambda_t^j}\rightarrow1$.
When the predicted target $\hat a_1^j$ is redrawn, disagreement with it exposes the current token to this increasing rate.
\end{proposition}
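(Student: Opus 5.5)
The plan is a direct computation in four steps, following the order of the claims in the statement.

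\emph{Step 1: the outflow rate.} Write the total outflow rate of coordinate $j$ as $\lambda_t^j=\sum_{z\neq a_t^j}u_t^j(z,a_t\mid a_1^j)$. Substituting Eq.~\ref{eq:kinetic-velocity}, $\dot\beta_t$ factors out, which gives $\lambda_t^j=\dot\beta_t G_t^j$. The term $z=a_t^j$ contributes $[0]_+=0$, so including it in $G_t^j$ changes nothing. This matches the definition of $G_t^j$ in the statement.

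\emph{Step 2: the lower bound on $G_t^j$.} Every summand of $G_t^j$ is nonnegative, so I drop all terms except $z=a_1^j$. Since $d(a_1^j,a_1^j)=0$, that term equals $p_t(a_1^j\mid a_1^j)\,[d(a_t^j,a_1^j)]_+=p_t(a_1^j\mid a_1^j)\,d(a_t^j,a_1^j)$. This is strictly positive for two reasons. First, the softmax path gives $p_t>0$ everywhere for finite $\beta_t$. Second, $d(a_t^j,a_1^j)>0$ because $a_t^j\neq a_1^j$. For the second point I will assume that $d$ separates distinct tokens, i.e.\ $d(a,b)=0$ iff $a=b$. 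This holds for a metric, and for embedding distances with distinct embeddings. I will state this assumption explicitly.

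\emph{Step 3: divergence of the rate.} Differentiating $\beta_t=c\,(t/(1-t))^\alpha$ gives
\[
\dot\beta_t=c\alpha\,\bigl(t/(1-t)\bigr)^{\alpha-1}\cdot(1-t)^{-2}=c\alpha\,t^{\alpha-1}(1-t)^{-\alpha-1},
\]
which tends to $\infty$ as $t\to1$ since $\alpha>0$.

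For the path, I use the minimal distance $d_{\min}:=\min_{z\neq a_1^j}d(z,a_1^j)>0$. This minimum is positive because the vocabulary is finite and $d$ separates tokens. Then
\[
p_t(a_1^j\mid a_1^j)=\frac{1}{1+\sum_{z\neq a_1^j}e^{-\beta_t d(z,a_1^j)}}\ \ge\ \frac{1}{1+|\mathcal S|e^{-\beta_t d_{\min}}}\ \to\ 1 .
\]

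Combining the two limits,
\[
\liminf_{t\to1}G_t^j\ \ge\ d(a_t^j,a_1^j)\ \ge\ \min_{z\neq a_1^j}d(z,a_1^j)>0 .
\]
This liminf bound is uniform over the current token, which matters because $a_t^j$ may vary with $t$. Therefore $\lambda_t^j\ge\dot\beta_t\,p_t(a_1^j\mid a_1^j)\,d_{\min}\to\infty$.

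\emph{Step 4: jump probability and the closing remark.} For fixed $h>0$, the map $x\mapsto 1-e^{-hx}$ is continuous and increasing, and tends to $1$ as $x\to\infty$. Hence $1-e^{-h\lambda_t^j}\to1$.

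The final sentence of the statement is interpretive. At inference the velocity is evaluated at the sampled $\hat a_1^j$ in place of $a_1^j$. Applying Steps 1–3 verbatim with $a_1^j$ replaced by $\hat a_1^j$ shows the following: whenever $a_t^j\neq\hat a_1^j$, the outflow rate is at least $\dot\beta_t\,p_t(\hat a_1^j\mid\hat a_1^j)\,d_{\min}$, which diverges as $t\to1$. This holds even if $a_t^j$ is already the true target.

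\emph{Main obstacle.} There is no deep difficulty here. The delicate point is the uniform-in-$z$ control of the softmax normalization, which is what makes $p_t(a_1^j\mid a_1^j)\to1$ hold. That control rests on the positive minimum distance $d_{\min}$ over the finite vocabulary, which I would state explicitly as the token-separation assumption.
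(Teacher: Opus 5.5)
Your proposal is correct and follows essentially the same route as the paper's proof: factor out $\dot\beta_t$, keep only the $z=a_1^j$ summand to get the lower bound on $G_t^j$, use the metric property and $\beta_t\to\infty$ to get $p_t(a_1^j\mid a_1^j)\to1$, then pass to the frozen-rate jump probability and substitute $\hat a_1^j$ for $a_1^j$. Your explicit $d_{\min}$ bound makes the divergence uniform in the current token. This is a small tightening of the paper's argument, not a different approach.
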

To assess net effect on accuracy between correction and incorrect revision, Fig.~\ref{fig:overjumpy} illustrates the latter by pretraining a flow model with Sudoku puzzles and evaluating on a held-out set of $100$ puzzles with 64 sampling steps:
46\% of token changes occur after $t=0.75$, with 9.4\% correctly generated cells replaced incorrectly at the end. 
This results in a puzzle solve rate of 41.0\%, exposing a tension between continued revision of wrong tokens, and the preservation of correct intermediate predictions. %, which the terminal growth of the transition rate erodes exactly when the remaining predictions should be settling.

Selective absorption addresses this tension by making chosen predictions terminal while the other positions keep evolving under the native flow, but it introduces a new problem: deciding which predictions are reliable enough to be made irreversible.
To prioritize reliable predictions at the current state and reassess remaining positions as the context changes after each absorption, the denoiser’s own uncertainty is a natural signal: a concentrated (diffuse) posterior has little (more) to gain from further revision.
Sec.~\ref{sec:sampler} formalizes selective absorption as an augmentation of uniform discrete flow and turns this principle into an ordering policy driven by local entropy.
App.~\ref{app:retention} provides the proof and scope of the proposition, together with the training and evaluation settings of this diagnostic. %\textcolor{blue}{*}
\section{\name: Low-Entropy Discrete Flow}\label{sec:sampler}
%\section{Motivation and Ordered Absorption}\label{sec:limits}
%\input{sections/3_limitations_old}

%\section{\name: Low-Entropy Discrete Flow}\label{sec:sampler}

%\input{sections/4_sampler_old}
%==============================================================================
% Section 4: LEDFlow sampler followed by theoretical justification.
%==============================================================================
\subsection{Selective absorption as an absorbing-state uniform flow}
\label{sec:uniform-absorption}\label{sec:commit}
We add an inference time indicator function into uniform discrete flow to change the sampler law.
Specifically, each generated position is augmented with an absorption indicator $f_t^j\in\{0,1\}$.
Every position starts active with $f_0^j=0$ and follows the uniform-flow velocity.
Once $f_t^j=1$, its outgoing transition rates are set to zero, hence keeping uniform velocity at active positions unchanged while fixing absorbed positions:
\begin{equation}\label{eq:absorbing-velocity}
u_t^{\theta,\mathrm{abs},j}(z^j,a_t,f_t,\mathcal C_t)
=
(1-f_t^j)\,
u_t^{\theta,j}(z^j,a_t,\mathcal C_t),
\qquad z^j\neq a_t^j .
\end{equation}
An absorbed position is a fixed point of the sampler regardless of what the denoiser later predicts for it, while every active position keeps the kinetic-optimal velocity of Eq.~\ref{eq:kinetic-velocity}, now evaluated at a state that contains the absorbed values.
The augmentation leaves two decisions to the sampler: how many positions to absorb at each step (absorption count), and which positions to absorb (absorption order).
While the absorption schedule can be fixed in advance, the order determines which predictions become irreversible and which values condition the rest of the generations (Sec.~\ref{sec:retention}).
Under this formulation, we introduce \name{}, an absorption-ordering policy guided by local entropy, and analyze its local absorption risk.
App.~\ref{app:absorbing-background} details the augmented process and its training-time counterpart for the puzzle denoisers.

\subsection{The \name{} sampler absorbs by local entropy}
\label{sec:ledflow-method}
\begin{wrapfigure}{r}{0.5\textwidth}
\vspace{-0.9\baselineskip}
\begin{minipage}{\linewidth}
\footnotesize
\refstepcounter{algorithm}\label{alg:ledflow}
\hrule height 0.8pt\vspace{2pt}
\noindent\textbf{Algorithm~\thealgorithm:} \name{} selective absorption over uniform discrete flow
\vspace{2pt}\hrule\vspace{2pt}
\begin{algorithmic}[1]
\State $a_{t_0}\sim p_0,\quad f_{t_0}^j\gets0\ \forall j$
\For{$k=0,\ldots,K-1$}
    \State $(a_{t_k}^{+},f_{t_k}^{+})\gets(a_{t_k},f_{t_k})$
    \State $\mathcal U_k\gets\{j:f_{t_k}^j=0\}$,\ \ $s_k\gets(a_{t_k},f_{t_k},\mathcal C_{t_k})$
    \State Compute $\mathbb Q^\theta_{1\mid t_k}(\cdot\mid s_k)$,\ $\{H_\theta^j(s_k)\}_{j\in\mathcal U_k}$
    \State $b_k\gets\max\{0,|\mathcal U_k|-m_k\}$
    \State $B_k^{\mathrm{LED}}\gets\text{Eq.~\eqref{eq:led-block}}$
    \For{$j\in B_k^{\mathrm{LED}}$}
        \State $a_{t_k}^{j,+}\gets\arg\max_{z\in\mathcal S}
        \mathbb Q^\theta_{1\mid t_k}(a_1^j=z\mid s_k)$
        \State $f_{t_k}^{j,+}\gets1$
    \EndFor
    \State Evolve active $j\notin B_k^{\mathrm{LED}}$ to $t_{k+1}$ by
    Eq.~\eqref{eq:absorbing-velocity}; absorbed $j$ stay fixed
    \State $f_{t_{k+1}}\gets f_{t_k}^{+}$
\EndFor
\State \Return $a_{t_K}$
\end{algorithmic}
\vspace{2pt}\hrule height 0.8pt
\end{minipage}
\vspace{-\baselineskip}
\end{wrapfigure}

\name{} selects active positions with low predictive entropy, absorbs their most likely values, and continues evolving the remaining positions under the uniform-flow velocity (illustrated in Algorithm~\ref{alg:ledflow}).
The absorption indicator in Eq.~\ref{eq:absorbing-velocity} addresses late revisions in Sec.~\ref{sec:retention} by retaining selected predictions and preventing further transitions. Let $s_k=(a_{t_k},f_{t_k},\mathcal C_{t_k})$ be the sampler state before absorption and
$\mathcal U_k:=\{j:f_{t_k}^j=0\}$ the active positions. Write
$\mathbb Q^\theta_{1\mid t_k}(a_1^j\mid s_k)$ for the denoiser marginal at position $j$, with predictive entropy: $H_\theta^j(s_k)
:=
-\sum_{z^j\in\mathcal S}
\mathbb Q^\theta_{1\mid t_k}
(a_1^j=z^j\mid s_k)
\log
\mathbb Q^\theta_{1\mid t_k}
(a_1^j=z^j\mid s_k)$. 
At grid point $t_k$, let
$m_k$ be the target number of active positions and set
$b_k:=\max\{0,|\mathcal U_k|-m_k\}$. The entropy-guided ordering policy selects the $b_k$ active positions with lowest predictive entropy and updates the indicator for every selected position,
\begin{equation}
B_k^{\mathrm{LED}}
\in
\arg\min_{\substack{B\subseteq\mathcal U_k\\|B|=b_k}}
\sum_{j\in B}H_\theta^j(s_k)\Longrightarrow f_{t_k}^{j,+}
=
\begin{cases}
1, & j\in B_k^{\mathrm{LED}},\\
f_{t_k}^j, & j\notin B_k^{\mathrm{LED}},
\end{cases}
\label{eq:led-block}
\end{equation}
\begin{equation}\label{eq:absorption-update}
a_{t_k}^j
\leftarrow
\arg\max_{z^j\in\mathcal S}
\mathbb Q^\theta_{1\mid t_k}(a_1^j=z^j\mid s_k),
\qquad
f_{t_k}^j\leftarrow1,
\quad j\in B_k^{\mathrm{LED}}.
\end{equation}
This policy turns each selected posterior into a fixed token prediction, changing the sampler law while retaining the uniform flow update in Eq.~\ref{eq:absorbing-velocity} for active positions outside $B_k^{\mathrm{LED}}$.
At the next time grid, the posterior and entropy are recomputed from the updated state to reassess subsequent decisions based on absorbed values. %so subsequent decisions condition on the absorbed values.
In the trajectory run (Fig.~\ref{fig:overjumpy}(a--c)), \name{} reduces the fraction of token changes after $t=0.75$ from 46\% to 15\% and raises the puzzle solve rate from 41.0\% to 84.5\%.
%With ordered absorption, the fraction of token changes occurring after $t=0.75$ drops from $46\%$ to $15\%$, \textcolor{red}{while correct-to-wrong rates decrease from $9.4\%$ to $2.6\%$ of generated cells.}
%Puzzle solve rate increases from $34.0\%$ to $82.0\%$, and final cell accuracy rises from $80.6\%$ to $89.9\%$ (Fig.~\ref{fig:overjumpy}).
This ordering policy is training free, as the absorption indicators can be directly plugged into the denoiser of the same checkpoint. Puzzle checkpoints take absorption indicators as an input feature and serve every absorption policy, including no absorption with indicators fixed at zero, which recovers the base uniform discrete flow sampler.
%The ordering policy requires no training. On frozen FUDOKI, $f_t$ is maintained by the sampler and is not a denoiser input; the puzzle denoisers instead receive absorption indicators during training. 
%Setting $f_t^j=0$ throughout recovers the base uniform discrete flow sampler for the same checkpoint.
%This most-likely-value rule turns each selected posterior into a fixed token prediction; Corollary~\ref{cor:risk-bridge-main} in Sec.~\ref{sec:ledflow-theory} bounds its excess absorption risk through the conditional prediction error.
%Using current-state entropy also avoids evaluating hypothetical future configurations; Lemma~\ref{lem:gain-amplification} in Sec.~\ref{sec:ledflow-theory} analyzes how such lookahead can amplify estimation error under an imperfect denoiser.
We provide a conditional analysis of the selection of low local entropy for absorption order in Sec.~\ref{sec:ledflow-theory}.
App.~\ref{app:ledflow-details} provides the schedule, update order, and matched-policy details.
\begin{figure}[t]\centering
\begin{minipage}[c]{0.46\textwidth}\centering
  \includegraphics[width=\linewidth]{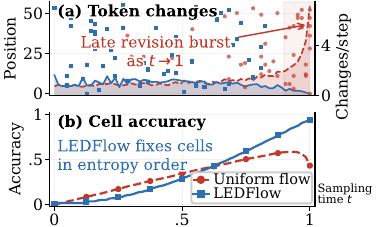}
\end{minipage}\hspace{-0.001\textwidth}%
\begin{minipage}[c]{0.52\textwidth}\centering
  \resizebox{\textwidth}{!}{%
  \begin{tabular}{@{}lccc@{}}
  \toprule
  \multicolumn{4}{c}{\textbf{(c) Trajectory-run statistics}}\\
  \midrule
  Measurement & \makecell{Uniform flow\\(no absorption)} & \makecell{LEDFlow\\(ours)} & \makecell{LEDFlow $-$\\Uniform flow}\\
  \midrule
  \makecell[l]{Token changes per position} & $1.49$ & $1.06$ & $-0.43$\\
  \cdashline{1-4}[2pt/1pt]
  \makecell[l]{Fraction of positions\\changed more than once} & $0.30$ & $0.13$ & $-0.17$\\
  \cdashline{1-4}[2pt/1pt]
  \makecell[l]{Fraction of token changes\\occurring after $t=0.75$} & $0.46$ & $0.15$ & $-0.31$\\
  \cdashline{1-4}[2pt/1pt]
  \makecell[l]{Mean sampling time of\\a token's last change} & $0.67$ & $0.48$ & $-0.19$\\
  \midrule
  %\makecell[l]{Cell-fill accuracy (final)} & $0.806$ & $\mathbf{0.899}$ & $+0.093$\\
  \makecell[l]{Puzzle solve rate} & $0.410$ & $\mathbf{0.845}$ & $+0.435$\\
  \midrule
  \multicolumn{4}{@{}l}{\emph{Trajectory correctness (fraction of cells)}}\\
  \makecell[l]{Correct to wrong} & $0.094$ & $\mathbf{0.026}$ & $-0.068$\\
  \bottomrule
  \end{tabular}}
\end{minipage}
\caption{\textbf{Ordered absorption reduces late revisions and improves puzzle solving.}
\textbf{(a,b)} Token-change timing and cell accuracy on 100 Nikoli puzzles with $K=64$. Without absorption, cell accuracy falls rapidly from 0.58 to 0.43 at $t\approx 0.94$, whereas \name{} rises monotonically.
\textbf{(c)} Statistics from the trajectory run: correct-to-wrong reversions decrease from 0.094 to 0.026 of all generated cells (cells correct at an intermediate step but incorrect at the endpoint).}
\label{fig:overjumpy}
\end{figure}

%==============================================================================

\subsection{Theoretical analysis}
\label{sec:ledflow-theory}
\label{sec:limit}\label{sec:limit-order}\label{sec:relorder}
This section analyzes local entropy selection under an entropy-error assumption.
We first characterize the error of irreversible absorption, then establish the conditional-error upper bound minimized by low-entropy selection, and compare the sensitivity of local and lookahead scores to estimation error.

\paragraph{Error induced by absorption.}
Noting that absorption is irreversible, its cost depends on the denoiser during each absorption process.
At an oracle-reachable sampler state $s_k$, where the absorbed partial assignment has positive probability under the target distribution, let $B_k\subseteq\mathcal U_k$ be the positions absorbed at that step.
Given the target distribution $q$, we write $p^q_{1\mid s_k}(a_1^{B_k})$ for the oracle posterior, with the denoiser marginal defined in Sec.~\ref{sec:ledflow-method}.
We apply KL decomposition~\citep{xu2026scheduling,benhamu2025ebsampler}, to a stochastic reference kernel that samples selected positions independently from denoiser marginals.
%Algorithm~\ref{alg:ledflow} instead absorbs their modes; Corollary~\ref{cor:risk-bridge-main} provides the link to that deterministic rule.

\begin{proposition}[Decomposition of local absorption error]\label{prop:kl-decomp}
Conditioned on a reachable state $s_k$ and an absorption block $B_k$, we write the error $\mathcal L_{\mathrm{abs}}(B_k;s_k)
:=
\KL\!\left(
p^q_{1\mid s_k}(a_1^{B_k})
\,\middle\|\,
\prod_{j\in B_k}
\mathbb Q^\theta_{1\mid t_k}(a_1^j\mid s_k)
\right)$, such that 
\begin{equation}\label{eq:kl-decomp}
\mathcal L_{\mathrm{abs}}(B_k;s_k)
=
\underbrace{\mathcal{TC}_q(a_1^{B_k}\mid s_k)}
_{\mathcal E_{\mathrm{joint}}(B_k;s_k)}
+
\underbrace{
\sum_{j\in B_k}
\KL\!\left(
p^q_{1\mid s_k}(a_1^j)
\,\middle\|\,
\mathbb Q^\theta_{1\mid t_k}(a_1^j\mid s_k)
\right)}
_{\mathcal E_{\mathrm{cond}}(B_k;s_k)},
\end{equation}
where
$\mathcal{TC}_q(a_1^{B_k}\mid s_k)
=\sum_{j\in B_k}H_q(a_1^j\mid s_k)-H_q(a_1^{B_k}\mid s_k)$
is the conditional total correlation of the positions absorbed together.
\end{proposition}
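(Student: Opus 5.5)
The plan is to expand both sides as expectations under the oracle posterior and add and subtract the product of oracle marginals. Write $P(a_1^{B_k}) := p^q_{1\mid s_k}(a_1^{B_k})$ for the joint oracle posterior on the absorbed block, $P_j := p^q_{1\mid s_k}(a_1^j)$ for its marginals, and $Q_j := \mathbb Q^\theta_{1\mid t_k}(\cdot\mid s_k)$ for the denoiser marginals. Because $s_k$ is oracle-reachable, $P$ is a well-defined conditional distribution on $\mathcal S^{B_k}$. The log-ratio then splits pointwise as
\[
\log\frac{P(a_1^{B_k})}{\prod_{j\in B_k}Q_j(a_1^j)}
=\log\frac{P(a_1^{B_k})}{\prod_{j\in B_k}P_j(a_1^j)}
+\sum_{j\in B_k}\log\frac{P_j(a_1^j)}{Q_j(a_1^j)} .
\]

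Taking expectations under $a_1^{B_k}\sim P$ gives $\mathcal L_{\mathrm{abs}}(B_k;s_k)$ on the left. The first term on the right has expectation $\KL(P\,\|\,\prod_j P_j)$. For each summand of the second term, $\log(P_j/Q_j)$ depends only on the coordinate $a_1^j$. Its expectation under $P$ therefore reduces to one under the marginal $P_j$, which yields $\KL(P_j\,\|\,Q_j)$. Summing over $j\in B_k$ gives exactly $\mathcal E_{\mathrm{cond}}(B_k;s_k)$.

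It remains to identify $\KL(P\,\|\,\prod_j P_j)$ with the conditional total correlation. Expanding, $\mathbb E_P[\log P]=-H_q(a_1^{B_k}\mid s_k)$. Using the same marginalization, $\mathbb E_P[-\log P_j(a_1^j)]=H_q(a_1^j\mid s_k)$. Hence this term equals $\sum_{j\in B_k} H_q(a_1^j\mid s_k)-H_q(a_1^{B_k}\mid s_k)=\mathcal{TC}_q(a_1^{B_k}\mid s_k)$.

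The only delicate point is well-definedness and finiteness. The computation needs the convention $0\log 0=0$, and it needs absolute continuity of $P$ with respect to $\prod_j Q_j$ for the right-hand side to be finite. If some $Q_j$ vanishes where $P_j>0$, both sides are $+\infty$ together, since $\mathcal{TC}_q$ is always finite on a finite alphabet. The identity therefore holds in $[0,\infty]$. Support issues never arise in the first term, because $\operatorname{supp}P\subseteq\prod_j\operatorname{supp}P_j$. I expect no real obstacle beyond stating these conventions. When $|B_k|=1$ the joint term vanishes, as expected.
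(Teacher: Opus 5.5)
Your proposal is correct and takes the same route as the paper. You insert the product of oracle marginals into the log-ratio, take the expectation under the oracle block posterior, marginalize each $\log(P_j/Q_j)$ term to get $\KL(P_j\,\|\,Q_j)$, and identify the remaining term with $\mathcal{TC}_q$ via entropies. Your justification that both sides are $+\infty$ exactly when some $Q_j$ vanishes where $P_j>0$ makes explicit the paper's brief remark that the identity holds in the extended-real sense.
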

\begin{corollary}[Conditional error bounds the excess absorption risk]\label{cor:risk-bridge-main}
For a reachable state $s_k$ and a block $B_k$ with $|B_k|=b$, let $\hat z^j$ be the most likely value under $\mathbb Q^\theta_{1\mid t_k}(a_1^j\mid s_k)$, $c_j=p^q_{1\mid s_k}(a_1^j=\hat z^j)$ its oracle probability, and $c_j^\star=\max_z p^q_{1\mid s_k}(a_1^j=z)$ the probability of the oracle's best value. Then
\begin{equation}\label{eq:risk-bridge-main}
\sum_{j\in B_k}(1-c_j)
\;\leq\;
\sum_{j\in B_k}(1-c_j^\star)
+
\sqrt{2b\,\mathcal E_{\mathrm{cond}}(B_k;s_k)}.
\end{equation}
\end{corollary}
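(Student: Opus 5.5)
The argument runs position by position: bound each gap $c_j^\star-c_j$ by a total-variation distance, pass to KL via Pinsker, and aggregate the $b$ terms with Cauchy--Schwarz. Throughout, write $P_j:=p^q_{1\mid s_k}(a_1^j=\cdot)$ and $Q_j:=\mathbb Q^\theta_{1\mid t_k}(a_1^j=\cdot\mid s_k)$.

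\textbf{Per-position gap.} Fix $j\in B_k$ and let $z^\star$ attain $c_j^\star$. Since $\hat z^j$ maximizes $Q_j$, we have $Q_j(z^\star)\le Q_j(\hat z^j)$. Then
\begin{equation*}
c_j^\star-c_j
= \bigl[P_j(z^\star)-Q_j(z^\star)\bigr]+\bigl[Q_j(z^\star)-Q_j(\hat z^j)\bigr]+\bigl[Q_j(\hat z^j)-P_j(\hat z^j)\bigr]
\le \bigl[P_j(z^\star)-Q_j(z^\star)\bigr]+\bigl[Q_j(\hat z^j)-P_j(\hat z^j)\bigr].
\end{equation*}
If $\hat z^j=z^\star$ the gap is $0$. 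Otherwise the two brackets are evaluated at distinct points, so their sum is at most $P_j(A)-Q_j(A)$ plus $Q_j(A')-P_j(A')$ for disjoint singletons $A,A'$. Each such signed set difference is at most $\TV(P_j,Q_j)$. The refined fact we need is that $[P(z^\star)-Q(z^\star)]_+ + [Q(\hat z)-P(\hat z)]_+ \le \sum_z [P(z)-Q(z)]_+ + \sum_z [Q(z)-P(z)]_+$ restricted to these points. This gives $c_j^\star-c_j\le \|P_j-Q_j\|_1 = 2\,\TV(P_j,Q_j)$.

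This is the main obstacle: the $2\,\TV$ constant must be absorbed correctly. Pinsker in the form $\TV\le\sqrt{\KL/2}$ gives $2\,\TV\le\sqrt{2\KL}$, which is exactly the rate matching $\sqrt{2b\,\mathcal E_{\mathrm{cond}}}$ after aggregation. So the per-position bound to aim for is
\begin{equation*}
c_j^\star-c_j\le \|P_j-Q_j\|_1\le \sqrt{2\,\KL(P_j\,\|\,Q_j)}.
\end{equation*}
To confirm this carefully, I would use that the positive parts of $P_j-Q_j$ and of $Q_j-P_j$ each sum to $\TV(P_j,Q_j)$. Each bracket above is therefore at most $\TV$, and their sum is at most $2\,\TV$.

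\textbf{Aggregation.} Rewriting $1-c_j=(1-c_j^\star)+(c_j^\star-c_j)$ and summing over $j\in B_k$,
\begin{equation*}
\sum_{j\in B_k}(1-c_j)\le\sum_{j\in B_k}(1-c_j^\star)+\sum_{j\in B_k}\sqrt{2\,\KL(P_j\|Q_j)}.
\end{equation*}
Cauchy--Schwarz bounds the last sum by $\sqrt{b}\,\sqrt{2\sum_j\KL(P_j\|Q_j)}$. The inner sum $\sum_j\KL(P_j\|Q_j)$ is exactly $\mathcal E_{\mathrm{cond}}(B_k;s_k)$ from Proposition~\ref{prop:kl-decomp}, so the last sum is $\sqrt{2b\,\mathcal E_{\mathrm{cond}}(B_k;s_k)}$, which yields \eqref{eq:risk-bridge-main}.

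Reachability of $s_k$ is used only to guarantee that the oracle conditionals $P_j$ are well defined. If some $\KL(P_j\|Q_j)$ is infinite, the bound holds trivially.
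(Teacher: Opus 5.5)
Your proposal is correct and follows the paper's proof step for step. Both use the same add-and-subtract decomposition with a nonpositive middle bracket, bound each outer bracket by $\TV(P_j,Q_j)$ to get $c_j^\star-c_j\le 2\,\TV(P_j,Q_j)$, apply Pinsker to reach $\sqrt{2\KL(P_j\|Q_j)}$, and aggregate with Cauchy--Schwarz into $\sqrt{2b\,\mathcal E_{\mathrm{cond}}(B_k;s_k)}$. Your detour through the ``refined fact'' on summed positive parts is unnecessary, because your later observation that each bracket is at most $\TV$ already suffices.
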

The left side of Eq.~\ref{eq:risk-bridge-main} is the expected number of incorrect deterministic absorptions in the selected block, while the first term on the right is the unremovable marginal classification risk of the oracle.
The square-root term bounds the excess risk due to denoiser error by Pinsker's inequality and Cauchy-Schwarz.
While larger blocks reduce sequential evaluations at the cost of unoptimized dependence error, the joint term vanishes for $b=1$.
App.~\ref{app:order} gives the derivation of the decomposition and the proof of the corollary.
%==============================================================================

\paragraph{Minimizing the conditional error bound.}

For the active set $\mathcal U_k$ at a reachable sampler state $s_k$, fix an absorption count $b\geq1$ and let
$\mathfrak B_k(b)=\{B\subseteq\mathcal U_k:|B|=b\}$.
The oracle posterior serves as an analytical reference for selecting positions with small model error. If it were available, the conditional-error objective would be:
\begin{equation}
B_k^{\mathrm{oracle}}
\in
\arg\min_{B\in\mathfrak B_k(b)}
\underbrace{
\sum_{j\in B}
\KL\!\left(
p^q_{1\mid s_k}(a_1^j)
\,\middle\|\,
\mathbb Q^\theta_{1\mid t_k}(a_1^j\mid s_k)
\right)
}_{\mathcal E_{\mathrm{cond}}(B;s_k)}.
\label{eq:objective}
\end{equation}

%The oracle posterior is unavailable at inference. To prioritize predictions that are currently reliable, we formulate a reliability condition based on denoiser uncertainty, quantified by predictive entropy. This is motivated by the negative relationship between prediction entropy and reasoning accuracy~\cite{yang2026informationgain}. 

The oracle posterior is unavailable at inference. To prioritize predictions that are currently reliable, we formulate a reliability condition based on denoiser uncertainty, quantified by predictive entropy. This is motivated by the negative relationship between prediction entropy and reasoning accuracy~\cite{yang2026informationgain}. 

%Motivated by the negative relationship between prediction entropy and reasoning accuracy in~\citet{yang2026informationgain} motivates this condition. 
%We therefore assume a \textcolor{red}{reliability envelope} relating its mismatch with the denoiser to predictive entropy.

\begin{definition}[Entropy-error regularity]\label{def:monotone}
Let $H_{\max}=\log|\mathcal S|$. A denoiser is \emph{$\phi$-regular} on the oracle-reachable
sampler states $s_{k}$ if there exists a nondecreasing function
$\phi:[0,H_{\max}]\to\mathbb R_{\geq0}$, with $\phi(0)=0$, such that $\KL\!\left(
p^q_{1\mid s_k}(a_1^j)
\,\middle\|\,
\mathbb Q^\theta_{1\mid t_k}(a_1^j\mid s_k)
\right)
\leq
\phi\!\left(H_\theta^j(s_k)\right)$ for any $s_{k}$ and $j\in\mathcal U_k$.
\end{definition}

Applying Def.~\ref{def:monotone} to the conditional term in
Eq.~\ref{eq:kl-decomp} gives, for any absorption block
$B_k\subseteq\mathcal U_k$, $\mathcal E_{\mathrm{cond}}(B_k;s_k)
\leq
\Phi(B_k;s_k)
:=
\sum_{j\in B_k}\phi\!\left(H_\theta^j(s_k)\right)$.
Because $\phi$ is nondecreasing, the block that minimizes $\Phi$ under a fixed absorption count contains the active positions with lowest predictive entropy, so the square root term in Eq.~\ref{eq:risk-bridge-main} is at most $\sqrt{2b\,\Phi(B_k;s_k)}$, so entropy selection minimizes this surrogate excess-risk bound for a fixed absorption count.
%When one position is absorbed, $B_k=\{j_k\}$, the joint term in Eq.~\ref{eq:kl-decomp} vanishes, so that an entropy-guided ordering policy controls the upper bound of entire local absorption error.

\begin{theorem}[Local surrogate minimization and the oracle limit]\label{thm:order}
Given a $\phi$-regular denoiser:
\emph{(i) Local adaptive absorption.}
For a fixed absorption count $b\geq1$, a block containing $b$ active positions with lowest entropy minimizes
$\Phi(B;s_k)$ over $B\in\mathfrak B_k(b)$. In particular, for single position absorption,
\begin{equation}
j^\star
\in
\arg\min_{j\in\mathcal U_k}H_\theta^j(s_k)
\label{eq:entropy-policy}
\end{equation}
minimizes the upper bound on the conditional error of the next irreversible prediction.
\emph{(ii) Oracle limit.}
As the learned posterior converges to the oracle posterior on reachable states,
$\mathcal E_{\mathrm{cond}}(B_k;s_k)\to0$ for every admissible absorption block and hence
for every absorption policy.
\end{theorem}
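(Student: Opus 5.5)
Part (i) reduces to a rearrangement (exchange) argument on the separable surrogate $\Phi(B;s_k)=\sum_{j\in B}\phi(H_\theta^j(s_k))$.

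Fix the reachable state $s_k$ and the count $b\le|\mathcal U_k|$. Order the active positions so that $H_\theta^{j_1}(s_k)\le\cdots\le H_\theta^{j_{|\mathcal U_k|}}(s_k)$, breaking ties arbitrarily, and let $B^{\mathrm{LED}}=\{j_1,\ldots,j_b\}$. Take any $B\in\mathfrak B_k(b)$ and list its elements by increasing entropy as $i_1,\ldots,i_b$. Since $B^{\mathrm{LED}}$ consists of the $b$ smallest entropies, $H_\theta^{j_r}(s_k)\le H_\theta^{i_r}(s_k)$ for every $r$: the $r$-th smallest entropy of any $b$-subset is at least the $r$-th smallest entropy overall. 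Because $\phi$ is nondecreasing, $\phi(H_\theta^{j_r})\le\phi(H_\theta^{i_r})$ termwise. Summing over $r$ gives $\Phi(B^{\mathrm{LED}};s_k)\le\Phi(B;s_k)$, so $B^{\mathrm{LED}}$ is a minimizer; this also covers any set in the argmin of Eq.~\eqref{eq:led-block}.

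For $b=1$ the statement becomes: $j^\star\in\arg\min_j H_\theta^j$ minimizes $\phi(H_\theta^j)$. Combined with Def.~\ref{def:monotone}, this gives $\mathcal E_{\mathrm{cond}}(\{j\};s_k)\le\phi(H_\theta^j(s_k))$. So $j^\star$ minimizes the upper bound on the conditional error, and by Prop.~\ref{prop:kl-decomp} this is the whole absorption error, since the total-correlation term vanishes for a singleton. I will stress that only the bound is minimized, not $\mathcal E_{\mathrm{cond}}$ itself. Monotonicity of $t\mapsto\sqrt{2bt}$ then transfers the minimization to the excess-risk bound of Cor.~\ref{cor:risk-bridge-main}.

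For (ii), I will make "converges" precise: for every reachable $s_k$ and every $j\in\mathcal U_k$, $\mathbb Q^\theta_{1\mid t_k}(\cdot\mid s_k)\to p^q_{1\mid s_k}(a_1^j)$ as distributions on the finite set $\mathcal S$. On a finite alphabet, $\KL(p\|q)$ is continuous in $q$ at $q=p$ wherever $p$ is supported. So each summand of $\mathcal E_{\mathrm{cond}}$ tends to $\KL(p\|p)=0$, and the finite sum over $B_k\subseteq\mathcal U_k$ does too. This holds uniformly over the finitely many blocks, hence for every policy regardless of which block it picks.

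The one delicate point is support. If the oracle marginal puts mass on a value $z$ where the model assigns $0$ along the sequence, the KL divergence is infinite. I would handle this either by assuming convergence in a sense that implies eventual support domination, or by invoking the fact that softmax outputs are strictly positive. With positive model outputs, the summand $p\log(p/q)$ is continuous at $q=p>0$, and terms with $p=0$ contribute zero.
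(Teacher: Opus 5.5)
Your proposal is correct and follows essentially the paper's proof. For (i), your order-statistic comparison $H_\theta^{j_r}(s_k)\le H_\theta^{i_r}(s_k)$ is a one-shot version of the paper's repeated exchange argument, which swaps some $j\in B\setminus B_{\mathrm{low}}$ for some $i\in B_{\mathrm{low}}\setminus B$ until the blocks coincide; the singleton case and the vanishing total correlation are handled the same way. For (ii), you use the same termwise continuity of the KL summands on the finite alphabet. The support issue you flag needs no extra assumption: for each $z$ with $p^q_{1\mid s_k}(a_1^j=z)>0$, convergence of $\mathbb Q^\theta_{1\mid t_k}(a_1^j=z\mid s_k)$ to that value already forces it to be eventually positive. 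Zero-mass terms contribute nothing, which is exactly how the paper handles it.
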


Part (i) follows from monotone sorting under Def.~\ref{def:monotone}, while part (ii) gives the oracle limit common to all policies.
For $b>1$, the theorem minimizes the conditional-term bound in Eq.~\ref{eq:kl-decomp}.
The guarantee is local, since each absorption changes the sampler state and subsequent errors.
The condition $\phi(0)=0$ excludes confidently wrong deterministic predictions.
Although Def.~\ref{def:monotone} uses predictive entropy, the analysis extends to any local uncertainty score whose conditional error is bounded by a nondecreasing function of that score, including maximum probability and probability margin under suitable regularity conditions.
While it holds for 95.5\% of sampled Sudoku states in Fig.~\ref{fig:overjumpy}, where the oracle posterior is computable, we treat it as idealized.
App.~\ref{app:order} provides the proof and the relaxed offset.
We next compare local greedy decoding with estimation over future states.

\paragraph{Global lookahead can amplify estimation error.}
Global lookahead decoding (represented by Info-Gain~\citep{yang2026informationgain}) evaluates information gain by estimating predicted uncertainty over all remaining active positions.
Let $\mathcal U(s)$ contain $m$ active positions at state $s$, and let
$h_j^q(s)$ and $h_j^\theta(s)$ denote the oracle and model entropies.
Define $R_r(s):=\sum_{j\in\mathcal U(s)}h_j^r(s)$, $r\in\{q,\theta\}$.
For an active candidate position $j\in \mathcal{U}_{k}$ and absorbed value $z^j$, let $s^{j,z^j}$ denote the
hypothetical state after absorption obtained by setting coordinate $j$ to $z^j$. Writing the model and oracle posterior as $P_\theta^j(\cdot\mid s)=\mathbb Q^\theta_{1\mid t}(a_1^j=\cdot\mid s)$, $P_q^j(\cdot\mid s)=p^q_{1\mid s}(a_1^j=\cdot)$, 
the corresponding entropy reductions are $G_j^r(s)
:=
R_r(s)
-
\mathbb E_{z^j\sim P_r^j(\cdot\mid s)}
\left[
R_r(s^{j,z^j})
\right]$, $r\in\{q,\theta\}$.
Assume that at current state $s$ and every hypothetical one-step lookahead state $s'$ with positive model probability,
$|h_j^\theta(s')-h_j^q(s')|\leq\delta$ for every active $j$, and
the total variation $\frac{1}{2}\sum_{z^j\in\mathcal S}\left|P_\theta^j(z^j\mid s)-P_q^j(z^j\mid s)\right|\leq \epsilon$ for every candidate $j$.

\begin{lemma}[Error amplification in global entropy reduction]
\label{lem:gain-amplification}
Let
$\hat j\in\arg\max_jG_j^\theta(s)$,
$j^\star\in\arg\max_jG_j^q(s)$, and
$\hat j_{\mathrm{ent}}\in\arg\min_jh_j^\theta(s)$.
Because $R_r(s)$ is common to every candidate at $s$, it cancels from the comparison between
candidates, and for every pair $j,j'$ the error of the decision-relevant score difference obeys
\[
\begin{aligned}
\left|\bigl(G_j^\theta(s)-G_{j'}^\theta(s)\bigr)-\bigl(G_j^q(s)-G_{j'}^q(s)\bigr)\right|
&\leq
2\eta_m,
\qquad
\eta_m:=(m-1)\bigl(\delta+\epsilon\log|\mathcal S|\bigr),\\
G_{j^\star}^q(s)-G_{\hat j}^q(s)
&\leq2\eta_m,
\qquad
h_{\hat j_{\mathrm{ent}}}^q(s)-\min_j h_j^q(s)
\leq2\delta.
\end{aligned}
\]
\end{lemma}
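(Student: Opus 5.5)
The plan is a direct perturbation argument on the lookahead score. The first step is to remove the common term. For any candidate $j$,
\[
G_j^\theta(s)-G_j^q(s)=\bigl(R_\theta(s)-R_q(s)\bigr)-\Bigl(\mathbb E_{z^j\sim P_\theta^j(\cdot\mid s)}\bigl[R_\theta(s^{j,z^j})\bigr]-\mathbb E_{z^j\sim P_q^j(\cdot\mid s)}\bigl[R_q(s^{j,z^j})\bigr]\Bigr).
\]
The bracket $R_\theta(s)-R_q(s)$ does not depend on $j$. It therefore cancels in the pairwise difference $(G_j^\theta-G_{j'}^\theta)-(G_j^q-G_{j'}^q)$. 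What remains is to bound the expectation term by $\eta_m$ for each single candidate; the triangle inequality over $j$ and $j'$ then gives $2\eta_m$.

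The main structural observation is that absorbing $j$ removes it from the active set. Hence $\mathcal U(s^{j,z^j})=\mathcal U(s)\setminus\{j\}$ has exactly $m-1$ elements, which is the source of the factor $m-1$. I split the expectation term by adding and subtracting $\mathbb E_{P_\theta^j}[R_q(s^{j,z^j})]$:
\[
\mathbb E_{P_\theta^j}\bigl[R_\theta-R_q\bigr](s^{j,z^j})\;+\;\Bigl(\mathbb E_{P_\theta^j}-\mathbb E_{P_q^j}\Bigr)\bigl[R_q(s^{j,z^j})\bigr].
\]

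\emph{Model-error term.} The first expectation only charges states with positive model probability. The entropy assumption applies there, so $|R_\theta(s')-R_q(s')|\le\sum_{i\in\mathcal U(s')}|h_i^\theta(s')-h_i^q(s')|\le(m-1)\delta$.

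\emph{Distribution-shift term.} For the second term, the function $z^j\mapsto R_q(s^{j,z^j})$ takes values in $[0,(m-1)\log|\mathcal S|]$, since each oracle entropy lies in $[0,\log|\mathcal S|]$. For two distributions with total variation at most $\epsilon$ and a function of range at most $L$, the expectations differ by at most $\epsilon L$. This gives $(m-1)\epsilon\log|\mathcal S|$. Because only the range of $R_q$ is used, this bound does not require the $\delta$-assumption on states that have positive oracle but zero model probability.

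Summing the two terms yields $\eta_m$ per candidate, and hence $2\eta_m$ for the pairwise difference.

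For the decision bound, write
\[
G_{j^\star}^q-G_{\hat j}^q=\bigl[(G_{j^\star}^q-G_{\hat j}^q)-(G_{j^\star}^\theta-G_{\hat j}^\theta)\bigr]+(G_{j^\star}^\theta-G_{\hat j}^\theta).
\]
The first bracket is at most $2\eta_m$ by the pairwise bound. The second is $\le0$ because $\hat j$ maximizes $G^\theta$.

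For the local entropy rule, let $j_0\in\arg\min_j h_j^q(s)$. Applying the $\delta$-assumption at the current state $s$ twice, together with the minimality of $\hat j_{\mathrm{ent}}$ under $h^\theta$, gives
\[
h^q_{\hat j_{\mathrm{ent}}}\le h^\theta_{\hat j_{\mathrm{ent}}}+\delta\le h^\theta_{j_0}+\delta\le h^q_{j_0}+2\delta .
\]
The contrast is that this bound is independent of $m$, while the lookahead bound scales with $m-1$.

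I expect the main obstacle to be the bookkeeping in the distribution-shift term. The two expectations are taken over different measures, and the assumptions are stated only on model-positive lookahead states. The add-and-subtract order above is chosen so that the $\delta$-bound is used only under $P_\theta^j$ and the total-variation step only needs the uniform range of $R_q$. I will also check the convention that $s^{j,z^j}$ removes $j$ from the active set; without it the factor would be $m$ rather than $m-1$.
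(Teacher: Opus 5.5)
Your proposal is correct and follows the paper's proof essentially step for step. It uses the same cancellation of $R_r(s)$, the same add-and-subtract of $\mathbb E_{P_\theta^j}[R_q(s^{j,z})]$ (giving a $(m-1)\delta$ model-error term and a $(m-1)\epsilon\log|\mathcal S|$ total-variation term), and the same insert-and-subtract arguments for both decision bounds. The only thing the paper adds is an explicit convention $h^q_{j'}(s^{j,z}):=\log|\mathcal S|$ for model-positive successors outside the oracle support. Your range argument tacitly needs this so that $R_q$ is defined on every state charged by $P_\theta^j$.
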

While the bounds do not imply a direct performance ordering, they clarify how estimation error affects the two objectives.
The worst-case lookahead bound scales with window size $m$ through accumulated error over the remaining $m-1$ positions, and preserves the oracle ranking whenever the candidate margin exceeds $2\eta_m$.
%the deficit of lookahead relative to local entropy shrinks as the denoiser improves, without closing at the capacities we test (\S\ref{sec:exp-sudoku}).
App.~\ref{app:order} details derivation and analysis outside oracle support. %the full derivation, the treatment of lookahead states outside the oracle support, and the breakeven characterization.

\section{Experiments}\label{sec:exp}

% \paragraph{Setup.} 
\begin{figure}[t]
\centering
\includegraphics[width=1\textwidth]{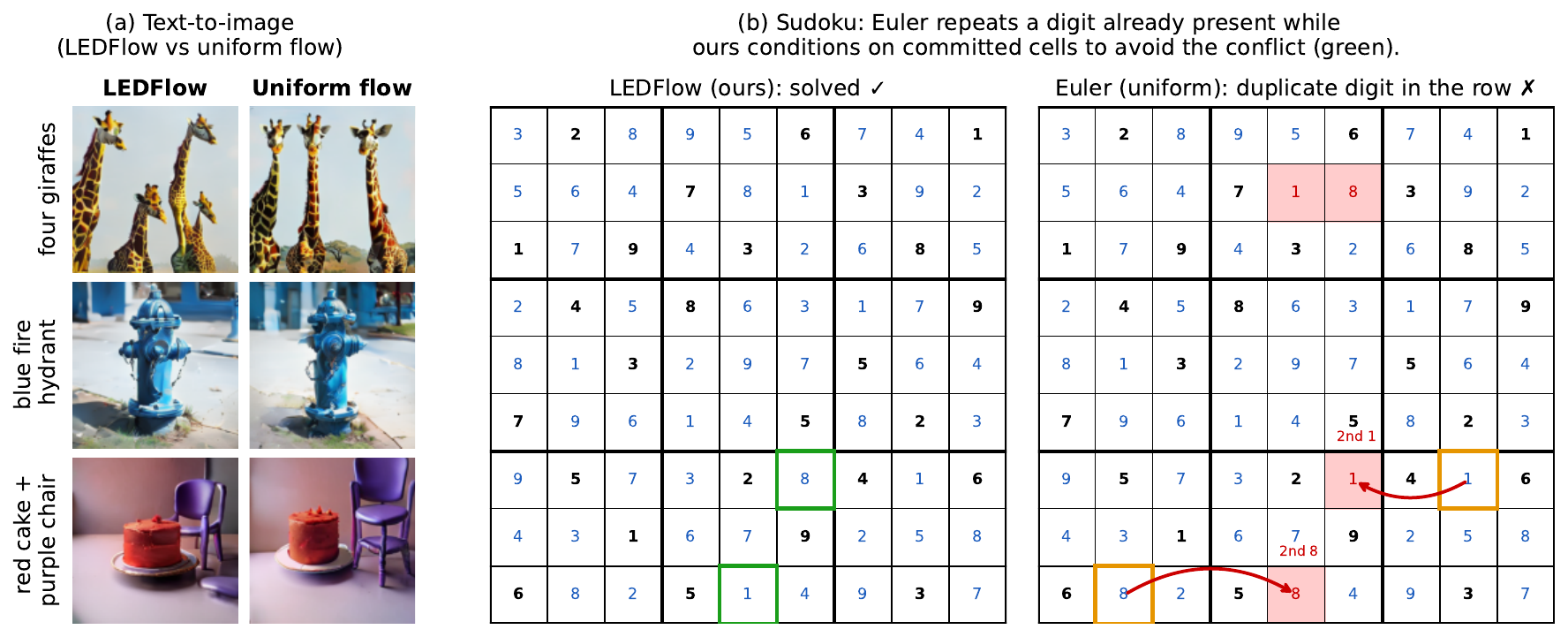}
\vspace{-4mm}
\caption{\textbf{Qualitative comparison across the structure spectrum.} \textbf{(a)} Text-to-image
on FUDOKI shows \name{} produces images with improved alignment with the required descriptions. \textbf{(b)} Sudoku: unordered Euler sampler repeats digits already present (two `1's and `8's in row~7 and~9), violating the row constraint, while \name{} conditions on absorbed cells and avoids the conflict. }
\label{fig:qualitative} \vspace{-4mm}
\end{figure}
\noindent\textbf{Setup.} We evaluate \name{} across tasks with increasing structural dependence: six multimodal-understanding benchmarks via VLMEvalKit~\citep{duan2024vlmevalkit}, GenEval text-to-image generation~\citep{ghosh2023geneval}, and reasoning on MathVista, MathVerse, GSM8K, three Sudoku sets, Latin-square, graph coloring, and molecular infilling~\citep{lu2024mathvista,zhang2024mathverse,gsm8k,seely2025sudoku,wang2025hrm,polykovskiy2020moses}. 
We use FUDOKI~\citep{wang2025fudoki} with matched configurations for understanding, generation, and mathematics, and task-specific uniform-flow denoisers for puzzles, evaluating \name{} in a training-free setting. 
All tasks use $K=64$ sampling steps. 
Evaluation uses full standard splits, 553 GenEval prompts with four images each, 500/100/700 generated/Nikoli/Extreme Sudoku instances, and five seeds.
We compare two baseline groups. The first contains non-absorbing decoders: Euler~\citep{wang2025fudoki}, Time- and Location-corrected samplers~\citep{wan2026corrected}, higher-order solvers (RK, RK2-Trapezoid)~\citep{zhang2026error}, exact guidance~\citep{discreteguidance2026}, and FreeCorrection~\citep{ouyang2026training}.
The second contains uniform flow with different absorption policies: probability margin (Top K-margin) and global lookahead (Info-Gain)~\citep{yang2026informationgain}.
App.~\ref{app:setup} provides experimental details.

\subsection{Empirical Performance}\label{sec:empirical-performance}
\begin{table}[!t]
\centering
\begingroup
\setlength{\abovecaptionskip}{2pt}
\setlength{\belowcaptionskip}{2pt}
\renewcommand{\arraystretch}{0.90}
\vspace{1pt}
\begingroup\centering\small
\caption{\textbf{Multimodal understanding scores.} White/grey/blue rows denote non-absorbing baselines/absorbing baselines/\name{}. \textbf{Bold}, \textit{italic}, and \underline{underlining} mark first, second, and third place. Avg.\ rank is the mean benchmark rank ($\downarrow$). Higher scores are better ($\uparrow$). Entropy-guided selective absorption gives consistently higher but
modest gains. }
\label{tab:mm-understanding}
\resizebox{\columnwidth}{!}{%
\begin{tabular}{lcccccc|c}
\toprule
sampler & POPE$\uparrow$ & MME$\uparrow$ & MMBench$\uparrow$ & GQA$\uparrow$ & MMMU$\uparrow$ & MM-Vet$\uparrow$ & Avg. rank$\downarrow$\\
\midrule
Euler & 86.1 & 1485.4 & \underline{73.9} & \underline{57.6} & 34.3 & 38.0 & 4.33\\
Euler (our replication) & \underline{87.4} & 1494.5 & 73.7 & 57.0 & \textit{36.2} & 38.2 & \underline{3.50}\\
Exact guidance & 86.8 & 1492.7& \textit{74.2} & \textbf{58.2} & \underline{35.4} & \underline{38.6} & \textit{3.00}\\
\midrule
\rowcolor{gray!14} Top K-margin & 83.5 & \underline{1497.4} & 65.0 & 56.9 & 30.0 & \textit{40.0} & 4.50\\
\rowcolor{gray!14} Info-Gain & \textbf{89.3} & \textit{1503.0} & 65.1 & 56.4 & 29.1 & 37.5 & 4.33\\
\cdashline{1-8}[2pt/1pt]
\rowcolor{blue!8}
\name{} (ours)          & \textit{89.0} & \textbf{1504.6} & \textbf{74.6} & \textit{58.0} & \textbf{37.4} & \textbf{40.5} & \textbf{1.33}\\
\bottomrule
\end{tabular}}
\endgroup
\par
\vspace{1pt}
\begingroup\centering\small
\setlength{\tabcolsep}{3.5pt}
\caption{\textbf{Text-to-image generation scores.} Wider spread from \name{} on absorption policies than Table~\ref{tab:mm-understanding} is consistent with text-to-image prompts imposing more structure than text generation.}
\label{tab:mm-geneval}
\resizebox{\columnwidth}{!}{%
\begin{tabular}{lccccccc|c}
\toprule
Sampler & Single$\uparrow$ & Two$\uparrow$ & Count$\uparrow$ & Colors$\uparrow$ & Pos.$\uparrow$ & Attr.$\uparrow$ & Overall$\uparrow$ & Avg. rank$\downarrow$\\
\midrule
Euler & \textit{0.9625} & 0.8384 & 0.4875 & 0.8833 & \textbf{0.7200} & 0.6300 & 0.7536 & 5.42\\
Euler~{(our replication)} & 0.9375 & \underline{0.8636} & 0.5031 & 0.8723 & 0.6800 & 0.6625 & 0.7532 & 5.92\\
Time-corrected & 0.9125 & \textit{0.8687} & 0.5125 & \textbf{0.9149} & 0.6600 & 0.7000 & 0.7614 & 4.83\\
Location-corrected & \textbf{0.9750} & \textit{0.8687} & 0.5000 & \textit{0.9043} & 0.6700 & 0.6400 & 0.7597 & \underline{4.08}\\
RK &  0.9250 & 0.8384 & \textbf{0.5625} & 0.8723 & 0.6500 & \textit{0.7400} & \underline{0.7647} & 6.17\\
RK2-Trapezoid & 0.9500 & 0.7980 & 0.4750 & \textit{0.9043} & 0.6700 & 0.6800 & 0.7462 & 5.67\\
Exact guidance & 0.9400 & 0.8600 & \textit{0.5300} & 0.8900 & \underline{0.7000} & \textbf{0.7700} & \textit{0.7800} & \textit{3.67}\\
\midrule
\rowcolor{gray!14} Top K-margin & 0.9469 & 0.7854 & 0.4188 & 0.8856 & 0.6275 & 0.5800 & 0.7071 & 8.17\\
\rowcolor{gray!14} Info-Gain &  0.9312 & 0.7753 & 0.4906 & 0.8777 & 0.6600 & 0.5550 & 0.7150 & 8.42\\
\cdashline{1-9}[2pt/1pt]
\rowcolor{blue!8} \textbf{\name{}} (ours) & \underline{0.9563} & \textbf{0.8712} & \underline{0.5219} & \underline{0.9016} & \textit{0.7150} & \underline{0.7225} & \textbf{0.7814} & \textbf{2.67}\\
\bottomrule
\end{tabular}}
\endgroup
\par
\vspace{1pt}
\begingroup\centering\small
\caption{\textbf{Constrained reasoning accuracy.} \name{} outperforms in constrained reasoning task. }
\label{tab:sudoku}
\setlength{\tabcolsep}{3.2pt}
\resizebox{\columnwidth}{!}{%
\begin{tabular}{lccccccccc|c}
\toprule
\multirow{2}{*}{Sampler} & \multicolumn{3}{c}{Mathematics$\uparrow$} & \multicolumn{3}{c}{Sudoku$\uparrow$} & \multicolumn{3}{c}{Other puzzles$\uparrow$} & \multirow{2}{*}{Avg. rank$\downarrow$}\\
\cmidrule(lr){2-4}\cmidrule(lr){5-7}\cmidrule(lr){8-10}
& MathVista & MathVerse & GSM8K & Generated & Nikoli & Extreme & Latin & Graph & Molec. & \\
\midrule
Euler & \underline{0.254} & 0.110 & \underline{0.026} & 0.610 & 0.410 & 0.078 & 0.652 & 0.936 & 0.828 & 5.22\\
FreeCorrection & \textbf{0.261} & \underline{0.119} & 0.019 & 0.630 & 0.400 & 0.126 & 0.628 & \textit{0.988} & 0.797 & 4.72\\
Time-corrected & 0.241 & 0.111 & 0.022 & 0.610 & 0.410 & 0.078 & 0.652 & 0.936 & 0.828 & 5.56\\
Location-corrected & 0.246 & \textit{0.120} & \textbf{0.029} & 0.588 & 0.360 & 0.096 & 0.628 & 0.938 & 0.830 & 4.78\\
\midrule
\rowcolor{gray!14} Top K-margin & \textit{0.255} & \textit{0.120} & 0.024 & \textit{0.820} & \textit{0.796} & \textbf{0.271} & \underline{0.948} & \underline{0.951} & \underline{0.887} & \textit{2.67}\\
\rowcolor{gray!14} Info-Gain & 0.227 & \underline{0.119} & 0.019 & \underline{0.804} & \underline{0.780} & \underline{0.231} & \textit{0.990} & \textbf{1.000} & \textbf{0.914} & \underline{3.50}\\
\cdashline{1-11}[2pt/1pt]
\rowcolor{blue!8}
\textbf{\name{}} (ours) & \textit{0.255} & \textbf{0.123} & \textit{0.027} & \textbf{0.865} & \textbf{0.845} & \textit{0.269} & \textbf{0.998} & \textbf{1.000} & \textit{0.906} & \textbf{1.56}\\
\bottomrule
\end{tabular}}
\endgroup
\par
\endgroup \vspace{-5mm}
\end{table}

\noindent\textbf{Multimodal understanding.}\label{sec:exp-mv}
Adding entropy-guided selective absorption gives consistently higher but modest gains over
the replicated Euler baseline across all six benchmarks (Table~\ref{tab:mm-understanding}), although
Info-Gain leads on POPE and exact guidance on GQA. These modest gains
place understanding at the weakly constrained end of the task spectrum, where absorption order matters less. Further analyses, full results, and the uncertainty of these gains are provided in App.~\ref{app:results-understanding}.

\noindent\textbf{Text-to-image generation.}\label{sec:exp-generation}
Table~\ref{tab:mm-geneval} shows that \name{} attains the highest overall score (0.7814) and the best average rank (2.67), ahead of decode-time exact guidance (0.7800). The resulting spread across absorption policies is wider than in multimodal understanding, consistent with text-to-image prompts imposing more structure than the understanding benchmarks above. Fig.~\ref{fig:qualitative} provides qualitative examples. We provide further generation experiments and comparisons in App.~\ref{app:results-generation}.

% -----------------------------------------------------------------------------
\noindent\textbf{Reasoning.}\label{sec:exp-sudoku}
Table~\ref{tab:sudoku} reports constrained reasoning results. \name{} reaches 0.845 on Nikoli against 0.796 for margin and 0.780 for lookahead ordering, and also leads these policies on the generated set while staying competitive in the Extreme set.
Across the nine tasks, \name{} is numerically higher than Info-Gain on seven and remains competitive on molecular infilling (0.906 against 0.914).
%While the bound in Thm.~\ref{thm:order} is exact at $b=1$ with a vanishing $\mathcal E_{\mathrm{joint}}$, 
Under the standard protocol ($K<M$ and $b_k>1$), \name{} improves empirically, while Thm.~\ref{thm:order} provides a local conditional-error guarantee.
App.~\ref{app:results-reasoning} reports detailed comparisons. %these comparisons in full, including robustness checks at larger scale with paired tests, against exact integration of the non-absorbing flow, and with alternative uncertainty scores.
% -----------------------------------------------------------------------------
\begin{figure}[t]
\centering
\includegraphics[width=\textwidth]{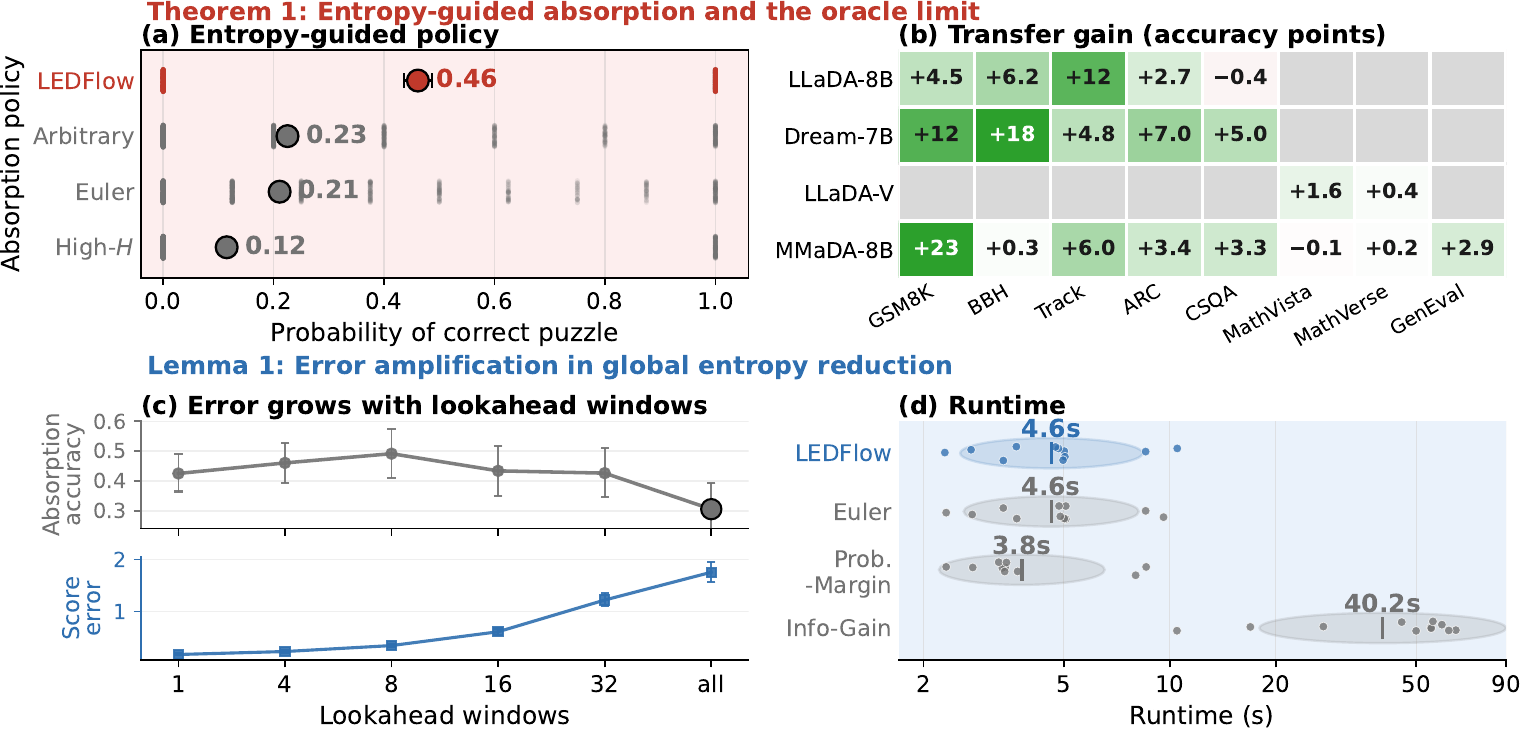}
\vspace{-2mm}
\caption{\textbf{Empirical analysis of entropy-guided absorption and the
score sensitivity of global lookahead.} \emph{Red (Thm.~\ref{thm:order}):} \name{}'s
entropy-guided absorption policy gives the highest puzzle accuracy (a), and its gain over each model's default generation order transfers across
pretrained models and tasks (b).
\emph{Blue
(Lemma~\ref{lem:gain-amplification}):} aggregating more lookahead terms increases score
error and eventually reduces absorption accuracy (c), with increased inference runtime (d).}
\label{fig:theory-aligned-analysis}
\vspace{-3mm}
\end{figure}
\subsection{Empirical Analysis of Absorption and Lookahead}\label{sec:theory-analysis}

\noindent\textbf{Entropy-guided absorption (Thm.~\ref{thm:order}).}
Fig.~\ref{fig:theory-aligned-analysis}(a) examines the ordering policy in Thm.~\ref{thm:order} pooled over Sudoku puzzles in Table~\ref{tab:sudoku}. Accuracy decreases monotonically from \name{} (0.461) to arbitrary (0.225), the non-absorbing Euler flow (0.211), and a controlled absorption of high-entropy tokens (0.115).
Panel~(b) shows that, relative to the default absorption order of four models' samplers, \name{} provides generalizable improvement, with largest gains on constrained reasoning benchmarks. Grey cells mark inapplicable settings.
Detailed results with statistical tests are provided in App.~\ref{app:results-generality}.

\noindent\textbf{Error amplification in global entropy reduction (Lemma~\ref{lem:gain-amplification}).}
Fig.~\ref{fig:theory-aligned-analysis}(c) holds the configuration of \name{} fixed and varies only the lookahead window aggregated by the Info-Gain policy~\citep{yang2026informationgain}.
We observe score sensitivity consistent with the analysis in Lemma~\ref{lem:gain-amplification}: the mean absolute score error grows from 0.17 with a single window to 1.76 under full lookahead, while accuracy peaks at eight aggregated windows and then falls to 0.31, consistent with accumulated estimation error affecting position selection under an imperfect denoiser.
Panel~(d) shows that across the eleven benchmarks, \name{} has an inference time similar to that of the uniform flow sampler, while Info-Gain costs about eight times as much.
App.~\ref{app:results-efficiency} provides a detailed analysis of efficiency and block size.
%==============================================================================

\section{Related work}\label{sec:related}
%==============================================================================
\noindent\textbf{Discrete flow sampling, correction, and absorption.}
Uniform discrete flows iteratively sample each position for repeated revisions~\citep{gat2024discrete,wang2025fudoki}, using Euler or $\tau$-leaping sampler on a time
grid~\citep{campbell2024generative,gat2024discrete,shaul2025flow}.
Corrected samplers reduce errors caused by approximating the CTMC within a sampling interval, while
uniformization removes this discretization error by simulating the process exactly
~\citep{wan2026corrected,zhang2026error,zhao2025informed}. 
Separately, while absorbing states are standard in masked diffusion~\citep{austin2021structured,he2022diffusionbert,shi2024simplified,lou2024discrete}, uniform diffusion admits conditional-absorbing representations~\citep{gourevitch2026uniformdiffusionmodelsrevisited,rutte2025generalized}, motivating work on persistent context~\citep{hayakawa2026truncationcommitmentpersistentcontext,deschenaux2026the}.
Building on these, we augment a
native uniform discrete flow with selective absorption and analyze its local KL error without
changing the uniform flow velocity on positions that remain active.

\noindent\textbf{Reveal order in masked generation.}
Masked generation has long used confidence to select predictions for irreversible decoding
~\citep{chang2022maskgit,kim2025train,ye2024beyond}, while planning and remasking methods learn which
tokens to reveal or revisit~\citep{peng2025path,hong2026improving,xu2026scheduling,wang2026remaskingdiscretediffusionmodels}.
Prioritizing confidence alone can postpone high entropy reasoning forks and reduce solution coverage~\citep{ni2026flexibility,webb2026limitsconfidencediffusion}.
Info-Gain estimates lookahead quality over the remaining positions~\citep{yang2026informationgain}, which depends on denoiser quality.
EB-Sampler~\citep{benhamu2025ebsampler} controls parallel unmasking block error with an entropy bound, confidence thresholds decode positions in parallel~\citep{wu2025fastdllmtrainingfreeaccelerationdiffusion}, while BoE steering~\citep{saini2026tabes} guides decoding with gradients of successor state entropy.
\name{} applies entropy-guided absorption to a revisable uniform flow with analysis of local prediction and dependence costs. 
App.~\ref{app:extended-related} provides broader scope of related work.

\section{Conclusion}\label{sec:concl}
We introduce generation order into uniform discrete flow through selective absorption, which fixes chosen predictions while allowing other positions to continue evolving.
Our training-free sampler, \name{}, absorbs the lowest-entropy positions first to protect reliable predictions from later errors.
We decompose local absorption error into joint dependence and conditional prediction terms, and show that under entropy-error regularity this policy minimizes an upper bound on the conditional term for a fixed absorption count.
We also show how a worst-case decision-error bound for global lookahead grows as it combines estimates across more positions.
Experiments on reasoning, text-to-image generation, and multimodal understanding show the largest gains on strongly constrained tasks, while trajectory diagnostics show fewer correct-to-wrong revisions.
Our analysis characterizes a local error bound under entropy-error regularity, with joint position selection, correction of absorbed predictions, and lookahead with more accurate denoisers offering directions for further study (App.~\ref{app:claim-scope}).

% Required by ICLR 2027; these statements do not count toward the page limit.
\section*{Ethics Statement}
This work introduces an inference-time sampler and involves no human subjects, no user study, and no collection of new data.
All experiments use publicly released pretrained models (FUDOKI, LLaDA, LLaDA-V, MMaDA, Dream) and public benchmarks, used within their stated licenses and intended research use.
The task-specific Sudoku, Latin-square, graph-coloring, and molecular-infilling denoisers are trained from scratch on programmatically generated or public puzzle sets that contain no personal or identifying information.
\name{} changes the decoding order of an existing model and adds no safety filtering, so a model decoded with \name{} inherits the biases, factual errors, and misuse potential of the underlying checkpoint, and the improvements we report on text-to-image and multimodal generation carry the dual-use risks common to generative models.
The molecular-infilling results are benchmark measurements on MOSES and are not a validation of any candidate for synthesis or downstream chemical use.
We report the settings in which \name{} gives little or no benefit, including weakly constrained multimodal understanding and the Sudoku-Extreme set, in Sec.~\ref{sec:exp} and App.~\ref{app:claim-scope}.

\section*{Reproducibility Statement}
The sampler is specified completely in the main text: the absorbing-state augmentation in Eq.~\ref{eq:absorbing-velocity}, the entropy-guided selection rule in Eq.~\ref{eq:led-block} and Eq.~\ref{eq:absorption-update}, and the full procedure in Algorithm~\ref{alg:ledflow}.
\name{} is training free and requires no training or fine-tuning of the denoiser: on a released backbone such as FUDOKI the indicator only gates the velocity, while the puzzle denoisers receive the indicator as an input feature during their single pretraining run, shared by every sampler (App.~\ref{app:absorbing-background}). Any uniform discrete flow checkpoint can therefore reproduce the sampler from these equations alone.
The theoretical claims state their assumptions explicitly in Def.~\ref{def:monotone} and in the reachability conditions of Sec.~\ref{sec:ledflow-theory}. App.~\ref{app:order} gives the proofs of Proposition~\ref{prop:kl-decomp}, Corollary~\ref{cor:risk-bridge-main}, Thm.~\ref{thm:order}, and Lemma~\ref{lem:gain-amplification}, together with the treatment of lookahead states outside the oracle support.
App.~\ref{app:setup} documents the experimental settings: checkpoints and capacities, the absorption schedule and count, the sampler step budget $K=64$, the denoiser training configuration for the puzzle tasks, and the evaluation split and sample count for every benchmark.
App.~\ref{app:sudoku_protocol} defines the two Sudoku protocols used in this paper and states which tables and figures belong to each, so that measurements from different protocols are not compared directly.
App.~\ref{app:ledflow-details} gives the schedule, update order, and matched-policy details needed to reproduce the baseline absorption policies under identical conditions.
Reported comparisons state their uncertainty: App.~\ref{app:results-understanding} gives interval estimates and their approximation limits, and App.~\ref{app:results-reasoning} reports puzzle-level bootstrap confidence intervals and paired McNemar tests at $n{=}1000$.
The supplementary material accompanying this submission releases the code and environment for the self-contained Sudoku and constrained-reasoning experiments: the task-specific denoiser and its training, the reveal-order samplers (\name{}, margin, and Info-Gain), and the trajectory, calibration, and efficiency diagnostics, which reproduce the results in Table~\ref{tab:sudoku}, Fig.~\ref{fig:overjumpy}, and Fig.~\ref{fig:theory-aligned-analysis}(a) without any external checkpoint.
These experiments require no external base model, so \name{} itself is fully specified by the equations above and needs no released code to reproduce.
The complete code for the text-to-image, multimodal-understanding, and 8B-transfer experiments, which build on third-party frameworks and their released checkpoints, will be released with the camera-ready version.

\section*{AI Use Statement}
In this work, we used generative AI tools to implement methods, specifically to write and execute the experimental code, polish the writing, structure the code repository, and modify the scientific figures.
We have not used generative AI tools to develop theoretical models or conceptual frameworks, to design research methodology or experiments, to interpret results, to generate synthetic data, or to perform qualitative data analysis.
We have reviewed all AI-assisted work: the authors checked the experimental code against the method descriptions, verified that the reported numbers are reproduced from the logged runs, inspected every figure against its underlying data, and read all AI-edited text to confirm that it preserves the intended technical meaning.
We take responsibility for the final content of this work, including text, claims or artifacts produced with the aid of generative AI.

\bibliographystyle{iclr2027_conference}
\bibliography{references}

\appendix
% ===== appendix sections =====
\section{Claim scope and limitations}\label{app:claim-scope}
Table~\ref{tab:claims} summarizes the evidence and boundary associated with each main claim. The
core paper studies an entropy-guided selective-absorption policy added at inference time to a
uniform discrete flow.

\begin{table}[H]
\centering
\small
\caption{Evidence and scope of the main claims.}
\label{tab:claims}
\begin{tabular}{@{}p{0.255\textwidth}p{0.29\textwidth}p{0.345\textwidth}@{}}
\toprule
Claim & Supported by & Boundary\\
\midrule
Selective absorption separates the priority decision from the native flow update
& Section~\ref{sec:commit} and Appendix~\ref{app:absorbing-background}
& Absorption deliberately changes the sampler law; it is not claimed to preserve the path law of the unmodified uniform flow.\\
The local absorption error separates into joint factorization and conditional prediction terms
& Proposition~\ref{prop:kl-decomp} and Appendix~\ref{app:tc}
& The equality is local to a fixed state and selected block; errors across the complete sampling trajectory also depend on subsequent states.\\
Lowest predictive entropy minimizes the conditional-error upper bound
& Theorem~\ref{thm:order} and Appendix~\ref{app:order}
& The result assumes entropy-error regularity on oracle-reachable states; it optimizes a local surrogate and does not cover trajectories after support is lost.\\
Global entropy reduction accumulates estimation error with lookahead window
& Lemma~\ref{lem:gain-amplification} and Appendix~\ref{app:results-efficiency}
& This is an upper-bound and controlled fixed-state result; it does not imply that lookahead fails for every denoiser or state.\\
Entropy-guided absorption transfers across tasks and pretrained models
& Section~\ref{sec:theory-analysis} and Appendix~\ref{app:results-generality}
& Transfer replaces each model's default generation order through its own sampler; several differences are not significant. Sudoku runs share a checkpoint within each capacity; all rows of Table~\ref{tab:sudoku} share one matched cohort, while separate diagnostics retain their stated protocols.\\
\bottomrule
\end{tabular}
\end{table}

\subsection{Limitations and intended scope}
\paragraph{Local theoretical result.}
Theorem~\ref{thm:order} bounds the error of the current absorption decision under
entropy-error regularity. Absorbing a position changes the state encountered later, so the theorem
does not assert that a locally optimal block minimizes the final sequence loss for every model or
task. The oracle limit concerns conditional KL at a fixed oracle-reachable state. It does not guarantee recovery from earlier incorrect absorptions or eliminate joint dependence and oracle classification risk.

\paragraph{Irreversible absorption.}
Once selected, a position cannot be revised. This makes the contribution of the absorption policy
measurable and preserves the native uniform-flow update on active positions, but it can propagate an
early incorrect prediction. The method is consequently best suited to settings in which predictive
entropy is informative. Combining selective absorption with principled revision is an open direction.

\paragraph{Lookahead comparison.}
Lemma~\ref{lem:gain-amplification} identifies how errors in counterfactual predictions accumulate
when more entropy terms are aggregated. The bound does not state that Info-Gain must underperform:
lookahead can remain useful when its estimates are accurate or its action margins exceed their
error. Our fixed-state window experiment tests when the accumulated error becomes large enough to
change the selected position.

\paragraph{Baseline coverage.}
Our decode-time baselines are those with a published implementation or a fully specified algorithm
that runs on a uniform discrete flow without modification. Two related samplers fall outside that
set. EB-Sampler~\citep{benhamu2025ebsampler} controls the block size rather than the ordering, so it is complementary to \name{} rather than an alternative to it. We report a reimplementation on the puzzle tasks in Appendix~\ref{app:eb-sampler}, labelled as such, rather than a row in the cross-task tables. BoE steering~\citep{saini2026tabes}
requires a mask embedding to anchor its first-order expansion, which a uniform flow does not provide.
We therefore do not report a number for it rather than substitute a surrogate of our own
construction. Any future comparison against either method should be labelled as a reimplementation,
compute matched by function evaluations, and reported at the best setting found in a disclosed
hyperparameter sweep.

\paragraph{Empirical coverage.}
The experiments span multimodal understanding, generation, and structured reasoning, but the
largest improvements occur when outputs contain strong cross-position constraints. Broader
nonuniform flow backbones and tasks permitting revision would further test the scope of selective
absorption.

\paragraph{Training and decoding regimes.}
\name{} is a training-free sampler in both settings: no sampler, including \name{}, trains or tunes any model. The absorption indicator is maintained by the sampler and gates the velocity in Eq.~\ref{eq:absorbing-velocity}. On FUDOKI the released weights remain frozen and the indicator is not a denoiser input, so the model sees only the current token configuration. For the puzzle tasks, the task-specific denoiser is pretrained once with the standard cross-entropy objective and additionally receives the indicator as an input feature, so fixed positions are visible to it; this single checkpoint is shared by all absorbing and non-absorbing samplers (App.~\ref{app:absorbing-background}). That pretraining does not guarantee calibration after model-generated absorptions. Both non-absorbing and absorbing Sudoku baselines use coordinatewise kinetic-optimal updates, with a cosine absorption schedule for absorbing baselines. The terminal-rate proposition concerns the flow velocity, whereas the local absorption analysis concerns the selected absorption block. The ordering diagnostic and the singleton-absorption studies are separate ablations.

\paragraph{Remaining empirical attribution.}
The current experiments do not isolate selective absorption against terminal-rate clipping, early stopping, or suppressed target resampling. The per-step profile in Appendix~\ref{app:results-efficiency} attributes 3.6\% of a step to the ordering score, so reported timing differences among confidence scores reflect implementation rather than the score itself. That profile is relative and measured on CPU. The block-size ablation varies the number of positions absorbed per step and does not cover every absorption schedule.
          % A: claim scope and limitations
\section{Details for Section~\ref{sec:prelim}: Problem Formulation}\label{app:section2}
%==============================================================================
This appendix expands the two subsections of Section~\ref{sec:prelim}: Appendix~\ref{app:dfm-background} gives the discrete-flow background behind Section~\ref{sec:dfm}, and Appendix~\ref{app:retention} proves the terminal-exposure proposition and documents the Sudoku diagnostic of Section~\ref{sec:retention}.

\subsection{Preliminaries: discrete flow matching background}\label{app:dfm-background}
\begin{definition}[Conditional CTMC]\label{def:ctmc}
Given a rate matrix $R_{t}(y,z)_{y,z\in\Sset^{M}}$ where $R_{t}(y,z)\geq 0$ for $z\neq y$ and $\sum_{z}R_{t}(y,z)=0$, the process $\{a_{t}\}$ is a CTMC with rate $R_{t}$, if for $h>0$, $\mathbb{P}(a_{t+h}=z|a_{t}=y)=\delta_{y}(z)+R_{t}(y,z)h+o(h)$ and its marginal density $p_{t}$ at time $t$ obeys the Kolmogorov forward equation $\dot{p}_{t}(y)=\sum_{z\neq y}(p_{t}(z)R_{t}(z,y)-p_{t}(y)R_{t}(y,z))$.
\end{definition}

We expand the probability-path and velocity formulation summarized in
\Cref{sec:dfm}, following \citet{gat2024discrete}, and then give the uniform metric-induced
specialization used by FUDOKI~\citep{wang2025fudoki}.

\paragraph{Conditional and marginal probability paths.}
Let $\pi(a_0,a_1\mid\mathcal C)$ be a coupling whose marginals are the source
$p_0(a_0\mid\mathcal C)$ and target $q(a_1\mid\mathcal C)$. Discrete flow matching specifies a
conditional path between each coupled pair and marginalizes the pair:
\begin{align}
p_t(a\mid\mathcal C)
&=\sum_{a_0,a_1}p_t(a\mid a_0,a_1,\mathcal C)\,
  \pi(a_0,a_1\mid\mathcal C),\label{eq:dfm-marginal-path}\\
p_t(a\mid a_0,a_1,\mathcal C)
&=\prod_{j=1}^{M}p_t^j(a^j\mid a_0,a_1,\mathcal C).\label{eq:dfm-conditional-path}
\end{align}
The boundary conditions
$p_0^j(\cdot\mid a_0,a_1)=\delta_{a_0^j}$ and
$p_1^j(\cdot\mid a_0,a_1)=\delta_{a_1^j}$ ensure that the marginal path begins at $p_0$ and ends
at $q$. When the source is fixed or integrated into the coordinate path, this reduces to the
target-conditioned notation $p_{t\mid1}(a\mid a_1)$ used in the main text.

\paragraph{Probability velocities and continuity.}
For a current state $a$, the coordinate velocity $u_t^j(v,a)$ defines the infinitesimal update
\begin{equation}\label{eq:dfm-local-update}
\mathbb P(A_{t+h}^j=v\mid A_t=a)
=\delta_{a^j}(v)+h\,u_t^j(v,a)+o(h).
\end{equation}
It is a valid CTMC rate when
\begin{equation}\label{eq:dfm-rate-condition}
\sum_{v\in\mathcal S}u_t^j(v,a)=0,
\qquad
u_t^j(v,a)\geq0\quad\text{for }v\neq a^j.
\end{equation}
The full velocity permits one-coordinate transitions,
$u_t(z,a)=\sum_j\delta(z^{-j},a^{-j})u_t^j(z^j,a)$, with the first argument denoting the
destination. Hence the path evolves according to the Kolmogorov forward equation
\begin{equation}\label{eq:dfm-forward}
\dot p_t(a\mid\mathcal C)
=\sum_{z\in\mathcal S^M}p_t(z\mid\mathcal C)u_t(a,z).
\end{equation}
Equivalently, define the probability flux from $z$ to $a$ by
$J_t(a,z)=p_t(z)u_t(a,z)$. Then
$\dot p_t(a)+\operatorname{div}_a J_t=0$, where
\begin{equation}\label{eq:dfm-divergence}
\operatorname{div}_a J_t
=\sum_{z\neq a}\bigl[J_t(z,a)-J_t(a,z)\bigr].
\end{equation}
Thus a velocity generates $p_t$ precisely when its flux satisfies this continuity equation and the
rate conditions above.

\paragraph{Conditional velocities and posterior marginalization.}
Suppose $u_t^j(v,a^j\mid a_0,a_1,\mathcal C)$ generates the coordinate path in
\eqref{eq:dfm-conditional-path}. The corresponding marginal velocity is obtained by averaging
over the target pair conditioned on the current state:
\begin{align}
u_t^j(v,a,\mathcal C)
&=\mathbb E_{(a_0,a_1)\sim p_t(\cdot,\cdot\mid a,\mathcal C)}
  \!\left[u_t^j(v,a^j\mid a_0,a_1,\mathcal C)\right],\label{eq:dfm-posterior-average}\\
p_t(a_0,a_1\mid a,\mathcal C)
&=\frac{p_t(a\mid a_0,a_1,\mathcal C)\pi(a_0,a_1\mid\mathcal C)}
        {p_t(a\mid\mathcal C)}.\label{eq:dfm-pair-posterior}
\end{align}
For a target-conditioned path, only $p_{1\mid t}(a_1\mid a,\mathcal C)$ remains unknown.
It is approximated by the denoiser $\mathbb Q_{1\mid t}^{\theta}$, trained with the coordinate
cross-entropy objective
\begin{equation}\label{eq:dfm-denoiser-objective}
\mathcal L_{\mathrm{den}}(\theta)
=-\mathbb E_{t,a_0,a_1,a_t}
\left[\sum_{j=1}^{M}\log
\mathbb Q_{1\mid t}^{\theta}(a_1^j\mid a_t,\mathcal C_t)\right].
\end{equation}
Substituting this posterior into~\eqref{eq:dfm-posterior-average} yields the learned velocity used
at generation time.

\paragraph{Uniform metric-induced flow in FUDOKI.}
FUDOKI uses a distance $d:\mathcal S\times\mathcal S\to\mathbb R_{\geq0}$ satisfying
$d(v,y)=0$ if and only if $v=y$, and defines
\begin{equation}\label{eq:fudoki-metric-path}
p_{t\mid1}^j(v\mid a_1^j)
=\frac{\exp[-\beta_t d(v,a_1^j)]}
       {\sum_{w\in\mathcal S}\exp[-\beta_t d(w,a_1^j)]},
\qquad \beta_0=0,\quad \beta_t\xrightarrow[t\to1]{}\infty.
\end{equation}
The initial coordinate distribution is therefore uniform over $\mathcal S$, while the target
concentrates on $a_1^j$. To select a velocity for this prescribed path, write the conditional flux
from $r$ to $v$ as $J_t^j(v,r\mid a_1^j)$ and minimize its kinetic energy subject to continuity,
nonnegative off-diagonal flux, and the path boundaries:
\begin{equation}\label{eq:fudoki-kinetic-objective}
\min_{J_t^j}\int_0^1\sum_{v\neq r}
\omega_t(v,r)\frac{J_t^j(v,r\mid a_1^j)^2}
{p_{t\mid1}^j(r\mid a_1^j)}\,\mathrm dt,
\qquad \omega_t(v,r)>0.
\end{equation}
For the weighting $\omega_t(v,r)=1/p_{t\mid1}^j(v\mid a_1^j)$, the optimal flux is
\begin{equation}\label{eq:fudoki-optimal-flux}
J_t^{j,\star}(v,r\mid a_1^j)
=\left[p_{t\mid1}^j(r\mid a_1^j)\,\dot p_{t\mid1}^j(v\mid a_1^j)
-\dot p_{t\mid1}^j(r\mid a_1^j)\,p_{t\mid1}^j(v\mid a_1^j)\right]_+.
\end{equation}
Dividing by the probability at the current token $r=a^j$ gives the off-diagonal conditional
velocity
\begin{equation}\label{eq:fudoki-conditional-velocity}
u_t^j(v,a^j\mid a_1^j)
=p_{t\mid1}^j(v\mid a_1^j)\dot\beta_t
\left[d(a^j,a_1^j)-d(v,a_1^j)\right]_+,
\qquad v\neq a^j,
\end{equation}
with the diagonal fixed by~\eqref{eq:dfm-rate-condition}. Probability mass therefore moves only
to tokens closer to the target under $d$. Finally, FUDOKI replaces the target posterior in
\eqref{eq:dfm-posterior-average} by $\mathbb Q_{1\mid t}^{\theta}$:
\begin{equation}\label{eq:fudoki-learned-velocity}
u_t^{\theta,j}(v,a,\mathcal C_t)
=\mathbb E_{a_1\sim\mathbb Q_{1\mid t}^{\theta}(\cdot\mid a,\mathcal C_t)}
\!\left[u_t^j(v,a^j\mid a_1^j)\right].
\end{equation}
Applying~\eqref{eq:dfm-local-update} with this velocity gives the native FUDOKI sampler. Because
the uniform path has full vocabulary support and introduces no terminal state, a coordinate may
change repeatedly before $t=1$. The selective absorbing state in \Cref{sec:commit} is an additional
inference mechanism rather than part of this standard flow construction.

\subsection{Motivation and ordered absorption: repeated transitions and the absorption decision}\label{app:retention}
This appendix formalizes the transition behavior summarized in Section~\ref{sec:retention} and
derives a decision threshold for absorption.

\paragraph{Setting.}
We work directly with the kinetic-optimal velocity of Eq.~\ref{eq:kinetic-velocity}. Conditioned on
the true target $a_1^j$ it reads
\begin{equation}
u_t^j(z^j,a_t\mid a_1^j)
=
p_t(z^j\mid a_1^j)\,\dot\beta_t
\left[
d(a_t^j,a_1^j)-d(z^j,a_1^j)
\right]_+,
\qquad
p_t(z^j\mid a_1^j)=\frac{e^{-\beta_t d(z^j,a_1^j)}}{Z_t(a_1^j)},
\label{eq:ret-kin}
\end{equation}
with $Z_t(a_1^j)=\sum_{z\in\mathcal S}e^{-\beta_t d(z,a_1^j)}$. At inference the solver substitutes
the predicted target $\hat a_1^j\sim\mathbb Q^\theta_{1\mid t}(\cdot\mid a_t,\mathcal C_t)$ for
$a_1^j$ throughout. We assume only that $d$ is a metric on $\mathcal S$, so that $d(z,z')>0$ whenever
$z\neq z'$. No further structure is needed, and in particular the argument does not require $d$ to be
the discrete metric. This matters because FUDOKI computes $d$ from pretrained token embeddings rather
than from token identity. We also use $\beta_t=c(t/(1-t))^{\alpha}$ with $c=3,\alpha=0.9$, whence
\begin{equation}
\dot\beta_t
=
c\,\alpha\,t^{\alpha-1}(1-t)^{-\alpha-1}
\xrightarrow[t\to1]{}\infty .
\label{eq:ret-betadot}
\end{equation}
Throughout, $a_t^j$ is the token currently held at position $j$ and $a_1^j$ is the true target there.

\begin{proposition}[Target-conditioned monotonicity]\label{prop:retention}
Let
\begin{equation}
\lambda_t^j
:=
\sum_{z^j\neq a_t^j}u_t^j(z^j,a_t\mid a_1^j),
\qquad
\hat\lambda_t^j
:=
\sum_{z^j\neq a_t^j}u_t^j(z^j,a_t\mid\hat a_1^j)
\label{eq:total-rate}
\end{equation}
be the total transition rates obtained by conditioning the kinetic-optimal velocity on the oracle and
predicted targets, respectively. If $a_t^j=a_1^j$, then $\lambda_t^j=0$. For
$\hat a_1^j\sim\mathbb Q^\theta_{1\mid t}(\cdot\mid a_t,\mathcal C_t)$, the expected rate at which a
correct token is replaced satisfies
\begin{equation}
\mathbb E[\hat\lambda_t^j]
\geq
\dot\beta_t\,
\mathbb E\!\left[
\mathbf 1\{\hat a_1^j\neq a_1^j\}
p_t(\hat a_1^j\mid\hat a_1^j)
d(a_1^j,\hat a_1^j)
\right]
\geq
\dot\beta_t\,\varepsilon_t^j p_{\min,t}d_{\min},
\label{eq:replacement-rate}
\end{equation}
where
$\varepsilon_t^j=1-\mathbb Q^\theta_{1\mid t}(a_1^j\mid a_t,\mathcal C_t)$,
$p_{\min,t}=\min_{z\in\mathcal S}p_t(z\mid z)$, and
$d_{\min}=\min_{z\neq a_1^j}d(z,a_1^j)$. Hence the lower bound is positive whenever all three
quantities are positive.
\end{proposition}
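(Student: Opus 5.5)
The proof is a direct computation from Eq.~\ref{eq:ret-kin}, which I will carry out in two parts.

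\emph{Part 1: a correct token has zero oracle rate.} Suppose $a_t^j=a_1^j$. Then $d(a_t^j,a_1^j)=0$. Every $z^j\neq a_t^j$ has $d(z^j,a_1^j)>0$, because $d$ is a metric. Hence each bracket $[d(a_t^j,a_1^j)-d(z^j,a_1^j)]_+=[-d(z^j,a_1^j)]_+=0$, every off-diagonal term vanishes, and $\lambda_t^j=0$.

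\emph{Part 2: lower bound on the predicted rate.} I still take $a_t^j=a_1^j$, and condition on the sampled target $\hat a_1^j$.
\begin{itemize}
\item If $\hat a_1^j=a_1^j$, then $\hat\lambda_t^j=0$ by Part 1.
\item If $\hat a_1^j\neq a_1^j$, all summands of $\hat\lambda_t^j$ are nonnegative, so I keep only the destination $z^j=\hat a_1^j$. It is admissible since $\hat a_1^j\neq a_t^j$. Because $d(\hat a_1^j,\hat a_1^j)=0$, the retained summand is $p_t(\hat a_1^j\mid\hat a_1^j)\,\dot\beta_t\,d(a_1^j,\hat a_1^j)$.
\end{itemize}
Both cases give the pointwise bound
\[
\hat\lambda_t^j\geq \dot\beta_t\,\mathbf 1\{\hat a_1^j\neq a_1^j\}\,p_t(\hat a_1^j\mid\hat a_1^j)\,d(a_1^j,\hat a_1^j).
\]
Taking the expectation over $\hat a_1^j\sim\mathbb Q^\theta_{1\mid t}(\cdot\mid a_t,\mathcal C_t)$, with $\dot\beta_t$ deterministic, gives the first inequality in Eq.~\ref{eq:replacement-rate}.

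For the second inequality, I bound two factors inside the expectation by their minima:
\begin{itemize}
\item $p_t(\hat a_1^j\mid\hat a_1^j)\geq p_{\min,t}$;
\item on the event $\hat a_1^j\neq a_1^j$, $d(a_1^j,\hat a_1^j)=d(\hat a_1^j,a_1^j)\geq d_{\min}$, using symmetry of $d$.
\end{itemize}
What remains is $\dot\beta_t\,p_{\min,t}\,d_{\min}\,\mathbb P(\hat a_1^j\neq a_1^j)$, and $\mathbb P(\hat a_1^j\neq a_1^j)=1-\mathbb Q^\theta_{1\mid t}(a_1^j\mid a_t,\mathcal C_t)=\varepsilon_t^j$. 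The positivity remark follows because a product of positive factors is positive. Note that $\dot\beta_t>0$ for $t\in(0,1)$, and $d_{\min}>0$ holds automatically since $d$ is a metric on a finite set.

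The only step needing care is keeping a single summand when $\hat a_1^j\neq a_1^j$. Discarding the others is valid only because the kinetic-optimal rates are nonnegative off the diagonal. The choice $z^j=\hat a_1^j$ is what makes $d(z^j,\hat a_1^j)$ vanish, so the bracket reduces exactly to $d(a_1^j,\hat a_1^j)$. No structure of $d$ beyond the metric axioms is used.
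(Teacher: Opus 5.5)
Your proposal is correct and follows the paper's proof essentially step for step. The steps match: part (i) via the vanishing ReLU bracket; keeping only the $z^j=\hat a_1^j$ summand, justified by nonnegativity of the off-diagonal rates; taking the expectation; and bounding $p_t(\hat a_1^j\mid\hat a_1^j)$ and $d(a_1^j,\hat a_1^j)$ by their minima on the event $\{\hat a_1^j\neq a_1^j\}$, whose probability is $\varepsilon_t^j$. You also state the symmetry of $d$ and the standing assumption $a_t^j=a_1^j$ explicitly, which the paper uses only implicitly.
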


\paragraph{Part (i): the true target leaves a correct token fixed.}
Suppose the velocity is conditioned on the true target and the position already holds it, so that
$a_t^j=a_1^j$. For any $z^j\neq a_t^j$,
\[
u_t^j(z^j,a_t\mid a_1^j)
=
p_t(z^j\mid a_1^j)\,\dot\beta_t
\left[d(a_1^j,a_1^j)-d(z^j,a_1^j)\right]_+
=
p_t(z^j\mid a_1^j)\,\dot\beta_t
\left[-d(z^j,a_1^j)\right]_+
=0,
\]
because $d(a_1^j,a_1^j)=0$ and $d(z^j,a_1^j)>0$, so the bracket is negative and the ReLU annihilates
it. Summing over destinations gives $\lambda_t^j=0$: conditioned on the true target, a position that
has reached it never leaves. Retention at the true target is a property of the exact velocity itself,
not an addition to it.

\paragraph{Toward part (ii): a single transition under a wrong predicted target.}
At inference the velocity is conditioned on $\hat a_1^j$ rather than $a_1^j$. Suppose the position is
correct, $a_t^j=a_1^j$, but the prediction misses, $\hat a_1^j\neq a_1^j$. Taking the destination
$z^j=\hat a_1^j$,
\[
u_t^j(\hat a_1^j,a_t\mid\hat a_1^j)
=
p_t(\hat a_1^j\mid\hat a_1^j)\,\dot\beta_t
\left[d(a_1^j,\hat a_1^j)-d(\hat a_1^j,\hat a_1^j)\right]_+
=
p_t(\hat a_1^j\mid\hat a_1^j)\,\dot\beta_t\,d(a_1^j,\hat a_1^j)
>0 ,
\]
since $p_t(\hat a_1^j\mid\hat a_1^j)=1/Z_t(\hat a_1^j)>0$ and $d(a_1^j,\hat a_1^j)>0$. The transition
is not merely permitted but is the one the velocity most favours, because $\hat a_1^j$ is the closest
state to the value the velocity is conditioned on. The monotonicity of Eq.~\ref{eq:ret-kin} is thus
relative to $\hat a_1^j$ and carries no implication for $d(\cdot,a_1^j)$: a step that moves closer to
the prediction can move away from the truth.

\paragraph{Part (ii): the expected total conditional transition rate.}
Every destination contributes a nonnegative rate, so retaining only the summand $z^j=\hat a_1^j$
computed above lower bounds $\hat\lambda_t^j$. The term vanishes when $\hat a_1^j=a_1^j$ by part
(i), so taking expectations over the predicted target gives the first inequality of
Eq.~\ref{eq:replacement-rate},
\[
\mathbb E\!\left[\hat\lambda_t^j\right]
\geq
\dot\beta_t\,
\mathbb E\!\left[
\mathbf 1\{\hat a_1^j\neq a_1^j\}\,
p_t(\hat a_1^j\mid\hat a_1^j)\,
d(a_1^j,\hat a_1^j)
\right].
\]
Bounding $p_t(\hat a_1^j\mid\hat a_1^j)\geq p_{\min,t}:=\min_{z\in\mathcal S}1/Z_t(z)>0$ and
$d(a_1^j,\hat a_1^j)\geq d_{\min}:=\min_{z\neq a_1^j}d(z,a_1^j)>0$ on the event
$\{\hat a_1^j\neq a_1^j\}$, whose probability is
$\varepsilon_t^j=1-\mathbb Q^\theta_{1\mid t}(a_1^j\mid a_t,\mathcal C_t)$, yields the second
inequality,
$\mathbb E[\hat\lambda_t^j]\geq\dot\beta_t\,\varepsilon_t^j\,p_{\min,t}\,d_{\min}$.
A denoiser placing all of its mass on the true target has $\varepsilon_t^j=0$, recovering part (i).
Any denoiser with residual target prediction error has $\mathbb E[\hat\lambda_t^j]>0$.

\paragraph{Proof of Proposition~\ref{cor:terminal-blowup} (terminal blow-up of the transition rate).}
Fix a position $j$ with $a_t^j\neq a_1^j$. The summand in the total rate of Eq.~\ref{eq:total-rate}
vanishes at $z^j=a_t^j$ (its bracket is 0), so, factoring the schedule gradient out of
Eq.~\ref{eq:ret-kin},
\[
\lambda_t^j
=\sum_{z^j\neq a_t^j}u_t^j(z^j,a_t\mid a_1^j)
=\dot\beta_t
\underbrace{\sum_{z^j\in\mathcal S}p_t(z^j\mid a_1^j)\big[d(a_t^j,a_1^j)-d(z^j,a_1^j)\big]_+}_{=:~G_t^j}.
\]
Retaining only the destination $z^j=a_1^j$, whose bracket equals $d(a_t^j,a_1^j)-d(a_1^j,a_1^j)=d(a_t^j,a_1^j)$,
and discarding the remaining nonnegative terms gives
\[
G_t^j\ \ge\ p_t(a_1^j\mid a_1^j)\,d(a_t^j,a_1^j)\ >\ 0,
\]
where strict positivity holds because $d$ is a metric and $a_t^j\neq a_1^j$. By Eq.~\ref{eq:ret-kin},
\[
p_t(a_1^j\mid a_1^j)=\frac{1}{Z_t(a_1^j)}
=\Big(1+\textstyle\sum_{z\neq a_1^j}e^{-\beta_t d(z,a_1^j)}\Big)^{-1}\xrightarrow[t\to1]{}1,
\]
since $\beta_t$ grows without bound and $d(z,a_1^j)>0$ for every $z\neq a_1^j$. Combining with
the divergence of $\dot\beta_t$ from Eq.~\ref{eq:ret-betadot} gives that $\lambda_t^j=\dot\beta_t\,G_t^j$ grows without bound as
$t$ approaches 1. For a frozen rate, the jump probability is $1-e^{-h\lambda_t^j}$, which tends to 1 for fixed $h>0$ as $\lambda_t^j$ grows. This is a frozen-rate calculation, not an exact endpoint limit for the time-varying CTMC on $[0,1]$. Substituting the freshly sampled target
$\hat a_1^j$ for $a_1^j$ throughout, as the solver does at inference, gives the stated terminal
re-derivation. The corresponding expected replacement rate is bounded below in
Eq.~\ref{eq:replacement-rate}. \hfill$\square$

\paragraph{Scope of Proposition~\ref{cor:terminal-blowup}.}
The proposition describes transition exposure, which can produce either correction or incorrect revision. It does not establish a net accuracy loss, nor does it cover every uniform-flow path or solver.

\begin{corollary}[Continued exposure to transition]\label{cor:no-early-stop}
The realized rate vanishes exactly when the predicted target equals the token currently held:
\begin{equation}
\Pr\!\left[\hat\lambda_t^j=0\mid a_t,\mathcal C_t\right]
=
\mathbb Q^\theta_{1\mid t}(a_1^j=a_t^j\mid a_t,\mathcal C_t).
\label{eq:exposure-prob}
\end{equation}
If the position currently holds the correct token, its probability of remaining exposed to a
transition at grid point $t_k$ is $\varepsilon_{t_k}^j$, giving
$\sum_k\varepsilon_{t_k}^j$ expected exposures over the sampling grid.
\end{corollary}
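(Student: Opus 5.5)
The statement to prove is Corollary~\ref{cor:no-early-stop}. I would derive the event characterization from the kinetic-optimal velocity of Eq.~\ref{eq:ret-kin}, and then sum the per-step exposure probabilities.

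\emph{Equivalence step.} Fix the state $a_t$ and a realized predicted target $\hat a_1^j$. By Eq.~\ref{eq:total-rate},
\[
\hat\lambda_t^j=\sum_{z^j\neq a_t^j}p_t(z^j\mid\hat a_1^j)\,\dot\beta_t\bigl[d(a_t^j,\hat a_1^j)-d(z^j,\hat a_1^j)\bigr]_+ .
\]
If $\hat a_1^j=a_t^j$, the argument of part (i) of Proposition~\ref{prop:retention} applies with the predicted target in place of the true one. Every bracket equals $-d(z^j,a_t^j)<0$, so $\hat\lambda_t^j=0$. If instead $\hat a_1^j\neq a_t^j$, I retain only the destination $z^j=\hat a_1^j$, as in the ``single transition under a wrong predicted target'' paragraph. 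That summand equals $p_t(\hat a_1^j\mid\hat a_1^j)\,\dot\beta_t\,d(a_t^j,\hat a_1^j)$. This is strictly positive because $p_t(\cdot\mid\cdot)=1/Z_t>0$, $d$ is a metric, and $\dot\beta_t>0$ for $t\in(0,1)$. All other summands are nonnegative, so $\hat\lambda_t^j>0$. Hence the events $\{\hat\lambda_t^j=0\}$ and $\{\hat a_1^j=a_t^j\}$ coincide.

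\emph{Probability step.} Since $\hat a_1^j\sim\mathbb Q^\theta_{1\mid t}(\cdot\mid a_t,\mathcal C_t)$, taking probabilities of the common event conditional on $(a_t,\mathcal C_t)$ gives Eq.~\ref{eq:exposure-prob} directly.

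\emph{Correct-token step.} If $a_t^j=a_1^j$, the exposure event $\{\hat\lambda_t^j>0\}$ has conditional probability $1-\mathbb Q^\theta_{1\mid t}(a_1^j\mid a_t,\mathcal C_t)=\varepsilon_t^j$. For the grid sum, I write the number of exposures as $\sum_k\mathbf 1\{\hat\lambda_{t_k}^j>0\}$ and use linearity of expectation. Each indicator's conditional expectation at $t_k$ is $\varepsilon_{t_k}^j$, which gives $\sum_k\varepsilon_{t_k}^j$.

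\emph{Main obstacle.} The difficulty is interpretive rather than technical: the position must stay correct across grid points for the per-step quantity to equal $\varepsilon_{t_k}^j$. I would state the sum as counting exposures along steps at which the position still holds the correct token. Equivalently, I would read $\varepsilon_{t_k}^j$ as evaluated at the realized states, so the expectation is taken by the tower property along the trajectory restricted to those steps. I would also note the endpoint caveat $\dot\beta_0$ when $\alpha<1$; it is irrelevant on an interior grid with $t_k\in(0,1)$.
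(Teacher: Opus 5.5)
Your proposal is correct and follows the paper's proof step for step: it reuses the part (i) computation with the sampled target in place of the true one, keeps the $z^j=\hat a_1^j$ summand for the converse, takes probabilities under the draw $\hat a_1^j\sim\mathbb Q^\theta_{1\mid t}$, and sums over the grid. Your handling of the grid sum is more careful than the paper's one-line ``summing over the grid'': you restrict to steps at which the position still holds the correct token and use the tower property, since $\varepsilon_{t_k}^j$ depends on the realized state.
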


\paragraph{Proof of Corollary~\ref{cor:no-early-stop}.}
The computation in part (i) used only that the value conditioning the velocity equals the token held
at the position. It did not use that this value was the true target. Applying it at the sampled
target gives, for any draw $\hat a_1^j=a_t^j$, that $u_t^j(z^j,a_t\mid\hat a_1^j)=0$ for every
$z^j\neq a_t^j$ and hence $\hat\lambda_t^j=0$. Conversely, if $\hat a_1^j\neq a_t^j$ then the
destination $z^j=\hat a_1^j$ contributes
$p_t(\hat a_1^j\mid\hat a_1^j)\dot\beta_t\,d(a_t^j,\hat a_1^j)>0$, so $\hat\lambda_t^j>0$. Therefore
$\{\hat\lambda_t^j=0\}=\{\hat a_1^j=a_t^j\}$ exactly, and taking probabilities under the draw
$\hat a_1^j\sim\mathbb Q^\theta_{1\mid t}(\cdot\mid a_t,\mathcal C_t)$ gives
Eq.~\ref{eq:exposure-prob}. When the position holds the correct token, $a_t^j=a_1^j$, the complement
has probability $\varepsilon_t^j$, and summing over the grid gives the stated expected exposure
count $\sum_k\varepsilon_{t_k}^j$. Since the source is uniform at $t=0$, the match probability begins
at $1/|\mathcal S|$ and rises toward one only as the denoiser concentrates, so the count is close to
the full number of sampling steps whenever residual uncertainty persists. \hfill$\square$

\paragraph{Fixed points of the marginalized sampler.}
Because the predicted target is redrawn each step, the object governing whether a position settles is
the rate marginalized over that draw. For an incumbent token $a_t^j$,
\[
\mathbb E\!\left[\hat\lambda_t^j\right]
=
\sum_{b\in\mathcal S}
\mathbb Q^\theta_{1\mid t}(a_1^j=b\mid a_t,\mathcal C_t)
\sum_{z^j\neq a_t^j}
p_t(z^j\mid b)\,\dot\beta_t
\left[d(a_t^j,b)-d(z^j,b)\right]_+ .
\]
Every summand is nonnegative, and for $t\in(0,1)$ both $\dot\beta_t>0$ and $p_t(z^j\mid b)>0$. The
expectation therefore vanishes if and only if $[d(a_t^j,b)-d(z^j,b)]_+=0$ for every $z^j\neq a_t^j$
and every $b$ in the support of the denoiser, that is, whenever $d(z^j,b)\geq d(a_t^j,b)$ for all
$z^j\neq a_t^j$. Choosing $z^j=b$ gives $0=d(b,b)\geq d(a_t^j,b)$, which forces $b=a_t^j$ since $d$ is
a metric. Hence
\begin{equation}
\mathbb E\!\left[\hat\lambda_t^j\right]=0
\quad\Longleftrightarrow\quad
\mathbb Q^\theta_{1\mid t}(\cdot\mid a_t,\mathcal C_t)=\delta_{a_t^j} ,
\label{eq:fixed-point}
\end{equation}
so the only fixed points of the marginalized sampler are states at which the denoiser is a point mass
on the token already held. This is a statement about certainty, not correctness: a confidently
incorrect denoiser produces a fixed point at a wrong token, while an uncertain denoiser produces
none, and in neither case is the time at which the sampler settles under the sampler's control.
The selective-absorption construction below creates a fixed point independently of the denoiser's confidence, at a time the
policy selects.

\paragraph{Absorption sets the rate to zero.}
Under Eq.~\ref{eq:absorbing-velocity}, $u_t^{\theta,\mathrm{abs},j}=(1-f_t^j)u_t^{\theta,j}$, so
a position absorbed at time $\tau_j$ has $f_s^j=1$ and hence zero outflow for all $s\geq\tau_j$.
Therefore $a_s^j=a_{\tau_j}^j$ and $\hat\lambda_s^j=0$ on $[\tau_j,1]$, and the cumulative exposure
$\int_{\tau_j}^1\mathbb E[\hat\lambda_s^j]\,\mathrm ds$ is truncated at $\tau_j$ rather than
accumulated to the end of sampling. \hfill$\square$

\paragraph{On late-time concentration.}
Eq.~\ref{eq:ret-betadot} shows that $\dot\beta_t$ grows without bound, and
$p_t(\hat a_1^j\mid\hat a_1^j)=1/Z_t(\hat a_1^j)$ tends to 1 as $\beta_t$ grows, so the rate in part (ii)
behaves as $\dot\beta_t\,d(a_1^j,\hat a_1^j)$ for large $t$: a late target prediction error acts
through an increasingly concentrated transport. We deliberately do not claim that the resulting error
probability diverges. The finite-step jump probability is bounded by one, and $\varepsilon_t^j$ is
expected to decrease as the denoiser sees a more resolved context, so the two factors act in opposite
directions. The defensible statement is that the consequence of residual prediction error is governed
by the product $\varepsilon_t^j\dot\beta_t$ rather than by $\varepsilon_t^j$ alone, which makes that product the natural diagnostic to measure.

\paragraph{Measured replacement rate.} Figure~\ref{fig:rate-diagnostic} measures this product on
the Sudoku denoiser under the native non-absorbing flow (Nikoli, $n{=}100$, seeds 0 to 4). Per grid
step we log the incumbent error $\varepsilon_t=1-\mathbb Q^\theta_{1\mid t}(a_t^j\mid a_t)$ averaged
over generated positions, the per-step hazard $c_k$ of the schedule, and their product, which is the
rate at which a currently held token is replaced. The hazard alone rises to one at the endpoint on
every grid, but the incumbent error does not vanish: at $t{=}1$ it is 0.24, 0.39, 0.51, 0.59,
and 0.64 for $K\in\{16,32,64,128,256\}$, so the terminal replacement rate equals those values.
The rate therefore grows as the grid is refined, from 0.24 at $K{=}16$ to 0.64 at $K{=}256$,
which is the opposite of what a discretization artifact would produce: refining the integration
brings the sampler closer to the continuous flow and makes the late churn worse, not better. This is
the empirical content of Proposition~\ref{cor:terminal-blowup}: the schedule gradient sets the
opportunity for replacement, and the residual denoiser error decides whether it is taken.

\begin{figure}[t]\centering
\includegraphics[width=0.9\columnwidth]{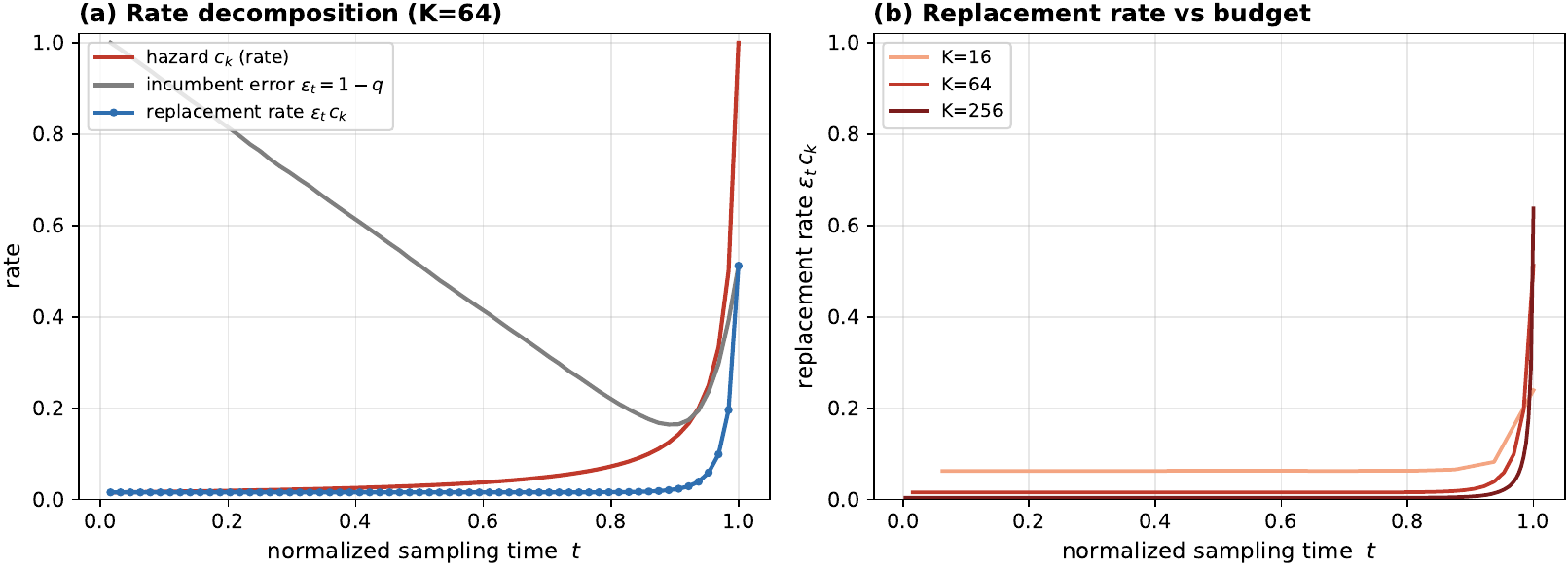}
\caption{\textbf{The replacement rate is the product of residual error and hazard.} (a) At $K{=}64$
the per-step hazard $c_k$ (red) rises to one as $t$ approaches 1 while the incumbent error $\varepsilon_t$
(grey) falls and then rises again in the last steps. Their product (blue), the correct-token
replacement rate, is small for most of sampling and spikes at the endpoint. (b) The product across
grids $K\in\{16,64,256\}$: the terminal spike grows with $K$, so refining the discretization
increases the late churn.}
\label{fig:rate-diagnostic}
\end{figure}

\paragraph{The absorption threshold.}
Fix a position $j$ and let $\rho=\Pr[a_t^j=a_1^j]$ be the probability that it currently holds the
correct token. Over the remaining steps, let $r$ be the probability that an incorrect token is
corrected and $w$ the probability that a correct token is replaced, the latter generated by the rate
of part (ii). Absorbing at the current state fixes the token, so its expected terminal error is
$\mathbb E_{\mathrm{abs}}=1-\rho$. Leaving the position active gives
\[
\mathbb E_{\mathrm{active}}
=
(1-\rho)(1-r)+\rho\,w ,
\]
the first term counting incorrect tokens that are never corrected and the second correct tokens that
are replaced. Continued refinement is preferable exactly when
$\mathbb E_{\mathrm{active}}<\mathbb E_{\mathrm{abs}}$, i.e.
\begin{equation}\label{eq:absorb-threshold}
(1-\rho)(1-r)+\rho w<1-\rho
\iff
\rho w<(1-\rho)r
\iff
  \rho<\frac{r}{r+w}.
\end{equation}
The threshold is increasing in $r$ and decreasing in $w$, as
expected: a sampler that corrects reliably should refine longer, and one whose replacement rate is
high should absorb sooner. Both $r$ and $w$ are properties of the denoiser and the schedule and are
not available at decode time, but the comparison depends on the current position only through $\rho$.
This is what makes a reliability estimate the operative quantity.

              % B: details for Section 2 (B.1 preliminaries, B.2 motivation)
\section{Details for Section~\ref{sec:sampler}: \name{}}\label{app:section3}
%==============================================================================
This appendix follows the three subsections of Section~\ref{sec:sampler}: Appendix~\ref{app:absorbing-background} details the absorbing-state augmentation of Section~\ref{sec:commit}, Appendix~\ref{app:ledflow-details} documents the sampler of Section~\ref{sec:ledflow-method}, and Appendix~\ref{app:order} collects the proofs, the calibration diagnostic, and the lookahead analysis behind Section~\ref{sec:ledflow-theory}.

\subsection{Selective absorption as an absorbing-state uniform flow}
\label{app:absorbing-background}
This appendix expands the construction in Section~\ref{sec:commit}. The uniform discrete flow has
state $a_t$ and learned coordinate velocity $u_t^{\theta,j}$. We augment it with
$f_t=(f_t^1,\ldots,f_t^M)\in\{0,1\}^M$, where $f_t^j=0$ marks an active position and $f_t^j=1$
marks a terminal position. Initially, $f_0^j=0$ for every generated position.

\paragraph{Absorbing velocity.}
For a fixed augmented state $(a_t,f_t)$, define
\[
u_t^{\theta,\mathrm{abs},j}(z^j,a_t,f_t,\mathcal C_t)
=(1-f_t^j)u_t^{\theta,j}(z^j,a_t,\mathcal C_t).
\]
When $f_t^j=0$, this is exactly the native uniform-flow velocity. When $f_t^j=1$, every outgoing
rate is zero, so the coordinate remains at its absorbed value. Because $1-f_t^j$ is nonnegative,
this modification preserves nonnegative off-diagonal rates and the zero row sum of the native CTMC
velocity. The augmented process is therefore a valid CTMC between absorption decisions.

\paragraph{Irreversible update.}
At grid point $t_k$, a policy chooses an absorption block $B_k$ from the active set
$\mathcal U_k=\{j:f_{t_k}^j=0\}$. For each $j\in B_k$, the update assigns the most likely target
value under $\mathbb Q_{1\mid t_k}^{\theta}$ and changes $f_{t_k}^j$ from zero to one. Indicators
never return to zero, hence
\[
\mathcal U_{k+1}\subseteq\mathcal U_k,
\qquad
f_{t_{k+1}}^j\geq f_{t_k}^j.
\]
The active positions subsequently follow the original uniform-flow update, while absorbed
positions are copied unchanged. The schedule in Section~\ref{sec:commit} fixes the number of active
positions remaining after each grid point. The priority policy determines which positions form
$B_k$.

\paragraph{Relation to the exact absorbing representation.}
Appendix~\ref{app:uniform-absorption-derivation} represents uniform transport as a mixture of
conditional processes indexed by auxiliary absorbing tokens. That construction is an exact
representation of the underlying uniform path. Our indicator $f_t^j$ instead records an inference
decision: after a model prediction is selected, Eq.~\eqref{eq:absorbing-velocity} deliberately removes its
future native-flow transitions. The selective augmentation therefore borrows the absorbing-state
mechanism but does not claim to preserve the path law of the unmodified uniform flow. This is the
decision whose local prediction error is analyzed in Section~\ref{sec:limit}.

\paragraph{Pretraining the task-specific puzzle denoiser.}
For the puzzle tasks, we pretrain a task-specific uniform-flow denoiser $\theta$ once, with the standard denoising cross-entropy objective and uniform-noise corruption. This checkpoint is fixed before any sampler is evaluated and is shared by every sampler in Table~\ref{tab:sudoku}. No sampler, including \name{}, is trained or tuned on it. Its input includes a per-position binary status indicator $f^j_t\in\{0,1\}$, embedded and added to the token embedding as an additional input feature, so the network can tell fixed positions from tentative ones. At each training step we draw a time $t\sim\mathcal U(0,1)$ and construct the corrupted input as follows. Conditioning positions and a fraction $t$ of the generated positions (drawn independently per position, matching the linear schedule $\kappa_t=t$) are marked \emph{absorbed} ($f^j_t=1$) and hold their ground-truth value. The remaining \emph{active} positions ($f^j_t=0$) carry a uniform-flow state that interpolates between the target and noise: each keeps its true value with probability $t$ and is otherwise replaced by a uniformly random token, so the active tokens are a partially informative estimate that sharpens as $t$ approaches 1 rather than pure noise or a single mask symbol. The denoiser predicts the clean value at every active position, trained with the denoising cross-entropy $\mathcal L_{\mathrm{DLM}}$ restricted to the active positions. Absorbed positions are conditioned on but not scored. At inference the model receives the active/absorbed configuration, fixes selected predictions, and updates the remaining positions. Training absorptions contain ground-truth values, whereas inference absorptions can be wrong and are selected adaptively. The indicator makes absorption status observable but does not eliminate this distribution shift or establish equivalence to the metric-induced path. On multimodal understanding, text-to-image generation, and mathematics we use the released FUDOKI weights: absorption in Eq.~\ref{eq:absorbing-velocity} is a sampler-side operation that only zeroes the outgoing rate of the selected positions, so the multimodal results of Section~\ref{sec:exp} are obtained by applying the same entropy-guided ordered absorption at inference to the pretrained FUDOKI uniform-flow model, leaving its weights unchanged. In both settings \name{} is a training-free sampler. The only difference is whether the underlying model is a released backbone or a task-specific puzzle denoiser pretrained once for all samplers.

\paragraph{Checkpoint control.}
As a control on the model rather than the sampler, we pretrain a second uniform-flow denoiser without the status indicator and evaluate it under the plain uniform-flow sampler with no absorption, on the same 100 Nikoli puzzles at $K{=}64$ as Fig.~\ref{fig:overjumpy}. It attains 36.3\% cell accuracy and solves $0/100$ puzzles, whereas the main checkpoint under the same non-absorbing sampler solves 0.410 (Fig.~\ref{fig:overjumpy}). The status indicator is therefore informative even when no position is ever absorbed, and the non-absorbing baselines in Table~\ref{tab:sudoku} run on the stronger of the two checkpoints. All reported sampler comparisons, including every result in Fig.~\ref{fig:overjumpy}, use the main checkpoint and differ only in the sampler.

\paragraph{Frozen backbones.}
On frozen FUDOKI the indicator $f_t$ is maintained by the sampler and is not a denoiser input. The model sees only the current token configuration. Fixing tokens can therefore change the states visited by a frozen backbone, without a guarantee that those states follow its training distribution. The puzzle denoiser, by contrast, takes the status indicator as an input feature, so fixed positions are visible to it under every sampler.

\subsubsection{Deriving the exact absorbing representation of uniform flow}
\label{app:uniform-absorption-derivation}
This appendix derives the representation used in Section~\ref{sec:uniform-absorption}. We first
follow the data-to-noise convention of
\citet{gourevitch2026uniformdiffusionmodelsrevisited}, then reverse time to recover the
source-to-target convention of discrete flow matching. Tokens are identified with their one-hot
vectors whenever they parameterize a categorical distribution.

\paragraph{Step 1: expand the conditional absorbing transition.}
Fix a coordinate $j$, a current token $a^j$, and an auxiliary absorbing token $u^j$. The
conditional transition over $[t,t+h]$ is
\begin{align}
q_{t+h\mid t}(z^j\mid a^j,u^j)
&=\operatorname{Cat}\!\left(
z^j;\alpha_{t+h\mid t}a^j+(1-\alpha_{t+h\mid t})u^j
\right)\nonumber\\
&=\alpha_{t+h\mid t}\delta(z^j,a^j)
+(1-\alpha_{t+h\mid t})\delta(z^j,u^j).
\label{eq:app-absorbing-kernel}
\end{align}
Writing $\alpha_{t+h\mid t}=\alpha_{t+h}/\alpha_t$ and assuming differentiability gives
\begin{equation}
\alpha_{t+h\mid t}
=1+\frac{\dot\alpha_t}{\alpha_t}h+o(h)
=1-\nu_t h+o(h),
\qquad
\nu_t:=-\frac{\dot\alpha_t}{\alpha_t}\geq0,
\label{eq:app-hazard}
\end{equation}
because the corruption schedule $\alpha_t$ is nonincreasing. Substituting
\eqref{eq:app-hazard} into~\eqref{eq:app-absorbing-kernel} yields
\begin{equation}
q_{t+h\mid t}(z^j\mid a^j,u^j)
=\delta(z^j,a^j)
+h\nu_t\!\left[\delta(z^j,u^j)-\delta(z^j,a^j)\right]+o(h).
\label{eq:app-absorbing-expansion}
\end{equation}

\paragraph{Step 2: identify the conditional CTMC velocity.}
Comparing~\eqref{eq:app-absorbing-expansion} with the infinitesimal CTMC update in
\eqref{eq:dfm-local-update} gives
\begin{equation}
u_t^{j,u}(z^j,a^j)
=\nu_t\!\left[\delta(z^j,u^j)-\delta(z^j,a^j)\right].
\label{eq:app-absorbing-velocity}
\end{equation}
For $z^j\neq a^j$, this rate equals $\nu_t\delta(z^j,u^j)\geq0$, and summing over $z^j$
gives zero. It is therefore a valid CTMC velocity. If $a^j=u^j$, every rate vanishes. Otherwise,
the only possible jump is from $a^j$ to $u^j$, after which the coordinate remains at $u^j$. This verifies
the absorbing property directly.

\paragraph{Step 3: marginalize the absorbing token.}
Let $U^j\sim\operatorname{Unif}(\mathcal S)$. Since
$\mathbb E[\delta(z^j,U^j)]=1/|\mathcal S|$, averaging either
\eqref{eq:app-absorbing-kernel} or~\eqref{eq:app-absorbing-velocity} gives
\begin{align}
\mathbb E_{U^j}\!\left[q_{t+h\mid t}(z^j\mid a^j,U^j)\right]
&=\alpha_{t+h\mid t}\delta(z^j,a^j)
+\frac{1-\alpha_{t+h\mid t}}{|\mathcal S|},
\label{eq:app-uniform-kernel}\\
\mathbb E_{U^j}\!\left[u_t^{j,U}(z^j,a^j)\right]
&=\nu_t\!\left[\frac{1}{|\mathcal S|}-\delta(z^j,a^j)\right].
\label{eq:app-uniform-velocity}
\end{align}
Equation~\eqref{eq:app-uniform-kernel} is exactly the uniform-corruption transition, and
\eqref{eq:app-uniform-velocity} is its infinitesimal velocity. Thus the uniform transition is a
mixture of conditional transitions, each with a single coordinate-specific absorbing token.

\paragraph{Step 4: reverse time to obtain the discrete-flow path.}
The diffusion convention above has $\alpha_0=1$ at data and $\alpha_t$ decreasing to 0 toward uniform noise.
Let $r:=1-t$ denote generation time and set $\kappa_r:=\alpha_{1-r}$. Then $\kappa_0=0$,
$\kappa_1=1$, and the endpoint-conditioned uniform flow path is
\begin{equation}
p_{r\mid1}^j(a^j\mid a_1^j)
=\kappa_r\delta(a^j,a_1^j)
+(1-\kappa_r)\frac{1}{|\mathcal S|}.
\label{eq:app-uniform-flow-path}
\end{equation}
Introducing $U^j\sim\operatorname{Unif}(\mathcal S)$ and using
$\mathbb E[\delta(a^j,U^j)]=1/|\mathcal S|$ gives the path identity
\begin{equation}
p_{r\mid1}^j(a^j\mid a_1^j)
=\mathbb E_{U^j}\!\left[
\kappa_r\delta(a^j,a_1^j)
+(1-\kappa_r)\delta(a^j,U^j)
\right].
\label{eq:app-uniform-flow-mixture}
\end{equation}
For a fixed realization $U^j=u^j\neq a_1^j$, the conditional path inside the expectation moves
once from $u^j$ to $a_1^j$. Its generation-time hazard is
\begin{equation}
h_\kappa(r)=\frac{\dot\kappa_r}{1-\kappa_r}.
\label{eq:app-reverse-hazard}
\end{equation}
Indeed, the source-token mass is $1-\kappa_r$, so its outflow is
$(1-\kappa_r)h_\kappa(r)=\dot\kappa_r$, exactly the derivative of the mass assigned to $a_1^j$.
If $u^j=a_1^j$, the conditional path is constant. Averaging these conditional paths proves the
uniform discrete-flow identity used in Section~\ref{sec:uniform-absorption}.

\paragraph{Step 5: relate the exact lifting to selective absorption.}
The derivation above concerns the exact auxiliary-state representation of uniform transport.
Appendix~\ref{app:absorbing-background} explains the distinct selective augmentation used by our
sampler: Section~\ref{sec:commit} uses $f_t^j$ to record a reliability-based terminal decision. Once
$f_t^j=1$, Eq.~\eqref{eq:absorbing-velocity} sets every outgoing velocity at coordinate $j$ to zero, making
that absorbed value absorbing. This borrows the augmented absorbing-state structure while
deliberately changing the native uniform-flow path according to the absorption policy.

\subsection{The \name{} sampler: selective-absorption methodology}
\label{app:ledflow-details}
%------------------------------------------------------------------
This subsection expands the implementation of Section~\ref{sec:sampler} and separates the two
operations performed at each grid point: an irreversible absorption decision and an otherwise
unchanged uniform-flow update. No token-specific time map or modified CTMC velocity is introduced.

\paragraph{Active positions and absorption count.}
Only generated positions participate in the sampler. Their indicators are initialized as
$f_{t_0}^j=0$, and the active set before step $k$ is
$\mathcal U_k=\{j:f_{t_k}^j=0\}$. The cosine schedule specifies the desired number
\[
m_k=\left\lfloor M\cos\!\left(\frac{\pi}{2}\frac{k+1}{K}\right)\right\rfloor
\]
of active positions after absorption. Consequently,
$b_k=\max\{0,|\mathcal U_k|-m_k\}$ positions are made terminal at that step. Rounding can produce
$b_k=0$, in which case the sampler performs only the native flow update. At the final grid point,
$m_{K-1}=0$, so every remaining generated position is absorbed. This cosine absorption schedule is used for the FUDOKI and main Sudoku absorption comparisons. Both absorbing and non-absorbing Sudoku baselines use the same coordinatewise kinetic-optimal update. The schedule controls absorptions rather than native flow transitions. Explicit singleton-absorption ablations use $b_k=1$ instead (Appendix~\ref{app:setup}).

\paragraph{Entropy selection and value absorption.}
One denoiser evaluation provides the target marginal for each active position. We compute
$H_\theta^j(s_k)$ and choose the $b_k$ smallest values, with a deterministic index order used only
to break exact ties. For every selected $j$, the sampler assigns the most likely target token and
sets $f_{t_k}^j=1$. Equation~\eqref{eq:absorbing-velocity} then makes the position terminal: its
outgoing velocity is zero at every later step and its value is copied unchanged. This is the only
irreversible operation in \name{}.

\paragraph{Native flow on the remaining positions.}
Positions outside $B_k^{\mathrm{LED}}$ retain $f_{t_k}^j=0$ and are advanced from $t_k$ to
$t_{k+1}$ by the backbone's original uniform-flow solver. Thus, \name{} changes which predictions
become final but does not change the learned posterior, the base probability path, or the numerical
integration rule on active positions. Absorption and numerical integration are therefore separate
design axes. For example, a time- or location-corrected update can be applied to the active
positions before the same entropy-based absorption decision at the next grid point.

\paragraph{Matched-policy comparisons.}
The probability-margin, Info-Gain, and arbitrary-order baselines use the same generated-position mask,
task-specific absorption schedule, most-likely-value update, and absorption rule as \name{}. They differ only in the
score used to rank $\mathcal U_k$. This controlled construction attributes differences among these
rows to the priority policy rather than to a different number of absorbed positions or a different
uniform-flow update. In contrast, the native Euler and corrected-flow baselines set
$f_t^j=0$ throughout and therefore remain non-absorbing. They test numerical integration without
the selective-absorption component.

Algorithm~\ref{alg:ledflow} gives the complete update order. The implementation first restricts
selection to generated positions, refreshes the posterior and priority scores, absorbs the selected
block, and finally applies the native velocity update only where the refreshed indicator remains
zero. These steps directly implement Eqs.~\eqref{eq:absorbing-velocity}, \eqref{eq:absorption-update}, and
\eqref{eq:led-block}.

\subsection{Theoretical analysis}\label{app:order}
%------------------------------------------------------------------
\subsubsection{KL motivation and local absorption-error decomposition}\label{app:tc}
\paragraph{Why measure decoding error with KL divergence?}
The starting point follows the sequential-decoding formulation of
\citet{xu2026scheduling}. Let $\pi(y)$ be a target distribution, $\sigma=(\sigma_1,\ldots,\sigma_M)$
an unmasking order, and $v_\phi(\sigma\mid y)$ the probability assigned to that order along the
teacher-forced trajectory of $y$. A denoiser assigns the path likelihood
\begin{equation}
p_\theta(y\mid\sigma)
=\prod_{i=1}^{M}p_\theta^{\sigma_i}
\!\left(y^{\sigma_i}\mid y^{\sigma_{<i}}\right),
\label{eq:app-path-likelihood}
\end{equation}
and the sequential decoder induces
$P_{\theta,\phi}^{\mathrm{seq}}(y)=\sum_\sigma
v_\phi(\sigma\mid y)p_\theta(y\mid\sigma)$. Define the data-policy and model-policy joint laws
\begin{equation}
Q_\phi(y,\sigma):=\pi(y)v_\phi(\sigma\mid y),
\qquad
P_{\theta,\phi}(y,\sigma):=p_\theta(y\mid\sigma)v_\phi(\sigma\mid y).
\label{eq:app-schedule-joints}
\end{equation}
Because both laws use the same order policy, its log-density cancels in their KL divergence:
\begin{align}
\KL\!\left(Q_\phi\,\middle\|\,P_{\theta,\phi}\right)
&=\mathbb E_{y\sim\pi,\,\sigma\sim v_\phi(\cdot\mid y)}
\left[\log\frac{\pi(y)}{p_\theta(y\mid\sigma)}\right]\nonumber\\
&=-H(\pi)
-\mathbb E_{y\sim\pi,\,\sigma\sim v_\phi(\cdot\mid y)}
\left[\log p_\theta(y\mid\sigma)\right].
\label{eq:app-joint-kl-path-loss}
\end{align}
Marginalization from $(y,\sigma)$ to $y$ and the data-processing inequality then give
\begin{equation}
\KL\!\left(\pi\,\middle\|\,P_{\theta,\phi}^{\mathrm{seq}}\right)
\leq
\KL\!\left(Q_\phi\,\middle\|\,P_{\theta,\phi}\right).
\label{eq:app-marginal-kl-bound}
\end{equation}
Equations~\eqref{eq:app-joint-kl-path-loss} to \eqref{eq:app-marginal-kl-bound} explain why KL is
useful for studying absorption decisions: it measures distributional mismatch at the decoder output,
while an upper bound can be evaluated as accumulated prediction loss along an ordered trajectory.
The cited work also considers an ordered block schedule $(B_1,\ldots,B_K)$. If its parallel law
samples positions in each block independently from the corresponding sequential-law marginals,
then its parallelization gap is
\begin{equation}
\KL\!\left(P_{\theta,\phi}^{\mathrm{seq}}\,\middle\|\,
P_{\theta,\phi}^{\mathrm{par}}\right)
=\sum_{k=1}^{K}\mathbb E_{P_{\theta,\phi}^{\mathrm{seq}}}
\!\left[\mathcal{TC}_{P_{\theta,\phi}^{\mathrm{seq}}}
(Y^{B_k}\mid Y^{B_{<k}})\right].
\label{eq:app-xu-parallel-tc}
\end{equation}
This identity measures the dependence lost by parallelizing a sequential model law. Our local
question uses a different reference: it compares the oracle joint posterior at an absorption event
directly with the product of learned denoiser marginals. Proposition~\ref{prop:kl-decomp} therefore
retains a total-correlation term for dependence loss and adds a separate term for marginal
prediction error.

\paragraph{Step 1: abbreviate the three distributions.}
Fix the reachable state $s_k$ and block $B_k$ from Section~\ref{sec:limit}. Write
\begin{equation}
P_B(b):=p^q_{1\mid s_k}(a_1^{B_k}=b),
\quad
P_j(b_j):=p^q_{1\mid s_k}(a_1^j=b_j),
\quad
Q_j(b_j):=\mathbb Q^\theta_{1\mid t_k}(a_1^j=b_j\mid s_k).
\label{eq:app-block-abbreviations}
\end{equation}
The oracle distribution $P_B$ retains dependence among the positions in $B_k$, whereas the
implemented absorption kernel is the product $\prod_{j\in B_k}Q_j$. We assume the latter is
positive on the support of $P_B$. Otherwise the relevant KL divergence is infinite and the identity
below holds in the extended-real sense.

\paragraph{Step 2: insert the product of oracle marginals.}
For every $b$ in the support of $P_B$,
\begin{equation}
\log\frac{P_B(b)}{\prod_{j\in B_k}Q_j(b_j)}
=
\log\frac{P_B(b)}{\prod_{j\in B_k}P_j(b_j)}
+\sum_{j\in B_k}\log\frac{P_j(b_j)}{Q_j(b_j)}.
\label{eq:app-log-ratio-split}
\end{equation}
Taking expectation under $P_B$ separates the local absorption loss into two terms:
\begin{align}
\mathcal L_{\mathrm{abs}}(B_k;s_k)
&=\sum_bP_B(b)\log\frac{P_B(b)}{\prod_{j\in B_k}P_j(b_j)}\nonumber\\
&\quad+\sum_{j\in B_k}\sum_bP_B(b)
\log\frac{P_j(b_j)}{Q_j(b_j)}.
\label{eq:app-expected-split}
\end{align}

\begin{lemma}[Conditional total correlation of an absorption block]\label{lem:tc}
For any reachable state $s_k$ and absorption block $B_k$,
\begin{equation}
\KL\!\left(
p^q_{1\mid s_k}(a_1^{B_k})
\,\middle\|\,
\prod_{j\in B_k}p^q_{1\mid s_k}(a_1^j)
\right)
=\mathcal{TC}_q(a_1^{B_k}\mid s_k)\geq0.
\label{eq:app-tc-lemma}
\end{equation}
The quantity vanishes exactly when the positions in $B_k$ are conditionally independent under the
oracle posterior, including every singleton block.
\end{lemma}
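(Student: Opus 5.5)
We plan to prove Lemma~\ref{lem:tc} by expanding the KL divergence against the product of oracle marginals, using the abbreviations of Eq.~\eqref{eq:app-block-abbreviations}. For every $b$ in the support of $P_B$ we have $P_j(b_j)\geq P_B(b)>0$, so the product $\prod_{j\in B_k}P_j$ is positive on $\operatorname{supp}P_B$. The KL divergence is therefore finite, and we may split the log-ratio termwise:
\begin{equation*}
\sum_b P_B(b)\log\frac{P_B(b)}{\prod_{j\in B_k}P_j(b_j)}
=\sum_b P_B(b)\log P_B(b)-\sum_{j\in B_k}\sum_b P_B(b)\log P_j(b_j).
\end{equation*}
The first sum equals $-H_q(a_1^{B_k}\mid s_k)$. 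For the second, we sum out the coordinates of $b$ other than $b_j$. Since $P_j$ is the $j$-th marginal of $P_B$, this gives $\sum_b P_B(b)\log P_j(b_j)=\sum_{b_j}P_j(b_j)\log P_j(b_j)=-H_q(a_1^j\mid s_k)$. Together these yield $\sum_{j\in B_k}H_q(a_1^j\mid s_k)-H_q(a_1^{B_k}\mid s_k)$, which is exactly the definition of $\mathcal{TC}_q(a_1^{B_k}\mid s_k)$ in Proposition~\ref{prop:kl-decomp}.

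Nonnegativity follows from Gibbs' inequality, since a KL divergence between two probability distributions on the finite set $\mathcal S^{|B_k|}$ is nonnegative. The characterization of equality also comes from Gibbs' inequality: the divergence vanishes iff $P_B=\prod_{j\in B_k}P_j$ everywhere, which is precisely conditional independence of $\{a_1^j\}_{j\in B_k}$ given $s_k$ under the oracle posterior. For a singleton block $B_k=\{j\}$, the product of marginals is $P_j=P_B$ itself, so the quantity is trivially zero.

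The argument is routine. The only point needing care is the marginalization step, together with the support remark that justifies splitting the logarithm without $\infty-\infty$ issues. Reachability of $s_k$ is used only to ensure that the oracle posterior $p^q_{1\mid s_k}$ is well defined, i.e.\ that the conditioning event has positive probability.
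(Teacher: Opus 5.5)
Your proof is correct and follows essentially the same route as the paper's Step 3: split the log-ratio, marginalize each cross term to $-H_q(a_1^j\mid s_k)$, and read off nonnegativity and the equality case (including singletons) from the fact that the quantity is itself a KL divergence. Your support remark $P_j(b_j)\ge P_B(b)>0$ makes explicit a finiteness point the paper leaves implicit.
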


\paragraph{Step 3: identify the joint term.}
The first term in~\eqref{eq:app-expected-split} is
\begin{align}
\KL\!\left(P_B\,\middle\|\,\prod_{j\in B_k}P_j\right)
&=\sum_bP_B(b)\log P_B(b)
-\sum_{j\in B_k}\sum_bP_B(b)\log P_j(b_j)\nonumber\\
&=-H_q(a_1^{B_k}\mid s_k)
+\sum_{j\in B_k}H_q(a_1^j\mid s_k)\nonumber\\
&=\mathcal{TC}_q(a_1^{B_k}\mid s_k)
=\mathcal E_{\mathrm{joint}}(B_k;s_k).
\label{eq:app-joint-term}
\end{align}
The second equality marginalizes $P_B$ over all coordinates except $j$. Since it is itself a KL
divergence, the joint term is nonnegative and vanishes exactly when the coordinates in $B_k$ are
conditionally independent under the oracle posterior. In particular, it is zero for
$|B_k|=1$.

\paragraph{Step 4: identify the conditional term.}
For each $j\in B_k$, marginalizing the second term in~\eqref{eq:app-expected-split} over
$b_{-j}$ gives
\begin{align}
\sum_bP_B(b)\log\frac{P_j(b_j)}{Q_j(b_j)}
&=\sum_{b_j}P_j(b_j)\log\frac{P_j(b_j)}{Q_j(b_j)}\nonumber\\
&=\KL(P_j\|Q_j).
\label{eq:app-conditional-term}
\end{align}
Summing~\eqref{eq:app-conditional-term} over the block yields
$\mathcal E_{\mathrm{cond}}(B_k;s_k)$. Combining
\eqref{eq:app-joint-term} and~\eqref{eq:app-conditional-term} proves
Eq.~\eqref{eq:kl-decomp}.

\paragraph{Step 5: interpret the limiting cases.}
If the block is a singleton, independent sampling introduces no factorization error and only the
denoiser KL remains. If every denoiser marginal equals its oracle marginal, the conditional term
vanishes, but a multi-position block can still incur the joint term because the product kernel does
not reproduce conditional dependence. Both terms vanish when the block is conditionally
independent and every marginal is exact. For a random or adaptive absorption policy, the same
identity holds after conditioning on the realized pair $(s_k,B_k)$. Taking expectation over that
pair preserves the decomposition. Consequently, the policy affects the conditional term through
the states at which predictions become irreversible, while block size determines whether an
additional factorization cost can arise. When the oracle conditional at $s_k$ is a point mass, as
on unique-solution puzzles, every conditional entropy in the total correlation is zero and the joint
term vanishes for every block regardless of its size. A nondegenerate oracle, such as the
multi-solution Sudoku corpus or graph coloring, is required to observe it.

\subsubsection{Proof of Corollary~\ref{cor:risk-bridge-main}}\label{app:risk-bridge}
\begin{proposition}[Conditional error and absorption risk]\label{prop:risk-bridge}
For a reachable state $s_k$ and an absorption block $B_k$ with $|B_k|=b$, let
$\hat z^j=\arg\max_z\mathbb Q^\theta_{1\mid t_k}(a_1^j=z\mid s_k)$,
$c_j=p^q_{1\mid s_k}(a_1^j=\hat z^j)$, and
$c_j^\star=\max_zp^q_{1\mid s_k}(a_1^j=z)$. Then
\begin{equation}
\sum_{j\in B_k}(1-c_j)
\leq
\sum_{j\in B_k}(1-c_j^\star)
+
\sqrt{2b\,\mathcal E_{\mathrm{cond}}(B_k;s_k)}.
\label{eq:risk-bridge}
\end{equation}
\end{proposition}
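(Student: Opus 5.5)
The argument works one position at a time and then sums over the block. Fix $j\in B_k$ and write $P_j$, $Q_j$ as in Eq.~\eqref{eq:app-block-abbreviations}. Let $z^\star_j$ be a maximizer of $P_j$, so $c_j^\star=P_j(z^\star_j)$, and recall that $\hat z^j$ maximizes $Q_j$, so $c_j=P_j(\hat z^j)$. The per-position excess risk is then
\[
(1-c_j)-(1-c_j^\star)=P_j(z^\star_j)-P_j(\hat z^j).
\]
The aim is to bound this gap by $2\,\TV(P_j,Q_j)$; if the argument below goes through it even gives $\TV(P_j,Q_j)$, which is what the stated constant needs.

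\textbf{Per-position gap.} Split the gap as
\[
\bigl[P_j(z^\star_j)-Q_j(z^\star_j)\bigr]+\bigl[Q_j(z^\star_j)-Q_j(\hat z^j)\bigr]+\bigl[Q_j(\hat z^j)-P_j(\hat z^j)\bigr].
\]
The middle bracket is $\le 0$ because $\hat z^j$ is the mode of $Q_j$. If $z^\star_j=\hat z^j$, the gap is zero. Otherwise the two outer brackets are evaluated at distinct points, so each is one term of the $\ell_1$ sum, and their sum is at most $\sum_z|P_j(z)-Q_j(z)|=2\,\TV(P_j,Q_j)$.

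To get the sharper constant, note that the first bracket is the signed difference $P_j-Q_j$ at $z^\star_j$ and the third is the signed difference $Q_j-P_j$ at $\hat z^j$. Let $A=\{z:P_j(z)>Q_j(z)\}$. The first bracket is at most $\sum_{z\in A}(P_j-Q_j)=\TV(P_j,Q_j)$. The third is at most $\sum_{z\notin A}(Q_j-P_j)=\TV(P_j,Q_j)$. This alone still gives only $2\TV$.

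\textbf{Main obstacle: removing the factor of two.} The refinement uses the fact that the positive and negative parts of $P_j-Q_j$ carry equal mass $\TV$. Under $z^\star_j\ne\hat z^j$, the positive part at $z^\star_j$ plus the negative part at $\hat z^j$ is at most $\TV$ plus $\TV$. Equality would require all positive mass concentrated at $z^\star_j$ and all negative mass at $\hat z^j$. In that extremal configuration the middle bracket is $Q_j(z^\star_j)-Q_j(\hat z^j)=P_j(z^\star_j)-P_j(\hat z^j)-2\TV$. Substituting back, the gap equals $2\TV+(\text{middle})$. I would try to finish the bound $\le\TV$ with a careful case check using this identity together with $Q_j(\hat z^j)\ge Q_j(z^\star_j)$.

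If that case analysis fails, I would fall back on the standard $2\TV$ bound. With Pinsker it gives $2\sqrt{\KL/2}=\sqrt{2\,\KL}$ per position. Cauchy--Schwarz over the $b$ positions then yields $\sqrt{2b\sum_j\KL}$, which is exactly the constant $\sqrt{2b\,\mathcal E_{\mathrm{cond}}}$ in Eq.~\eqref{eq:risk-bridge}. So the looser route already suffices, and the constant analysis above is only a check that nothing sharper is being claimed.

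\textbf{Assembly.} Combine the per-position gap with Pinsker:
\[
P_j(z^\star_j)-P_j(\hat z^j)\le 2\,\TV(P_j,Q_j)\le\sqrt{2\,\KL(P_j\|Q_j)}.
\]
Summing over $j\in B_k$ and applying Cauchy--Schwarz,
\[
\sum_{j\in B_k}\sqrt{2\,\KL(P_j\|Q_j)}\le\sqrt{b}\,\sqrt{\textstyle\sum_{j\in B_k}2\,\KL(P_j\|Q_j)}=\sqrt{2b\,\mathcal E_{\mathrm{cond}}(B_k;s_k)},
\]
where the last equality uses the definition of $\mathcal E_{\mathrm{cond}}$ from Proposition~\ref{prop:kl-decomp}. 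Adding $\sum_{j\in B_k}(1-c_j^\star)$ to both sides gives Eq.~\eqref{eq:risk-bridge}.

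If some $Q_j$ is not positive on the support of $P_j$, then $\KL(P_j\|Q_j)=\infty$ and the bound holds trivially. Ties in the argmax defining $\hat z^j$ do not affect the argument, since only $Q_j(\hat z^j)\ge Q_j(z)$ for all $z$ is used.
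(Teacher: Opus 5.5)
Your proof is correct and is essentially the paper's argument: the same three-bracket split with a nonpositive middle term, the per-position bound $2\,\TV(P_j,Q_j)$, Pinsker to reach $\sqrt{2\,\KL(P_j\|Q_j)}$, and Cauchy--Schwarz over the $b$ positions. You should drop the attempted sharpening to $\TV$: it is unnecessary, because the stated constant is exactly what $2\,\TV$ plus Pinsker gives (so the remark that $\TV$ is ``what the stated constant needs'' is mistaken), and it is false in general, since $P_j=(1,0)$, $Q_j=(0.49,0.51)$ gives a gap of $1>\TV(P_j,Q_j)=0.51$.
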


\paragraph{Proof of Proposition~\ref{prop:risk-bridge}.}
Fix a reachable state $s_k$ and a position $j\in B_k$, and abbreviate the oracle and model marginals
at that position by $p:=p^q_{1\mid s_k}(a_1^j=\cdot)$ and
$\mathbb Q:=\mathbb Q^\theta_{1\mid t_k}(a_1^j=\cdot\mid s_k)$. Let
$\hat z^j=\arg\max_z\mathbb Q(z)$ be the token written by Eq.~\ref{eq:absorption-update} and
$z^{j\star}=\arg\max_zp(z)$ the oracle mode, so that $c_j=p(\hat z^j)$ and $c_j^\star=p(z^{j\star})$.
Adding and subtracting $\mathbb Q$ at both tokens,
\[
c_j^\star-c_j
=
p(z^{j\star})-p(\hat z^j)
=
\bigl[p(z^{j\star})-\mathbb Q(z^{j\star})\bigr]
+
\underbrace{\bigl[\mathbb Q(z^{j\star})-\mathbb Q(\hat z^j)\bigr]}_{\leq0}
+
\bigl[\mathbb Q(\hat z^j)-p(\hat z^j)\bigr],
\]
where the middle bracket is nonpositive because $\hat z^j$ maximizes $\mathbb Q$. Since
$\TV(p,\mathbb Q)=\max_{A\subseteq\mathcal S}|p(A)-\mathbb Q(A)|$ dominates the discrepancy on any
singleton, each outer bracket is at most $\TV(p,\mathbb Q)$, giving the plug-in bound
$c_j^\star-c_j\leq2\,\TV(p,\mathbb Q)$. Pinsker's inequality
$\TV(p,\mathbb Q)\leq\sqrt{\tfrac12\KL(p\,\|\,\mathbb Q)}$ then yields
\[
\left(1-c_j\right)-\left(1-c_j^\star\right)
\;\leq\;
\sqrt{2\,\KL\!\left(p\,\|\,\mathbb Q\right)} ,
\]
whose divergence is exactly the $j$-th summand of $\mathcal E_{\mathrm{cond}}(B_k;s_k)$ in
Eq.~\ref{eq:kl-decomp}. Denote this divergence by $D_j$. Summing over $j\in B_k$ and applying
Cauchy-Schwarz to the $b$ square roots,
\[
\sum_{j\in B_k}\sqrt{2D_j}
\;\leq\;
\sqrt{2b\sum_{j\in B_k}D_j}
\;=\;
\sqrt{2b\,\mathcal E_{\mathrm{cond}}(B_k;s_k)} ,
\]
which gives Eq.~\ref{eq:risk-bridge}. \hfill$\square$

Two remarks delimit the result. First, the bound separates the block's oracle ambiguity
$\sum_{j\in B_k}(1-c_j^\star)$, which is a property of $q$ at $s_k$ for the chosen block, from the
model-dependent excess, which is bounded by a function of $\mathcal E_{\mathrm{cond}}$. The
ambiguity term does depend on which block is selected, since a different block sums different
positions, so a position that is genuinely ambiguous under the oracle is not rendered reliable by a
small conditional KL. The absorption policy of Section~\ref{sec:sampler} targets only the second
term. Second, the $\sqrt{\cdot}$ and the Cauchy-Schwarz step are both loose, and the statement
is a control of an upper bound rather than a ranking of realized risk: decreasing
$\mathcal E_{\mathrm{cond}}$ tightens the bound on the excess absorption risk, but a smaller bound
does not by itself guarantee a smaller realized excess. This is what motivates using the conditional
term as the objective in Section~\ref{sec:sampler}, and it is the reason the policy is evaluated
empirically rather than claimed to minimize realized risk. Combined with
Definition~\ref{def:monotone}, the two bounds are
$(1-c_j)-(1-c_j^\star)\leq\sqrt{2D_j}$ and
$D_j\leq\phi(H_\theta^j)$, connecting the absorption error to the entropy criterion used at inference.

\subsubsection{Entropy-guided absorption: proof of Theorem~\ref{thm:order}}
%------------------------------------------------------------------
This appendix proves the local absorption result in
Theorem~\ref{thm:order}, derives Lemma~\ref{lem:gain-amplification}, and records the scope of both
claims. Throughout, the sampler state $s_k$ is reachable under the oracle process and
$\mathcal U_k$ is the set of positions that remain active immediately before absorption event $k$.

\paragraph{Scope of the main-text results.}
A wrong absorption can leave the oracle-reachable set, in which case the oracle conditional at the resulting state may be undefined. The statements below therefore control error only while the trajectory remains oracle-consistent.
Theorem~\ref{thm:order} is a conditional exchange argument for a surrogate bound, not a derivation of entropy-error regularity, and it does not establish optimal terminal accuracy or control the trajectory after an oracle-inconsistent absorption.
Low entropy alone does not imply correctness: Definition~\ref{def:monotone} is an envelope assumption whose empirical coverage on the Sudoku denoisers, including the relaxed offset $\phi_0$ and the confidently-wrong tail, is reported in Appendix~\ref{app:calib}. The observed coverage is descriptive calibration evidence and does not establish the pointwise assumption or a held-out guarantee.
Corollary~\ref{cor:risk-bridge-main} shows that entropy selection minimizes the excess-risk bound for a fixed absorption count, but it need not minimize total risk, because the oracle term also varies across blocks. For $b>1$, even exact marginals need not produce a jointly valid block of modes.
Lemma~\ref{lem:gain-amplification} is worst-case additive, so the realized lookahead error can grow more slowly than $\eta_m$. It compares score sensitivity rather than terminal losses under the two policies, and does not prove that local entropy is better whenever the bound is large.

\paragraph{Relaxed entropy-error regularity.}\label{app:relaxed_regularity}
The relaxed inequality in Eq.~\ref{eq:relaxed-regularity} adds a position-independent constant $\phi_0$ to accommodate the confidently-wrong tail at entropy near zero. It holds on 99.4\% of sampled states for the 6.4M denoiser in the diagnostic of Appendix~\ref{app:calib}.
As $\phi_0$ is the same for every active position, it shifts $\Phi(B_k;s_k)$ by the constant $b\phi_0$ and leaves the minimizer of $\sum_{j\in B_k}\phi\!\left(H_\theta^j(s_k)\right)$ unchanged. The bound remains conditional on the relaxed inequality holding pointwise. An offset chosen as a mean provides empirical coverage, not a universal upper bound.
\begin{equation}
\KL\!\left(
p^q_{1\mid s_k}(a_1^j)
\,\middle\|\,
\mathbb Q^\theta_{1\mid t_k}(a_1^j\mid s_k)
\right)
\leq
\phi_0+\phi\!\left(H_\theta^j(s_k)\right),
\qquad \phi_0\geq0,
\label{eq:relaxed-regularity}
\end{equation}

\paragraph{Alternative uncertainty scores.}
For maximum predicted probability and probability margin, the corresponding uncertainty scores are $1-p_{(1)}$ and $1-(p_{(1)}-p_{(2)})$, respectively, where $p_{(1)}\geq p_{(2)}$ are the two largest posterior probabilities. Minimizing these scores selects the highest-confidence positions, and entropy in Eq.~\ref{eq:led-block} can be replaced by the corresponding score to obtain a different ordering policy under the same absorption count.
Definition~\ref{def:monotone} is stated for predictive entropy. Entropy regularity does not imply the analogous conditions for maximum probability or probability margin, and the scores can rank positions differently, so the resulting policies are compared empirically in Appendix~\ref{app:results-reasoning}.

\paragraph{Oracle objective and tractable envelope.}
Fix an absorption count $b\geq1$ and recall
$\mathfrak B_k(b)=\{B\subseteq\mathcal U_k:|B|=b\}$. An oracle that knew the realized target
sequence could fix every absorbed position correctly, eliminating errors propagated from
earlier irreversible decisions. For the inference setting, where absorbed values are still predicted
by the denoiser, we use the true conditional posterior only as an analytical reference. By
Proposition~\ref{prop:kl-decomp}, this oracle reference would select
\[
B_k^{\mathrm{oracle}}
\in
\arg\min_{B\in\mathfrak B_k(b)}
\underbrace{
\sum_{j\in B}
\KL\!\left(
p^q_{1\mid s_k}(a_1^j)
\,\middle\|\,
\mathbb Q^\theta_{1\mid t_k}(a_1^j\mid s_k)
\right)
}_{\mathcal E_{\mathrm{cond}}(B;s_k)}.
\]
This decision cannot be evaluated at inference because $p^q_{1\mid s_k}$ is unavailable. Under
Definition~\ref{def:monotone}, however, every summand admits the denoiser-computable envelope
\[
\KL\!\left(
p^q_{1\mid s_k}(a_1^j)
\,\middle\|\,
\mathbb Q^\theta_{1\mid t_k}(a_1^j\mid s_k)
\right)
\leq \phi\!\left(H_\theta^j(s_k)\right).
\]
Summing over a candidate block gives
\[
\mathcal E_{\mathrm{cond}}(B;s_k)
\leq
\Phi(B;s_k)
:=
\sum_{j\in B}\phi\!\left(H_\theta^j(s_k)\right).
\]
The entropy policy need not coincide with $B_k^{\mathrm{oracle}}$. The claim is that it exactly
minimizes this tractable upper bound.

\paragraph{Proof of Theorem~\ref{thm:order}.}
\begin{proof}
Fix an absorption count $b$ and let $B_{\mathrm{low}}$ contain $b$ positions having the smallest predictive
entropies. Consider any other block $B$ of size $b$. If $B\neq B_{\mathrm{low}}$, there are
$i\in B_{\mathrm{low}}\setminus B$ and $j\in B\setminus B_{\mathrm{low}}$ with
$H_\theta^i(s_k)\leq H_\theta^j(s_k)$. Since $\phi$ is nondecreasing, exchanging $j$ for $i$
cannot increase $\Phi$ because
\[
\Phi\!\left((B\setminus\{j\})\cup\{i\};s_k\right)-\Phi(B;s_k)
=
\phi\!\left(H_\theta^i(s_k)\right)-\phi\!\left(H_\theta^j(s_k)\right)
\leq0.
\]
Repeating the exchange produces $B_{\mathrm{low}}$, proving that it minimizes $\Phi$ over
$\mathfrak B_k(b)$.

When $b=1$, Eq.~\eqref{eq:entropy-policy} follows immediately. Moreover, conditional total
correlation vanishes for a singleton:
\[
\mathcal E_{\mathrm{joint}}(\{j\};s_k)
=
H_q(a_1^j\mid s_k)-H_q(a_1^j\mid s_k)
=0.
\]
Hence Proposition~\ref{prop:kl-decomp} reduces to
$\mathcal L_{\mathrm{abs}}(\{j\};s_k)=\mathcal E_{\mathrm{cond}}(\{j\};s_k)$, and the lowest
entropy position minimizes the upper bound on the complete local absorption error.

For part (ii), convergence to the oracle posterior on reachable states means that, for every
admissible $j$ and $z^j\in\mathcal S$,
\[
\mathbb Q^\theta_{1\mid t_k}(a_1^j=z^j\mid s_k)
\longrightarrow
p^q_{1\mid s_k}(a_1^j=z^j).
\]
Because $\mathcal S$ is finite, each term with positive oracle mass converges continuously and each
zero-mass term contributes zero. Therefore,
\[
\KL\!\left(p^q_{1\mid s_k}(a_1^j)\,\middle\|\,
\mathbb Q^\theta_{1\mid t_k}(a_1^j\mid s_k)\right)\longrightarrow0.
\]
Every admissible block contains finitely many positions, so
\[
\mathcal E_{\mathrm{cond}}(B;s_k)
=
\sum_{j\in B}
\KL\!\left(p^q_{1\mid s_k}(a_1^j)\,\middle\|\,
\mathbb Q^\theta_{1\mid t_k}(a_1^j\mid s_k)\right)
\longrightarrow0.
\]
This holds for every admissible block and thus for the block selected by any absorption policy,
which proves part (ii).
\end{proof}

\paragraph{Why the result is local.}
The exchange argument holds at a fixed state and fixed absorption count. Absorbing a prediction
changes both the active set and the conditioning state used by the denoiser at later events.
Theorem~\ref{thm:order} therefore controls the next block contribution to
$\mathcal E_{\mathrm{cond}}$. It does not claim that greedy entropy selection minimizes the final
loss over every complete absorption trajectory. When $b>1$, it does not optimize the joint term
$\mathcal E_{\mathrm{joint}}(B;s_k)$, which can vary with the selected block.

\paragraph{Reachable states.}\label{app:order-remarks}\label{rem:reachable}
The regularity condition is stated on oracle-reachable states, where every absorbed partial
configuration has positive probability under $q$ and its oracle conditional is defined. A learned
sampler can leave this set after an incorrect irreversible prediction. At such a state,
$p^q_{1\mid s_k}$ may be undefined if no valid completion remains, so the KL condition cannot be
evaluated. The theorem should consequently be read as controlling the error incurred while the
sampler remains on an oracle-consistent trajectory. Its experiments test whether this local rule
also reduces departures from that set. They do not turn the pointwise premise into a global
guarantee.

\paragraph{Sampling versus most-likely absorption.}
The KL decomposition in Proposition~\ref{prop:kl-decomp} uses the categorical denoiser marginal as
the prediction kernel. Algorithm~\ref{alg:ledflow} instead absorbs its most likely value. The two
kernels are generally different, including on unique-solution tasks unless the learned marginal is
itself degenerate. We therefore use the divergence analysis to justify the position-selection rule
and evaluate the deployed most-likely-value rule empirically rather than claiming equality of the
two output laws.

\begin{remark}[Entropy at inference and training]\label{rem:twouses}
The inference policy absorbs low-entropy positions because they have the smallest next-block
conditional-error bound. This differs from weighting a training loss, which changes how the
denoiser is learned. Theorem~\ref{thm:order} concerns only the inference decision and does not
assume that predictive entropy equals downstream information gain.
\end{remark}

\subsubsection{Derivation of the global lookahead bound}
Recall that $m=|\mathcal U(s)|$ and
$R_r(s)=\sum_{j\in\mathcal U(s)}h_j^r(s)$ for $r\in\{q,\theta\}$. Write
\[
G_j^r(s)=R_r(s)-\mathbb E_{z\sim P_r^j(\cdot\mid s)}R_r(s^{j,z}),
\qquad
D_{jj'}^r(s):=G_j^r(s)-G_{j'}^r(s),
\]
so that $R_r(s)$, which does not depend on the candidate, cancels exactly in $D_{jj'}^r$. The
selection made by Info-Gain depends on the scores only through the differences $D_{jj'}^\theta$, so
these are the quantities the bound must control.

For a single successor aggregate the assumed pointwise entropy error gives
$|R_\theta(s^{j,z})-R_q(s^{j,z})|\leq(m-1)\delta$, since $s^{j,z}$ leaves $m-1$ positions active.
Moreover $0\leq R_q(s^{j,z})\leq(m-1)\log|\mathcal S|$, so the total-variation assumption implies
\[
\left|
\mathbb E_{z\sim P_\theta^j}R_q(s^{j,z})-
\mathbb E_{z\sim P_q^j}R_q(s^{j,z})
\right|
\leq(m-1)\,\epsilon\log|\mathcal S|,
\]
where we used $|\mathbb E_{P}f-\mathbb E_{P'}f|\leq\TV(P,P')\,(\sup f-\inf f)$ with
$\TV$ in the $\tfrac12\ell_1$ convention of Section~\ref{sec:relorder}. Combining the two
displays, the error of the expected successor aggregate for a single candidate is at most
$\eta_m:=(m-1)(\delta+\epsilon\log|\mathcal S|)$.

\paragraph{Successors outside the oracle support.}
The expectation over $z\sim P_\theta^j(\cdot\mid s)$ can place mass on a value $z$ with
$P_q^j(z\mid s)=0$. The successor $s^{j,z}$ then has no oracle conditional, so $h_{j'}^q(s^{j,z})$
is undefined. We adopt the convention $h_{j'}^q(s^{j,z}):=\log|\mathcal S|$ for every active
$j'$ at such a successor, treating an oracle-inconsistent absorption as leaving the remaining
positions maximally uncertain, and we require the pointwise entropy condition $|h^\theta-h^q|\leq\delta$
only at successors of positive model probability with this extension in force. Any other bounded
extension leaves the total-variation step unchanged because it uses only the range
$[0,(m-1)\log|\mathcal S|]$ of $R_q$. If one prefers not to impose the entropy condition at
off-support successors at all, their total model mass is at most $\TV(P_\theta^j,P_q^j)\leq\epsilon$,
which adds at most $\epsilon(m-1)\log|\mathcal S|$ to the first display and changes $\eta_m$ to
$(m-1)(\delta+2\epsilon\log|\mathcal S|)$ without altering the form or reading of the lemma.

Since the current aggregate cancels,
\[
\left|D_{jj'}^\theta(s)-D_{jj'}^q(s)\right|
\leq
\left|\mathbb E_{P_\theta^j}R_\theta(s^{j,z})-\mathbb E_{P_q^j}R_q(s^{j,z})\right|
+
\left|\mathbb E_{P_\theta^{j'}}R_\theta(s^{j',z})-\mathbb E_{P_q^{j'}}R_q(s^{j',z})\right|
\leq2\eta_m.
\]
For the model and oracle maximizers, insert and subtract the model scores and note that the
differences again eliminate $R_r(s)$:
\[
G_{j^\star}^q-G_{\hat j}^q
=
D_{j^\star\hat j}^q
\leq
\underbrace{D_{j^\star\hat j}^\theta}_{\leq0}
+\left|D_{j^\star\hat j}^\theta-D_{j^\star\hat j}^q\right|
\leq2\eta_m.
\]
The analogous two-error argument for the local entropy minimizers gives
\[
h_{\hat j_{\mathrm{ent}}}^q(s)-\min_jh_j^q(s)\leq2\delta,
\]
which completes the proof of Lemma~\ref{lem:gain-amplification}. \hfill$\square$

\paragraph{Interpretation of the lookahead result.}
Three qualifications delimit what the lemma does and does not say. First, it bounds score
estimation, not the realized terminal loss: Info-Gain remains effective whenever $2\eta_m$ is small
relative to the oracle margin $D_{j^\star j'}^q$ between its best and second-best candidates, and the
decision-relevant diagnostic is therefore the ratio of the two rather than $\eta_m$ alone. Second,
because the bound is stated on score \emph{differences}, it is invariant to the normalization of the
aggregate: rescaling every candidate score by a common positive factor rescales $\eta_m$ and the
margin identically and changes neither the selected candidate nor the comparison with the local rule.
The window dependence that remains comes from the $m-1$ positions genuinely carried inside each
successor aggregate, not from the choice of units. Third, the additivity over those positions is
worst case. If the per-position entropy errors are not adversarially aligned, a concentration
argument would give $O(\sqrt{m-1})$ in place of $O(m-1)$, so the lemma should be read as an upper
envelope on how the lookahead window can amplify error rather than as the typical rate. The result isolates the
source of accumulation: global lookahead combines errors over the unresolved positions, whereas local
entropy selection estimates one position at the current state.

\subsubsection{Additional comparison with Info-Gain}\label{app:capability-crossover}
At a reachable state $s$, couple the entropy and Info-Gain policies so that they differ only in the
next absorbed position and share the continuation. Let $L_{\mathrm{local}}$ and $L_{\mathrm{IG}}$
be their final losses. Let $E_s$ be the event that Info-Gain selects an oracle-consistent position
and set $\kappa_\theta(s)=\mathbb P(E_s\mid s)$. Define
\[
B_s=\mathbb E[L_{\mathrm{local}}-L_{\mathrm{IG}}\mid E_s,s],\qquad
C_s=\mathbb E[L_{\mathrm{IG}}-L_{\mathrm{local}}\mid E_s^c,s].
\]
When correct lookahead helps and incorrect lookahead hurts, $B_s,C_s\geq0$ measure the corresponding
benefit and cost.

\begin{proposition}[When the local entropy policy beats Info-Gain]\label{thm:capability-crossover}
At any state satisfying the definitions above,
\[
\mathbb E[L_{\mathrm{IG}}-L_{\mathrm{local}}\mid s]
=(1-\kappa_\theta(s))C_s-\kappa_\theta(s)B_s.
\]
If $B_s+C_s>0$, the local entropy policy has no larger expected final loss if and only if
\begin{equation}\label{eq:crossover-threshold}
\kappa_\theta(s)\leq\frac{C_s}{B_s+C_s}.
\end{equation}
\end{proposition}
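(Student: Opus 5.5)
The identity follows from the law of total expectation applied to the loss difference $L_{\mathrm{IG}}-L_{\mathrm{local}}$, conditioned on $s$ and split over the partition $\{E_s,E_s^c\}$. Because the two policies are coupled to share the continuation and differ only in the next absorbed position, both losses live on one probability space, so their difference is a well-defined random variable; this is the only role of the coupling. I would assume integrability of both final losses, which holds whenever the loss is bounded, as for solve error or the number of incorrect cells. Then
\[
\mathbb E[L_{\mathrm{IG}}-L_{\mathrm{local}}\mid s]
=\kappa_\theta(s)\,\mathbb E[L_{\mathrm{IG}}-L_{\mathrm{local}}\mid E_s,s]
+(1-\kappa_\theta(s))\,\mathbb E[L_{\mathrm{IG}}-L_{\mathrm{local}}\mid E_s^c,s].
\]
The first conditional expectation equals $-B_s$ by the definition of $B_s$, and the second equals $C_s$. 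Substituting gives the stated identity.

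For the threshold, the local policy has no larger expected final loss exactly when the left side is nonnegative. This is equivalent to $(1-\kappa_\theta(s))C_s-\kappa_\theta(s)B_s\geq0$, that is, $C_s\geq\kappa_\theta(s)(B_s+C_s)$. Dividing by $B_s+C_s>0$ yields Eq.~\eqref{eq:crossover-threshold}, and every step is reversible, so the condition is both necessary and sufficient.

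The only real obstacle is degenerate conditioning. If $\kappa_\theta(s)\in\{0,1\}$, one of $B_s$ or $C_s$ conditions on a null event and is undefined. In that case I would adopt the convention that the undefined quantity is an arbitrary finite number. Its coefficient in the identity is zero, so the identity still holds.

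The threshold, however, then depends on that arbitrary value. I would therefore state the equivalence either for $\kappa_\theta(s)\in(0,1)$, or under the paper's standing reading $B_s,C_s\geq0$ with the null-event quantity set to $0$. With that choice, $\kappa_\theta=0$ gives the threshold $1\geq0$, which matches the identity value $C_s\geq0$. Likewise, $\kappa_\theta=1$ gives $1\leq0$, which fails unless $B_s=0$, consistent with $-B_s\leq0$.

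Nonnegativity of $B_s$ and $C_s$ is not needed for the identity itself. It matters only for interpreting the threshold as a value in $[0,1]$.
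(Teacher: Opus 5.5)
Your proof is correct and takes the same route as the paper: the law of total expectation over $\{E_s,E_s^c\}$ gives the identity, and requiring it to be nonnegative and dividing by $B_s+C_s>0$ gives the threshold. Your handling of $\kappa_\theta(s)\in\{0,1\}$ goes beyond the paper but is more cautious than needed. The step $(1-\kappa)C-\kappa B\geq0\iff\kappa\leq C/(B+C)$ is pure algebra valid for all real $B,C$ with $B+C>0$, so the equivalence holds under any finite convention for the undefined quantity, and you do not need to restrict to $\kappa_\theta(s)\in(0,1)$.
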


\begin{proof}
Conditioning on $E_s$ and $E_s^c$ gives the displayed identity. Requiring its right-hand side to
be nonnegative and rearranging gives Eq.~\eqref{eq:crossover-threshold}.
\end{proof}

This identity does not assert that local entropy always outperforms lookahead. It makes the
comparison depend on how often the lookahead choice is oracle-consistent and on the relative costs
of its correct and incorrect choices. These quantities can be estimated by paired continuations on
tasks with reference solutions, as illustrated by the paired continuation estimates below.

\paragraph{Measured comparison.}
On Sudoku-Extreme, paired continuations estimate the three quantities in
Proposition~\ref{thm:capability-crossover} and recover the observed winner at both denoiser sizes
(Table~\ref{tab:crossover}). This is a diagnostic of the stated identity, not evidence that model size itself defines the threshold. It is also a per-state comparison with a shared continuation: end-to-end on the same 21.3M denoiser the entropy policy still solves more puzzles than Info-Gain (0.964 against 0.947 at $n{=}1000$, $p{=}0.027$, Table~\ref{tab:scaleup}), so the per-state advantage of lookahead at states where it is oracle-consistent has not yet translated into a trajectory-level win at this capacity.

\begin{table}[t]
\centering
\small
\caption{Measured comparison on Sudoku-Extreme (600 paired reachable states per denoiser).
$\kappa_\theta$ is the probability that Info-Gain selects an oracle-consistent position and
$\kappa^\star=C_s/(B_s+C_s)$. The local entropy policy has no larger expected loss when
$\kappa_\theta\leq\kappa^\star$.}
\label{tab:crossover}
\begin{tabular}{lccccc}
\toprule
Denoiser & $\kappa_\theta$ & $\kappa^\star$ & $B_s$ & $C_s$ & better policy\\
\midrule
$6.4$M & $0.527$ & $0.698$ & $0.0043$ & $0.0100$ & entropy\\
$21.3$M  & $0.695$ & $0.397$ & $0.0038$ & $0.0025$ & Info-Gain\\
\bottomrule
\end{tabular}
\end{table}

\subsubsection{Calibration and miscalibration}\label{app:calib}
The following analysis provides an empirical diagnostic of the entropy-error relationship in
Definition~\ref{def:monotone} and measures the surrogate slack used by Theorem~\ref{thm:order}.
Per-cell error rises with the denoiser's predictive entropy, from 0.015 in the lowest-entropy bin
to 0.526 in the highest (Table~\ref{tab:calib}), and the empirical slope decreases from
the 6.4M to the 21.3M denoiser. The two reported constants move in opposite directions with
capacity, the slope falling from 0.45 to 0.37 while the surrogate gap $\varepsilon$ rises from
0.21 to 0.30, and the two measure different things. The slope describes how sharply error grows
with the model's own uncertainty, and a better-calibrated denoiser improves it. The surrogate gap
describes how well a \emph{static} score tracks the \emph{path-conditional} predictive entropy, and
a stronger denoiser exploits constraint propagation more aggressively, so its entropies move more
along the path and a fixed ranking tracks them less well. A capable denoiser therefore makes the
entropy-error relationship tighter and the fixed-score policy's job harder at the same time. The
fixed score nevertheless recovers only part of the adaptive policy's gain on the weaker model.

\begin{figure}[t]\centering
\includegraphics[width=\textwidth]{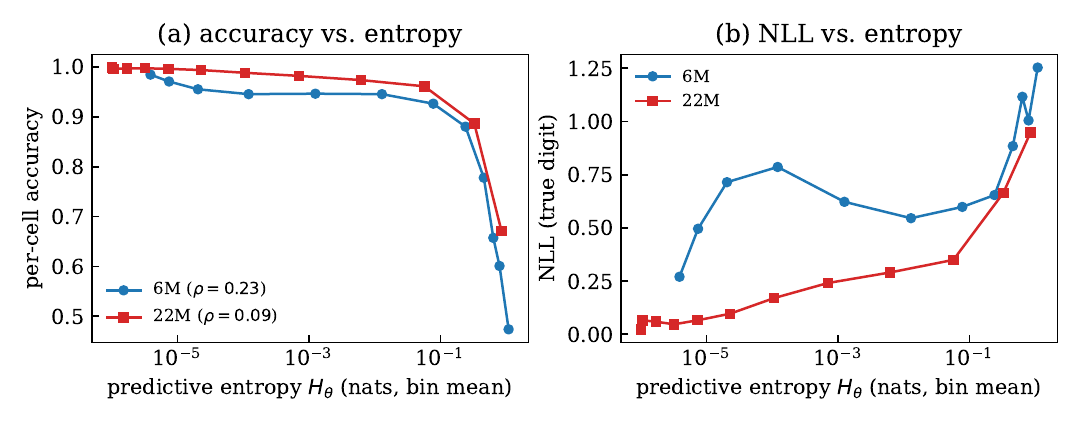}
\caption{\textbf{Entropy-accuracy correlation, both denoisers.} Same binned trajectory data as
Table~\ref{tab:calib} (all 12 equal-count bins, 6.4M and 21.3M), plotted against the model's
predictive entropy $H_{\theta}$ on a log axis. (a) Per-cell accuracy ($1{-}\mathrm{err}$) decreases
with entropy at both capacities, with Spearman $\rho(H,\mathrm{err})=0.23$ (6.4M) and 0.09 (21.3M)
(the low $\rho$ reflects that most mass sits in the flat, near-zero-entropy, near-ceiling regime
common to reachable Sudoku states, and the drop is concentrated in the top bins, consistent with the
nonlinear slope $L_{\phi}$ reported below). (b) NLL of the true digit rises with entropy at both
capacities. The 21.3M denoiser's NLL curve sits uniformly below the 6.4M curve, the calibration
improvement with capacity that motivates comparing the two models throughout this appendix.}
\label{fig:entropy-calib}
\end{figure}

\paragraph{Pointwise envelope.} Definition~\ref{def:monotone} requires a \emph{pointwise} bound
$\mathrm{KL}\le\phi(H_{\theta})$, not merely the bin-mean trend above. Fitting a nondecreasing
95th-percentile upper envelope to the full per-cell cloud (158,191 reachable partial configurations) gives an
empirical envelope slope $L_{\mathrm{env},0.95}\approx1.70$ (6.4M) and 1.34 (21.3M), about
3.6 to 3.8 times the bin-mean OLS slope. The robust calibration summary reports a small
\emph{confidently-wrong} fraction (0.99\% at 6.4M and 0.35\% at 21.3M),
with near-zero-entropy NLL reaching approximately 20.9 and 22.3 nats, respectively. The fitted inequality
with
$\phi(0)=0$ at the mean slope covers 95.5\% (6.4M) and 97.6\% (21.3M)
of sampled reachable states. The relaxed envelope uses the mean offset $\phi_0=0.57$ nats for 6.4M and 0.12 nats for 21.3M (Table~\ref{tab:phi-offsets}), under which $\mathrm{KL}\leq\phi_0+\phi(H_\theta)$ holds on 99.4\% and 99.7\% of sampled states. For any fixed position-independent offset, the additive contribution $b\phi_0$ preserves the entropy argmin at a fixed absorption count, but a mean offset does not establish a pointwise upper bound. Substituting the updated envelope slopes
into the diagnostic expression $2L\varepsilon M$ gives 39.8 and 45.2 nats, respectively,
compared with 10.5 and 12.4 using the bin-mean slopes. These fitted slopes and empirical
coverage rates do not establish a uniform bound on every reachable state.

\begin{table}[t]
\centering\small
\caption{Offset $\phi_0$ summaries for the relaxed envelope (nats). The mean column gives the offsets used to compute the reported coverage. The other columns describe the distribution's upper tail.}
\label{tab:phi-offsets}
\begin{tabular}{lrrrr}
\toprule
Denoiser & Mean & 95th percentile & 99th percentile & Maximum\\
\midrule
$6.4$M & 0.57 & 0.002 & 16.70 & 20.85\\
$21.3$M & 0.12 & 0.001 & 0.003 & 22.33\\
\bottomrule
\end{tabular}
\end{table}

\paragraph{Oracle versus model entropy.} To separate ``entropy is the ordering signal'' from ``\emph{the
model's} entropy is the signal,'' we compare absorbing by the exact oracle entropy $H_q$ against absorbing by the model's $H_{\theta}$. Here $H_q$ is the entropy
of the completion-count distribution over viable digits, computed on a multi-solution corpus where
$H_q>0$ is non-degenerate. Over 120 multi-solution
puzzles (median 23 completions), $H_{\theta}$ correlates strongly with $H_q$ ($r=0.88$ at 6.4M,
0.90 at 21.3M), and ordering by the model's offline $H_{\theta}$ matches or exceeds ordering by the
exact $H_q$ (valid-completion rate 0.958 vs.\ 0.900 at 6.4M, and 0.983 vs.\ 0.983 at 21.3M), while
adaptively recomputing $H_{\theta}$ reaches 1.000. In this experiment, model entropy matches or exceeds the
tested oracle-entropy ordering. The improvement from adaptive recomputation suggests that updating the
score as the active state changes matters beyond the initial ranking.

\begin{table}[t]\centering\small
\caption{Entropy-error diagnostic (Definition~\ref{def:monotone}) on Nikoli ($n{=}100$): per-cell error and
NLL of the true digit, binned by the model's predictive entropy $H_{\theta}$ over the whole sampling
trajectory (6.4M denoiser, representative equal-count bins). Per-cell error rises monotonically
with entropy. The bottom row reports the empirical error-entropy slope and surrogate error
$\varepsilon$ for both denoisers.}
\label{tab:calib}
\begin{tabular}{lcccccc}
\toprule
$H_{\theta}$ (bin mean) & 0.00 & 0.24 & 0.46 & 0.64 & 0.79 & 1.08\\
\midrule
per-cell error & 0.015 & 0.120 & 0.222 & 0.343 & 0.399 & 0.526\\
NLL (true digit) & 0.27 & 0.65 & 0.88 & 1.12 & 1.01 & 1.25\\
\midrule
\bottomrule
\end{tabular}\par\smallskip
\parbox{\linewidth}{\footnotesize 6.4M: empirical slope $=0.45$, $\varepsilon{=}0.21$ \quad\textbar\quad 21.3M: empirical slope $=0.37$, $\varepsilon{=}0.30$\par 95th-pct envelope slope: $L_{\mathrm{env},0.95}{\approx}1.70$/1.34 (6.4M/21.3M). Reported confidently-wrong fraction 0.99\%/0.35\%. Strict-$\phi$ compliance 95.5\%/97.6\%, relaxed-$\phi$ compliance 99.4\%/99.7\%.}

\end{table}

\begin{table}[t]\centering\small
\caption{Two attempts to make a fixed external priority beat adaptive entropy-guided absorption by
inducing miscalibration. \emph{internal} is adaptive entropy and \emph{external} is the best of the fixed
external score and logical forcedness. Top: density
shift (train dense, test sparse). Bottom: wrong constraint (train Latin, test Sudoku) over a
difficulty sweep. The external order wins only in the floored hard band (a non-significant
${\sim}2$-puzzle margin). Wherever the denoiser is capable, confidence wins.}
\label{tab:miscal}
\begin{tabular}{llccc}
\toprule
probe & regime (denoiser) & internal & external & gap\\
\midrule
\multicolumn{5}{l}{\textit{(a) density shift, test givens $26$--$31$, $n{=}185$}}\\
& calibrated (ECE $0.008$)     & 0.973 & 0.746 & $+0.227$\\
& overconfident (ECE $0.074$)  & 0.827 & 0.546 & $+0.281$\\
\midrule
\multicolumn{5}{l}{\textit{(b) wrong constraint: Latin-trained (box-blind), test Sudoku}}\\
& hard (givens $32$--$40$, floored)  & 0.109 & \textbf{0.124} & $-0.015^{*}$\\
& medium (givens $40$--$48$)         & 0.637 & 0.630 & $+0.007$\\
& easy (givens $44$--$52$)           & 0.827 & 0.810 & $+0.017$\\
\bottomrule
\end{tabular}\\[2pt]
{\footnotesize $^{*}$floored regime (${\sim}0.11$ solve). ${\sim}2$-puzzle margin at $n{=}137$, not significant.}
\end{table}

              % C: details for Section 3 (C.1 formulation, C.2 sampler, C.3 theory)
\section{Experimental settings and hyperparameters}\label{app:setup}
%==============================================================================
This appendix collects the experimental protocols and hyperparameters for \S\ref{sec:exp}, following the order of the main experiments.

\begin{table}[t]
\centering\footnotesize
\caption{Protocol map for Sudoku results. All samplers at a given capacity share the same checkpoint. Each ID identifies a separate reported experiment, not an interchangeable estimate. $K$ is the number of sampling steps. It need not equal the number of singleton absorptions. Solve accuracies and policy labels are those of Table~\ref{tab:sudoku}.}
\label{tab:protocol-map}
\begin{tabular}{@{}>{\raggedright\arraybackslash}p{.07\linewidth}>{\raggedright\arraybackslash}p{.24\linewidth}>{\raggedright\arraybackslash}p{.32\linewidth}>{\raggedright\arraybackslash}p{.25\linewidth}@{}}
\toprule
ID & Checkpoint / location & Decoder or absorption protocol & Cohort / sampling steps\\
\midrule
M & Shared task-specific $6.4$M; Table~\ref{tab:sudoku}, all rows; Fig.~\ref{fig:overjumpy} & Kinetic-optimal updates; absorbing rows add a cosine absorption schedule & Matched across all rows: same puzzles, Generated/Nikoli/Extreme $n=500/100/700$, five seeds, $K=64$; Fig.~\ref{fig:overjumpy} traces the Nikoli subset\\
A & Task-specific $6.4$M; Table~\ref{tab:temp-entropy} & Ordering-score ablation; cosine absorption schedule & Temperature sweep, seeds $0,1$, $K=64$\\
L & Shared model at each capacity; Table~\ref{tab:order-ladder} & Singleton absorptions & Each set $n=100$, five seeds\\
G & $6.4$M and $21.3$M; Table~\ref{tab:scaleup} & Separate singleton-ordering study & Generated $n=1000$, $22$--$34$ givens\\
\bottomrule
\end{tabular}
\end{table}

\paragraph{Sampler protocol.} At each grid point, \textbf{\name{}} recomputes predictive entropy
from the current sampler state and absorbs the lowest-entropy active positions under the
absorption count. Active positions then follow the backbone's unchanged update, while
absorbed positions remain fixed under Eq.~\eqref{eq:absorbing-velocity}. Top K-margin,
Info-Gain, and arbitrary-order policies use the same absorption schedule, value update, and absorption rule and differ only
in the score used to select the block. The absorption schedule is task specific: on the FUDOKI understanding,
text-to-image, and mathematics harnesses the cosine schedule of Appendix~\ref{app:ledflow-details}
sets $b_k$ at each of the $K$ grid points. The main Sudoku comparisons also use a cosine absorption schedule over $K=64$ steps, with coordinatewise kinetic-optimal updates in both absorbing and non-absorbing arms (Appendix~\ref{app:flow-active-gate}). Explicit singleton-absorption diagnostics evaluate once per generated position (Table~\ref{tab:efficiency}). Their NFE and vanishing joint term do not describe the runs with cosine absorption schedules. Appendix~\ref{app:ledflow-details} gives the complete update
order and matched-policy controls.

\paragraph{Absorption-policy baselines.}\label{app:ordering_baselines}
Top K-margin selects the $b_k$ active positions with largest $p_{(1)}-p_{(2)}$, where $p_{(1)}\geq p_{(2)}$ are the two largest posterior probabilities, and uses the same argmax absorption rule as \name{}. Info-Gain uses a lookahead window of eight, the best-performing setting in Figure~\ref{fig:theory-aligned-analysis}(c), with its additional evaluation cost reported in Appendix~\ref{app:results-efficiency}.

\paragraph{Understanding.}\label{app:understanding_protocol} We evaluate with VLMEvalKit~\citep{duan2024vlmevalkit} on the full
standard splits, following the established protocol~\citep{wang2025fudoki,wan2026corrected}. The
base model is frozen FUDOKI throughout and only the decode-time sampler changes. The non-absorbing
samplers, guidance, Top K-margin, and
\name{} all use one denoiser evaluation per grid point.
Info-Gain additionally spends lookahead evaluations (Appendix~\ref{app:results-efficiency}). Split sizes, metrics, and the 95\% intervals appear in
Table~\ref{tab:understanding}. The reported Euler MMBench score uses dev-EN, whereas our evaluation uses the full standard split.

\paragraph{Generation.} GenEval~\citep{ghosh2023geneval} is evaluated on the full suite of 553
prompts with 4 images per prompt, scored with the standard Mask2Former and CLIP pipeline, with the
standard $K{=}64$ sampling steps (denoiser evaluations). FUDOKI's native default is 128 and scores saturate
by $K{\sim}32$, consistent with prior reports~\citep{wan2026corrected}. We use one fixed \name{}
configuration for the full suite and all reported category scores.

\paragraph{Reasoning.}\label{app:sudoku_protocol} All Sudoku results at a given model capacity use the same checkpoint. In Table~\ref{tab:sudoku}, both non-absorbing and absorbing rows use the shared task-specific 6.4M Sudoku denoiser (Appendix~\ref{app:absorbing-background}) at $K=64$ with coordinatewise kinetic-optimal updates. Non-absorbing rows keep every generated position active throughout. Absorbing rows apply the same update at active positions and use a cosine absorption schedule $m_k$ to determine how many positions to absorb. These absorbing policies differ only in their ordering score. All rows are evaluated on the same puzzles and seeds (protocol M in Table~\ref{tab:protocol-map}).
The capacity diagnostics use a 6.4M denoiser of width 256 with 8 layers and 8 heads, trained for 60k steps. Their capacity-sensitivity comparison against a 21.3M denoiser, width 384 with 12 layers and
12 heads trained for 120k steps on a mixed easy-to-hard corpus, is reported separately in
\S\ref{app:sudoku-capacity}. This is a robustness check rather than a controlled scaling study,
because capacity, number of training steps, and training distribution change together.  The three Sudoku sets
are held-out synthetic puzzles, Nikoli~\citep{seely2025sudoku}, and
Sudoku-Extreme~\citep{wang2025hrm}. The additional constraint structures are $9{\times}9$ Latin
squares, 3-coloring of random graphs, and molecular infilling on MOSES
SMILES~\citep{polykovskiy2020moses}. The Sudoku denoisers are trained on the Kaggle million-puzzle
set after deduplication by puzzle string and solution grid against the evaluation sets.

\paragraph{Reasoning evaluation details.} All Sudoku rows of Table~\ref{tab:sudoku}, absorbing and non-absorbing, follow the matched protocol of Section~\ref{sec:exp}: the same checkpoint, the same 500 generated, 100 Nikoli, and 700 Sudoku-Extreme puzzles, five seeds, and $K{=}64$. Figure~\ref{fig:overjumpy} traces this cohort's Nikoli subset and reports its solve accuracies. The capacity and ordering-ladder diagnostics retain their own $n{=}100$, five-seed protocol. Zero seed variance under deterministic decoding does not imply zero puzzle-sampling uncertainty. Table~\ref{tab:scaleup} reports puzzle-level confidence intervals and paired tests on a separate $n{=}1000$ cohort. Its generated subset is not the generated set in Table~\ref{tab:sudoku}, so their absolute accuracies should not be used interchangeably.

\paragraph{Generality.} For the additional constraint structures, we train a fresh 4.8M denoiser
for $9{\times}9$ Latin squares and a 1.2M graph-attention discrete flow denoiser for random
3-coloring. Molecular infilling uses a character-level model on MOSES SMILES and $n{=}2000$
examples. The pretrained-model experiments change only the native sampler while leaving model
weights fixed. On GSM8K, LLaDA-8B-Instruct~\citep{nie2025llada} uses $n{=}200$ and Dream-7B
~\citep{ye2025dream} uses $n{=}1000$, both with 128 sampling steps, generation length 256, and
argmax token choice. LLaDA additionally uses block length 32. MMaDA-8B~\citep{yang2025mmada}
uses $n{=}500$, 256 steps, and chain-of-thought prompting. BBH logical deduction pools $n{=}750$
three-, five-, and seven-object puzzles. For open multimodal mathematics, LLaDA-V uses seed-averaged
default-order baselines on MathVerse ($n{=}900\times4$) and MathVista ($n{=}1000\times2$), whereas MMaDA
uses a single default-order run on $n{=}900$ and $n{=}1000$, respectively.

%==============================================================================
        % D: experimental settings and hyperparameters
\section{Additional experimental analysis}\label{app:extra-exp}\label{app:extra}
%==============================================================================
This appendix provides the additional analyses and full result tables for \S\ref{sec:exp}, in the same order as the main experiments.

\subsection{Understanding}\label{app:results-understanding}
\paragraph{Uncertainty of the main-text gains.}\label{app:understanding_intervals}
The gain that clears its reported approximate 95\% interval is +1.6 on POPE ($\pm1.3$). The remaining margins, including +2.3 on MM-Vet and +1.2 on MMMU, are smaller than their intervals and we do not read them as individually significant. Table~\ref{tab:understanding} reports the intervals and their approximation limits.

\begin{table}[t]\centering\small
\caption{Understanding suite on the full standard splits via FUDOKI's VLMEvalKit. The table includes
the six benchmarks of Table~\ref{tab:mm-understanding}. The last column is an approximate 95\% interval
on the \emph{native-sampler-vs-\name{} difference} ($1.96\sqrt{2p(1-p)/N}$, evaluated at the native-sampler rate).
Entropy-guided absorption improves every benchmark in the main-text suite, and the POPE margin clears the
reported interval. The remaining margins are smaller than their intervals. This places short-answer
understanding at the low-sensitivity end of the observed task spectrum, consistent with a smaller
conditional term than on the constrained tasks in \S\ref{sec:exp-sudoku}.}
\label{tab:understanding}
\resizebox{\columnwidth}{!}{%
\begin{tabular}{lrccccc}
\toprule
benchmark & $N$ & native sampler & \name{} (ours) & metric & spread & $95\%$ CI (diff)\\
\midrule
POPE      & 5127  & 87.4   & 89.0   & acc & $+1.6$  & $\pm1.3$\\
MME       & 2374  & 1494.5 & 1504.6 & P+R & $+10.1$ & ---\\
MMBench   & 4377  & 73.7   & 74.6   & Overall & $+0.9$ & $\pm1.8$\\
GQA     & 12578 & 57.0   & 58.0   & acc & $+1.0$ & $\pm1.2$\\
MM-Vet    & 218   & 38.2   & 40.5   & GPT-judge & $+2.3$ & $\pm9.1$\\
MMMU      & 1050  & 36.2   & 37.4   & acc & $+1.2$ & $\pm4.1$\\
\bottomrule
\end{tabular}}
\\[2pt]
{\footnotesize The MME row is a summed perception-plus-reasoning score rather
than an accuracy, so the binomial interval does not apply. These intervals do not account for paired predictions. The binomial approximation is also not a calibrated interval for graded GPT-judge scores such as MM-Vet. Paired example-level resampling is needed for inferential comparisons.}
\end{table}

\begin{table}[t]\centering\small
\caption{Understanding suite in context: reported unified understanding-and-generation models
alongside our frozen-FUDOKI decode-time samplers (condensed in Table~\ref{tab:mm-understanding} of
the main text). Numbers are compiled from prior work~\citep{discreteguidance2026}. Understanding is a
measurement-validity control, not a SOTA claim. Higher is better, ``---'' not reported.}
\label{tab:mm-understanding-full}
\resizebox{\columnwidth}{!}{%
\begin{tabular}{lcccccc}
\toprule
model & POPE & MME$^{\dagger}$ & MMBench$^{\ddagger}$ & GQA & MMMU & MM-Vet$^{\S}$\\
\midrule
\multicolumn{7}{l}{\textit{Reported unified understanding-and-generation models (context)}}\\
LWM            & 75.2 & ---    & ---  & 44.8 & ---  & 9.6\\
Chameleon      & ---  & ---    & ---  & ---  & 22.4 & 8.3\\
Show-o-256     & 73.8 & 948.4  & ---  & 48.7 & 25.1 & ---\\
Show-o-512     & 80.0 & 1097.2 & ---  & 58.0 & 26.7 & ---\\
D-DiT          & 84.0 & 1124.7 & ---  & 59.2 & ---  & ---\\
VILA-U         & 85.8 & 1401.8 & ---  & 60.8 & ---  & 33.5\\
ILLUME         & 88.5 & 1445.3 & 65.1 & ---  & 38.2 & 37.0\\
TokenFlow-XL   & 86.8 & 1545.9 & 68.9 & 62.7 & 38.7 & 40.7\\
Janus          & 87.0 & 1338.0 & 69.4 & 59.1 & 30.5 & 34.3\\
Janus-Pro-1B   & 86.2 & 1444.0 & \textbf{75.5} & 59.3 & 36.3 & 39.8\\
FUDOKI (reported)~\citep{wang2025fudoki} & 86.1 & 1485.4 & 73.9 & 57.6 & 34.3 & 38.0\\
FUDOKI $+$ guidance~\citep{discreteguidance2026} & 86.8 & 1492.7 & 74.2 & 58.2 & 35.4 & 38.6\\
\midrule
\multicolumn{7}{l}{\textit{Frozen FUDOKI, decode-time samplers (this setting)}}\\
native sampler (FUDOKI) & 87.38 & 1494.5 & 73.7 & 57.0 & 36.2 & 38.2\\
\name{} (ours)          & 89.04 & 1504.6 & 74.6 & 58.0 & 37.4 & 40.5\\
\bottomrule
\end{tabular}}
\\[2pt]
{\footnotesize $^{\dagger}$our MME is MME-P (perception only), the metric the reported models use. $^{\ddagger}$our
MMBench is the full standard split, harder than the dev-EN split behind the published FUDOKI 73.9.
$^{\S}$MM-Vet uses a GPT judge, and absolute values shift with the judge version, so we re-grade with
the standard 0 to 1 partial-credit protocol (VLMEvalKit), which gives scores in the reported range.}
\end{table}

\subsection{Generation}\label{app:results-generation}
\subsubsection{Text-to-image generation}\label{sec:exp-t2i}
We use this setting as a control on soft structure, where the absorption policy should matter less
than on tightly constrained outputs. We evaluate the full GenEval suite (553 prompts, four images
per prompt, Mask2Former and CLIP scoring) on frozen FUDOKI with the standard $K{=}64$ sampling steps.
\name{} reaches 0.781 Overall against the native sampler's 0.753. Its improvements span the
category scores, including color attribute (0.723 vs.\ 0.663) and position (0.715 vs.\
0.680).

\paragraph{Paired bootstrap intervals on GenEval.} We resample the 553 prompts with replacement,
stratified within each tag and keeping a prompt's four images together, with $10{,}000$ replicates.
The Overall gain of \name{} over the native sampler is +0.0282 with a 95\% interval of
$[+0.009,+0.048]$ and $p=0.003$, so the interval excludes zero. The per-tag differences are
color attribute +0.060 $[+0.005,+0.117]$, position +0.035 $[-0.018,+0.090]$, colors +0.029
$[-0.003,+0.061]$, single object +0.019 $[-0.016,+0.056]$, counting +0.019 $[-0.037,+0.078]$,
and two objects +0.008 $[-0.028,+0.043]$. Only color attribute is individually significant, so the
Overall gain reflects a consistent direction across tags rather than one decisive category. This gain
is small relative to those on constrained reasoning tasks, consistent with text-to-image prompts
imposing weaker structure. The comparison with exact guidance (0.7800) cannot be tested this way,
because that value is a published number~\citep{discreteguidance2026} rather than a per-prompt run of
ours, so we report it as a numerical comparison only.

\paragraph{GenEval in the masked discrete flow literature.} Placing \name{} against the
recent masked discrete flow literature on the same FUDOKI base (Table~\ref{tab:t2i-landscape}) separates two
orthogonal axes. Among \emph{frozen-model samplers} (our setting) the whole family sits within a
band of about 0.05: \name{} (0.781) is the strongest entry, ahead of the native sampler
(0.753) and of the corrected-sampler family (Euler 0.754, time-/location-corrected
0.76 to 0.77~\cite{wan2026corrected}), though the margin over the corrected samplers is comparable
to the benchmark's own resolution. The higher
scores in the literature come from the orthogonal \emph{training} axis (guidance/RLHF
0.78~\cite{discreteguidance2026}, rate-aware policy optimization 0.93~\cite{wan2026dflowgrpo}),
which modify the model rather than the sampler and are complementary to our decode-time contribution.
Table~\ref{tab:mm-geneval-full} places the frozen-sampler comparison against generation-only and
unified models.

\begin{table}[t]\centering\footnotesize
\setlength{\tabcolsep}{4.2pt}
\caption{GenEval in context: generation-only and unified understanding-and-generation models
alongside the frozen-FUDOKI decode-time samplers of Table~\ref{tab:mm-geneval} (main text). The
external rows modify or replace the model and are orthogonal to the decode-time comparison. External
rows as compiled by their cited source. Higher is better, ``---'' not reported.}
\label{tab:mm-geneval-full}
\begin{tabular}{lccccccc}
\toprule
method & Single & Two & Count & Colors & Pos. & Attr. & Overall$\uparrow$\\
\midrule
\multicolumn{8}{l}{\textit{Generation-only models (context)}}\\
SDXL~\citep{wan2026dflowgrpo}          & 0.98 & 0.74 & 0.39 & 0.85 & 0.15 & 0.23 & 0.55\\
DALL-E\,3~\citep{wan2026dflowgrpo}     & 0.96 & 0.87 & 0.47 & 0.83 & 0.43 & 0.45 & 0.67\\
SD3-Medium~\citep{discreteguidance2026}    & 0.99 & 0.94 & 0.72 & 0.89 & 0.33 & 0.60 & 0.74\\
FLUX.1-dev~\citep{wan2026dflowgrpo}    & 0.98 & 0.93 & 0.75 & 0.93 & 0.68 & 0.65 & 0.82\\
\midrule
\multicolumn{8}{l}{\textit{Unified understanding-and-generation models (context)}}\\
Chameleon~\citep{discreteguidance2026}     & ---  & ---  & ---  & ---  & ---  & ---  & 0.39\\
UniDisc~\citep{discreteguidance2026}       & 0.92 & 0.47 & 0.15 & 0.67 & 0.13 & 0.19 & 0.42\\
LWM~\citep{discreteguidance2026}           & 0.93 & 0.41 & 0.46 & 0.79 & 0.09 & 0.15 & 0.47\\
SEED-X~\citep{discreteguidance2026}        & 0.97 & 0.58 & 0.26 & 0.80 & 0.19 & 0.14 & 0.49\\
Emu3-Gen~\citep{discreteguidance2026}      & 0.98 & 0.71 & 0.34 & 0.81 & 0.17 & 0.21 & 0.54\\
Show-o~\citep{discreteguidance2026}        & 0.95 & 0.52 & 0.49 & 0.82 & 0.11 & 0.28 & 0.53\\
\quad $+$ Mask-GRPO~\citep{wan2026dflowgrpo} & 0.99 & 0.90 & 0.69 & 0.85 & 0.35 & 0.59 & 0.73\\
TokenFlow-XL~\citep{discreteguidance2026}  & 0.95 & 0.60 & 0.41 & 0.81 & 0.16 & 0.24 & 0.55\\
ILLUME~\citep{discreteguidance2026}        & 0.99 & 0.86 & 0.45 & 0.71 & 0.39 & 0.28 & 0.61\\
Janus~\citep{discreteguidance2026}         & 0.97 & 0.68 & 0.30 & 0.84 & 0.46 & 0.42 & 0.61\\
D-DiT~\citep{discreteguidance2026}         & 0.97 & 0.80 & 0.54 & 0.76 & 0.32 & 0.50 & 0.65\\
Janus-Pro-1B~\citep{discreteguidance2026}  & 0.98 & 0.82 & 0.51 & 0.89 & 0.65 & 0.56 & 0.73\\
Janus-Pro-7B~\citep{wan2026dflowgrpo}  & 0.99 & 0.89 & 0.59 & 0.90 & 0.79 & 0.66 & 0.80\\
MMaDA~\citep{wan2026dflowgrpo}         & 0.96 & 0.60 & 0.45 & 0.81 & 0.14 & 0.25 & 0.56\\
\midrule
\multicolumn{8}{l}{\textit{Frozen FUDOKI with decode-time samplers (Table~\ref{tab:mm-geneval})}}\\
Euler@64 (base FUDOKI)~\citep{wan2026corrected} & 0.9625 & 0.8384 & 0.4875 & 0.8833 & 0.7200 & 0.6300 & 0.7536\\
\textbf{\name{}} (ours) & 0.9563 & 0.8712 & 0.5219 & 0.9016 & 0.7150 & 0.7225 & \textbf{0.7814}\\
\bottomrule
\end{tabular}
\end{table}

\subsubsection{Comparison with related generation methods}
\begin{table}[t]\centering\small
\caption{GenEval Overall on FUDOKI in the masked discrete flow literature, separating decode-time
\emph{samplers} (frozen model, our setting) from \emph{training}-based methods.
Among frozen-model samplers the spread is ${\approx}0.05$, and an external FUDOKI Euler@64 result
reproduces our native sampler to 0.001. \name{} is the strongest decode-time entry
in this comparison, although its margin over the corrected-sampler family is within the benchmark's
resolution. Training-based methods modify the model and address a separate axis.}
\label{tab:t2i-landscape}
\begin{tabular}{lc}
\toprule
method & GenEval Overall\\
\midrule
\multicolumn{2}{l}{\textit{Frozen-model samplers (this setting)}}\\
native sampler (Euler@64, ours) & 0.753\\
Euler@64~\citep{wan2026corrected} & 0.754\\
time-/location-corrected~\citep{wan2026corrected} & $0.765$\\
\textbf{\name{}} (ours) & \textbf{0.781}\\
\midrule
\multicolumn{2}{l}{\textit{Training-based (context)}}\\
FUDOKI base ($+$CFG)~\citep{discreteguidance2026} & 0.77\\
$+$ guidance / RLHF~\citep{discreteguidance2026} & 0.78\\
$+$ dFlowGRPO~\citep{wan2026dflowgrpo} & 0.93\\
\bottomrule
\end{tabular}
\end{table}

\subsection{Trajectory analysis under matched settings}\label{app:trajectory}
Figure~\ref{fig:overjumpy} compares native uniform flow with \name{} on 100 Nikoli puzzles at $K{=}64$: puzzle solve accuracy rises from 0.410 to 0.845, while correct-to-wrong reversions fall from 0.094 to 0.026 of generated cells. These are the motivation measurements used in the main text.
On text-to-image generation the two directions must be read on the counterfactual prompts. On
in-distribution GenEval prompts the native flow recovers more than it destroys late in sampling
(0.156 wrong-to-correct against 0.052 correct-to-wrong), which is what memorized configurations
predict when the prompt distribution is the one the backbone was trained on. On the counterfactual
prompts of Table~\ref{tab:ood-prompts}, where that route is unavailable, the ordering reverses and
correct-to-wrong dominates (0.125 against 0.037). Table~\ref{tab:trajectory} and Figure~\ref{fig:traj-t2i} report the text-to-image trajectory analysis.

\begin{table}[t]
\centering
\small
\caption{\textbf{Text-to-image trajectory measurements.} FUDOKI text-to-image (GenEval prompts,
$K{=}64$). Token statistics are per image token, with 95\% bootstrap intervals over images for the
difference, and correctness flips are the fraction of samples whose GenEval verdict changes between
$t{=}0.75$ and the endpoint, on in-distribution GenEval prompts and on the 16 counterfactual
out-of-distribution prompts of Table~\ref{tab:ood-prompts}. The Sudoku counterpart of these
measurements is Figure~\ref{fig:overjumpy}(c).
$^{\star}$GenEval prompts are in-distribution for the pretrained backbone, so late revisions there can
recover memorized configurations. The counterfactual prompts isolate the same measurement without
that confound, and under them correct-to-wrong exceeds wrong-to-correct.}
\label{tab:trajectory}

\setlength{\tabcolsep}{4pt}
\resizebox{\textwidth}{!}{%
\begin{tabular}{@{}lccc@{}}
\toprule
Measurement & Euler & \name{} & Diff.\ [$95\%$ CI] \\
\midrule
Token changes per position
    & $4.14$ & $2.33$ & $+1.81$ [$1.72,1.88$] \\
\quad excluding the final step
    & $3.44$ & $2.30$ & $+1.13$ [$1.04,1.21$] \\
Positions changed more than once
    & $0.97$ & $0.73$ & $+0.24$ [$0.22,0.25$] \\
Changes occurring after $t{=}0.75$
    & $0.45$ & $0.11$ & $+0.34$ [$0.34,0.35$] \\
Mean time of last change
    & $0.96$ & $0.54$ & $+0.42$ [$0.41,0.43$] \\
\midrule
GenEval accuracy
    & $0.753$ & $0.781$ & $-0.028$ [$-0.037,-0.020$] \\
\midrule
In-dist.\ correct to wrong$^{\star}$
    & $0.052$ & $0.010$ & $+0.042$ \\
In-dist.\ wrong to correct$^{\star}$
    & $0.156$ & $0.000$ & $+0.156$ \\
OOD correct to wrong
    & $0.125$ & $0.025$ & $+0.100$ \\
OOD wrong to correct
    & $0.037$ & $0.013$ & $+0.024$ \\
\bottomrule
\end{tabular}%
}
\end{table}

\begin{figure}[t]\centering
\includegraphics[width=0.8\columnwidth]{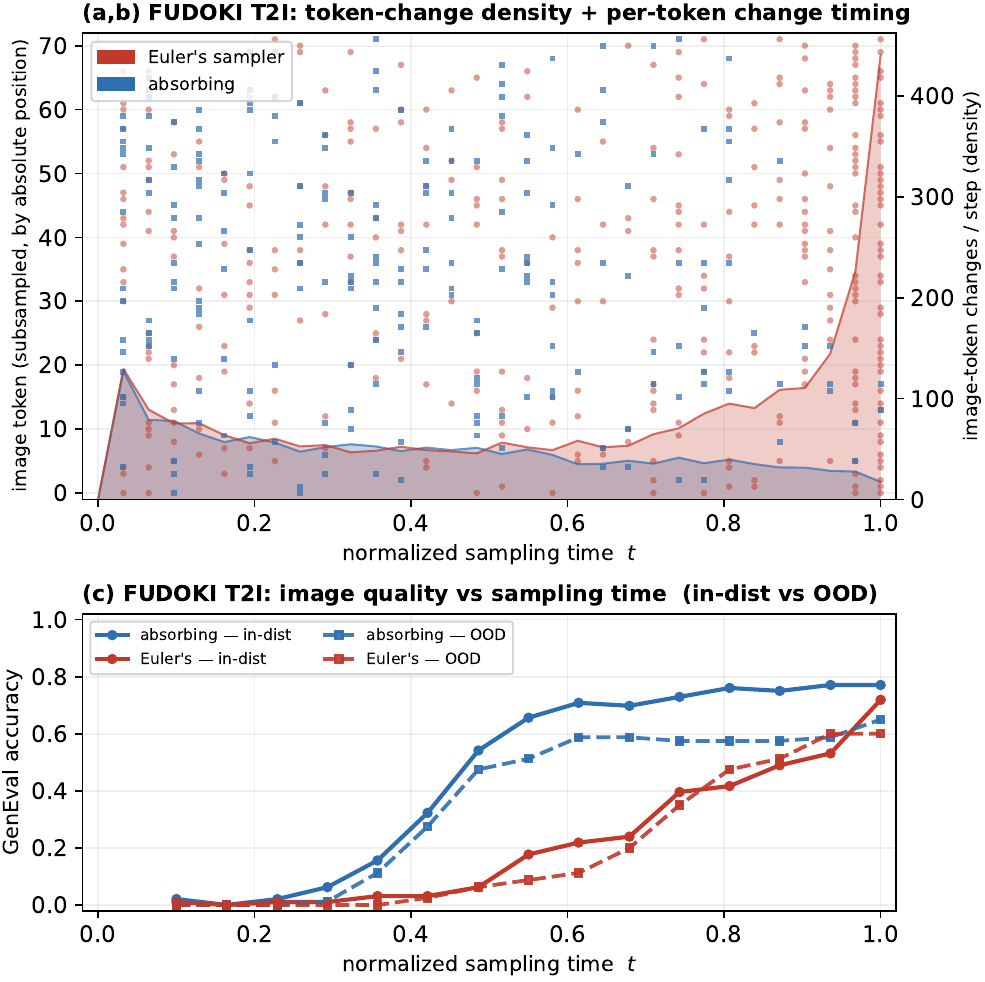}
\caption{\textbf{FUDOKI text-to-image trajectories, native uniform flow versus \name{}.} (a,b)
Image-token change density over sampling time: the native flow accumulates most of its changes
after $t{=}0.75$, while \name{} front-loads absorption and its density decays. (c) GenEval accuracy
of the image decoded from the intermediate state, on in-distribution GenEval prompts (solid) and on
the 16 counterfactual prompts of Table~\ref{tab:ood-prompts} (dashed). \name{} reaches its final quality by $t{\approx}0.6$ and holds
it. The native flow reaches comparable quality only at the endpoint, after the late burst of
changes in (a,b).}
\label{fig:traj-t2i}
\end{figure}

\begin{table}[t]\centering\small
\caption{\textbf{Out-of-distribution prompts for the text-to-image trajectory analysis.} Sixteen
counterfactual prompts, four per GenEval tag, written so that the requested attribute, count, pairing,
or spatial relation is unlikely under the training distribution. They are scored with the standard
GenEval detector pipeline and are the dashed curves in Figure~\ref{fig:traj-t2i} and the OOD rows of
Table~\ref{tab:trajectory}.}
\label{tab:ood-prompts}
\setlength{\tabcolsep}{6pt}
\begin{tabular}{@{}lp{0.78\linewidth}@{}}
\toprule
Tag & Prompts\\
\midrule
Colors & a photo of a green cow; a photo of a purple giraffe; a photo of a blue banana; a photo of a pink elephant\\
Counting & a photo of eight apples; a photo of six cats; a photo of seven birds; a photo of five clocks\\
Two objects & a photo of a toothbrush and a giraffe; a photo of a pizza and a fire hydrant; a photo of a laptop and a cow; a photo of an elephant and a spoon\\
Position & a photo of a car above a bird; a photo of a boat above a bed; a photo of a bicycle above an elephant; a photo of a cup above a train\\
\bottomrule
\end{tabular}
\end{table}
\paragraph{Ordering diagnostic.} Table~\ref{tab:matched-gate} compares absorption policies under the uniform discrete flow. All four arms use the same
denoiser, puzzles, grid ($K{=}64$), and argmax value rule, and the three absorbing arms share one
code path. The first two use argmax values and differ in selection rule, while the third samples the absorbed value. Arbitrary-order absorption raises cell accuracy from 0.413 to 0.723 (paired
puzzle-level 95\% interval $[0.285,0.336]$) and removes every correct-to-wrong flip. Ordering
raises it further to 0.935 (interval $[0.166,0.256]$) and raises solve accuracy from 0.410 under
the arbitrary order to 0.845. The mechanism is visible in the bad-absorption column, the fraction
of absorption events that write an incorrect value, which falls from 28.0\% under the arbitrary
order to 6.6\% under the entropy order. Absorbing a sampled value instead of the argmax changes
little (0.923 against 0.935 cell accuracy, 0.821 against 0.845 solve), so the gain comes from
freezing plus ordering rather than from the value rule.

\begin{table}[t]\centering\small
\caption{\textbf{Absorption-policy diagnostic on Nikoli} ($n{=}100$, 6.4M denoiser, $K{=}64$, seeds
0 to 4). Bad-absorption is the fraction of absorption events whose absorbed value is wrong. The arbitrary-order and
entropy-order solve accuracies are the values of Table~\ref{tab:sudoku}. The no-absorption arm never
fixes a position, so no puzzle completes. Paired
puzzle-level bootstrap intervals for the cell-accuracy differences: no absorption to arbitrary
+0.310 $[0.285,0.336]$. Arbitrary to entropy +0.212 $[0.166,0.256]$.}
\label{tab:matched-gate}
\setlength{\tabcolsep}{5pt}
\resizebox{\linewidth}{!}{%
\begin{tabular}{@{}lcccccc@{}}
\toprule
Arm & Cell acc. & Solve & Correct$\to$wrong & Changes/cell & Late-change frac. & Bad-absorption\\
\midrule
Uniform flow (no absorption) & $0.413$ & $0.000$ & $0.281$ & $1.87$ & $0.60$ & ---\\
\cdashline{1-7}[2pt/1pt]
\quad + arbitrary absorption order & $0.723$ & $0.410$ & $0.000$ & $1.00$ & $0.37$ & $0.280$\\
\quad + entropy order (\name{}) & $\mathbf{0.935}$ & $\mathbf{0.845}$ & $0.000$ & $1.00$ & $0.37$ & $\mathbf{0.066}$\\
\quad + entropy order, sampled absorption & $0.923$ & $0.821$ & $0.000$ & $1.00$ & $0.37$ & $0.078$\\
\bottomrule
\end{tabular}}
\end{table}

\subsection{Absorption-order comparison and selected-position analysis}\label{app:flow-active-gate}
Table~\ref{tab:flow-active-gate} reports the current Sudoku solve accuracies and policy labels from Table~\ref{tab:sudoku}. All methods at this capacity use the shared task-specific 6.4M Sudoku checkpoint. The non-absorbing methods use kinetic-optimal updates. Absorbing methods additionally use a cosine absorption schedule. All rows are evaluated on the same puzzles and seeds (protocol M in Appendix~\ref{app:setup}). The Euler and Time-corrected rows coincide on every puzzle column. Appendix~\ref{app:absorbing-collapse} explains why the two integrators agree to first order on these tasks and quantifies the residual. The ordering diagnostic in Table~\ref{tab:matched-gate} and the earlier trajectory diagnostics below are separate experiments.

\begin{table}[t]\centering\small
\caption{\textbf{Sudoku solve accuracy by sampler.} These values and policy labels reproduce Table~\ref{tab:sudoku}. All rows share the same checkpoint, puzzles, seeds, and $K{=}64$ (protocol M). The identical Euler and Time-corrected rows are a discretization effect, explained in Appendix~\ref{app:absorbing-collapse}.}
\label{tab:flow-active-gate}
\begin{tabular}{lccc}
\toprule
Sampler & Generated & Nikoli & Sudoku-Extreme\\
\midrule
Euler & 0.610 & 0.410 & 0.078\\
FreeCorrection & 0.630 & 0.400 & 0.126\\
Time-corrected & 0.610 & 0.410 & 0.078\\
Location-corrected & 0.588 & 0.360 & 0.096\\
Top K-margin & 0.820 & 0.796 & 0.271\\
Info-Gain & 0.804 & 0.780 & 0.231\\
\name{} & 0.865 & 0.845 & 0.269\\
\bottomrule
\end{tabular}
\end{table}

\paragraph{What the current comparison shows.}
On generated and Nikoli puzzles, entropy absorption attains higher solve accuracy than the other displayed policies. On Sudoku-Extreme, margin is numerically higher (0.271 versus 0.269). These are descriptive comparisons on a matched cohort of 100 puzzles per set. The paired significance tests for the ordering policies are reported at $n{=}1000$ in Table~\ref{tab:scaleup}.

\paragraph{Statistical evidence.}
On seed-0 per-puzzle arrays, exact McNemar tests favor entropy over the native sampler and over arbitrary absorption on all three sets ($p\leq3\times10^{-8}$). This diagnostic does not separate entropy from the other confidence scores. The paired comparison against probability margin and maximum probability is the $n{=}1000$ test in Table~\ref{tab:scaleup}, where the three are statistically tied and entropy separates from Info-Gain (-0.066, $p{=}3\times10^{-9}$).

\paragraph{Decoder and schedule.}
Both non-absorbing and absorbing baselines use coordinatewise kinetic-optimal updates. Here a singleton flow transition changes one coordinate. It does not specify the number of positions absorbed at a grid point. The non-absorbing baseline keeps all generated positions active. Absorbing policies apply the same kinetic-optimal update to positions that remain active, while a cosine absorption schedule over $K=64$ grid steps determines the size of each irreversible absorption block. A step can absorb zero, one, or several positions, so this protocol is not the $b_k=1$ absorption ablation and need not have zero joint dependence error. For policies without additional lookahead, the grid protocol uses one denoiser evaluation per step, giving $\mathrm{NFE}=K$. Figure~\ref{fig:overjumpy} reports separate trajectory and endpoint runs under kinetic-optimal updates.

\paragraph{Selected-position reliability.}
As a reliability diagnostic rather than a validation of a specific theoretical envelope, Table~\ref{tab:d9-calib}
bins the entropy gate's absorbed positions by entropy at absorption and reports the empirical absorption-error
rate per bin. We do not fit or test a KL envelope here. Under the flow-active decode almost all absorptions are
made at very low entropy (99.6\% / 99.4\% of absorptions on generated/Nikoli), where the error rate is
5.9\% / 8.6\%. Error rises with entropy over the sparse higher-entropy bins. On Sudoku-Extreme the
denoiser is over-confident: even the lowest-entropy absorptions are wrong 44.8\% of the time. The first
erroneous absorption is correspondingly rare and late on easy puzzles (generated: 14\% of puzzles have
any wrong absorption, median first-wrong time $t{=}0.55$, and Nikoli: 16\%, $t{=}0.31$) but common and early on
hard puzzles (Sudoku-Extreme: 75\%, $t{=}0.23$). Low absorption error where the policy operates is
consistent with, but does not by itself establish, the conditional guarantee's envelope assumption. The
hard-puzzle regime shows confidently wrong absorption that any such assumption must exclude.

\begin{table}[t]\centering\small
\caption{\textbf{Absorption error at absorbed positions} (\name{} gate, seed 0, flow-active decode): empirical
error rate binned by entropy at absorption, with the number of absorbed positions per bin. Low entropy
predicts low error on generated/Nikoli. The relation fails under hard-puzzle over-confidence
(Sudoku-Extreme). ``---'' marks empty bins.}
\label{tab:d9-calib}
\setlength{\tabcolsep}{6pt}
\begin{tabular}{@{}lcccccc@{}}
\toprule
& \multicolumn{2}{c}{Generated} & \multicolumn{2}{c}{Nikoli} & \multicolumn{2}{c}{Sudoku-Extreme}\\
\cmidrule(lr){2-3}\cmidrule(lr){4-5}\cmidrule(lr){6-7}
Entropy at absorption & err & $n$ & err & $n$ & err & $n$\\
\midrule
$[0.00,0.05)$ & $0.059$ & $26496$ & $0.086$ & $5540$ & $0.448$ & $36240$\\
$[0.05,0.10)$ & $0.700$ & $20$ & $0.167$ & $6$ & $0.550$ & $866$\\
$[0.10,0.20)$ & $0.750$ & $16$ & $0.333$ & $6$ & $0.570$ & $732$\\
$[0.20,0.40)$ & $0.750$ & $16$ & $0.778$ & $9$ & $0.537$ & $555$\\
$[0.40,0.80)$ & $0.600$ & $35$ & $0.800$ & $10$ & $0.622$ & $585$\\
$[0.80,1.60)$ & $0.778$ & $9$ & $0.000$ & $1$ & $0.714$ & $70$\\
$[1.60,10.0)$ & $1.000$ & $1$ & --- & $0$ & $1.000$ & $6$\\
\midrule
Overall & $0.061$ & $26593$ & $0.088$ & $5572$ & $0.457$ & $39054$\\
\bottomrule
\end{tabular}
\end{table}

\subsection{Reasoning}\label{app:results-reasoning}
Two qualifications apply to the main-text reasoning table. Info-Gain leads on molecular infilling, so local entropy is not uniformly preferable, and the current results do not isolate why that ordering reverses. Average ranks are descriptive: small mathematics differences, including near-floor GSM8K accuracy, do not establish meaningful wins.
\subsubsection{Sudoku as a transparent oracle}\label{sec:sudoku}
Sudoku provides a transparent oracle because quantities that are usually inaccessible can be
computed directly, in particular the terms in the absorption-policy analysis
(Theorem~\ref{thm:order}). The joint term is the exception: on a unique-solution puzzle the oracle
conditional is a point mass, so the conditional total correlation of Lemma~\ref{lem:tc} is
identically zero for every block, and unique-solution Sudoku cannot be used to diagnose the joint
term. For that term we use the multi-solution variant described below and the graph-coloring task
in Appendix~\ref{app:exp-latin}. Its vocabulary and sequence length are bounded by
$|\mathcal S|=9$ digits and $M\leq81$ cells, much smaller than the subword vocabulary and open-ended
length of free-text generation~\citep{tao2024scaling}. More importantly, a standard Sudoku puzzle has
a unique solution~\citep{tjusila2023cluesgivebilevelformulation}, so
$q(a_{1}\mid\mathcal C)=\mathbf 1[a_{1}=a_{1}^{\star}]$ and $H(q)=0$. The oracle denoiser is
therefore invariant to the absorption priority by \Cref{sec:limit-order}. Any priority effect exhibited by a learned
denoiser arises from approximation error and is captured by the conditional term
$\mathcal E_{\mathrm{cond}}$ in \Cref{prop:kl-decomp}. For every reachable partial fill $a_S$ with
at least one valid completion, the true conditional distribution is computable from completion counts:
\[
q(a_{1}^{j}=v\mid a_{S})=\frac{N(a_{S},j,v)}{\sum_{v'}N(a_{S},j,v')},
\qquad
H_{q}(a_{1}^{j}\mid a_{S})=-\sum_{v=1}^{9}q(a_{1}^{j}=v\mid a_{S})\log q(a_{1}^{j}=v\mid a_{S}),
\]
where $N(a_S,j,v)$ counts valid completions of $a_S$ with cell $j$ fixed to $v$. Constraint
propagation and backtracking compute these counts in milliseconds for standard $9\times9$ puzzles.
When no valid completion exists, the conditional is undefined, so oracle diagnostics use only
consistent prefixes. On the unique-solution RRN~\citep{palm2018rrn},
Sudoku-Extreme~\citep{wang2025hrm}, and million-puzzle Kaggle corpora, the counts satisfy
$N\in\{0,1\}$ and $H_q$ vanishes along every consistent path. The multi-solution
variant~\citep{nandwani2021neural} provides nondegenerate oracle entropy when needed.

\paragraph{Structured-task-specific baselines.}
Table~\ref{tab:structured-baselines} retains two controls omitted from the cross-task main table.
Uniformization is evaluated here as an exact CTMC control on the puzzle tasks. We did
not adapt it to the mathematics harness. The fixed-score absorption policy
similarly ranks positions by a static task score before selective absorption, so it does not directly
apply when absorption is not used and positions can be rewritten. We also did not define a corresponding
static relevance score for character-level SMILES infilling.

\begin{table}[t]\centering\small
\caption{Structured-task results for uniformization and fixed-score absorption. Higher is better.
``---'' denotes the SMILES setting without a static relevance score.}
\label{tab:structured-baselines}
\setlength{\tabcolsep}{5pt}
\begin{tabular}{lcccccc}
\toprule
Sampler & Generated & Nikoli & Extreme & Latin & Graph & Molec.\\
\midrule
Uniformization & 0.562 & 0.240 & 0.074 & 0.694 & 1.000 & 0.831\\
Fixed-score absorption & 0.702 & 0.420 & 0.158 & 0.946 & 1.000 & ---\\
\bottomrule
\end{tabular}
\end{table}

\subsubsection{Reasoning controls}
Two controls separate the absorption policy from the other choices a sampler makes.
Table~\ref{tab:scaleup} repeats the Sudoku comparison at $n{=}1000$ with puzzle-level bootstrap
intervals and paired McNemar tests, so that the ordering gap is read against sampling uncertainty
rather than seed variance, and stratifies it by difficulty. Its lower block adds the competing
ordering policies on the same puzzles: the probability-margin and maximum-probability policies are
not significantly different from entropy in these paired tests ($p{=}0.23$ and 0.80). This does not establish statistical equivalence, and the scores need not induce the same ordering of positions. Info-Gain trails by 0.066 ($p{=}3{\times}10^{-9}$) at roughly six times the wall-clock on this task. On the stronger 21.3M denoiser the gap narrows to 0.017 but remains significant ($p{=}0.027$), consistent with the crossover analysis of Appendix~\ref{app:capability-crossover}: lookahead gains on the entropy policy as the denoiser improves, but has not overtaken it end-to-end at this capacity. The separation
that matters is therefore between confidence-ordered single-position absorption and either
lookahead ordering or no ordering at all, which is the comparison Lemma~\ref{lem:gain-amplification}
and Figure~\ref{fig:theory-aligned-analysis}(c) speak to. Table~\ref{tab:control} then crosses the
absorption priority with the value-selection rule: holding value selection fixed and varying only
the priority moves accuracy far more than holding the priority fixed and switching between argmax
and sampled values. This attributes the gain to position selection rather than to the rule used to
assign an absorbed value. Table~\ref{tab:sudoku-capacity} extends the same manipulation to arbitrary
and high-entropy-first policies.

\begin{table}[t]\centering\small
\caption{Sudoku at scale ($n{=}1000$ generated puzzles, 6.4M, \#givens $\in[22,34]$): solve accuracy
with 95\% puzzle-level bootstrap CIs and paired McNemar tests. Upper block: paired against
Euler. The adaptive and fixed-score absorption policies are both significant, and logical forcedness
underperforms the model's entropy policy (an imperfect denoiser tracks calibration, not
forcedness). Lower block: the competing single-position ordering policies on the same 1000
puzzles, paired against the entropy policy. The three confidence-based scores (entropy, probability
margin, maximum probability) are statistically tied, whereas Info-Gain (window 8) is
significantly below them and unordered Euler far below. Wall-clock per puzzle on one H100 is
6.9\,ms (entropy), 6.2 (margin), 2.4 (max-probability), 54.5 (Info-Gain), and 8.6 (Euler, re-run at 0.427, within the interval of the upper-block value). On the 21.3M denoiser the entropy policy remains ahead of Info-Gain (0.964 against 0.947, $p{=}0.027$) at 9.5 against 149.3\,ms. Right block: accuracy stratified
by difficulty, where the ordering gap grows as puzzles harden. We note that this is not Theorem~\ref{thm:order}(ii) directly, which concerns the limit in
which the denoiser approaches the oracle. The two agree only under the additional premise that
harder puzzles leave the denoiser further from the oracle, which is plausible here but is not
established by the theorem.}
\label{tab:scaleup}
\resizebox{\linewidth}{!}{%
\begin{tabular}{lccc|ccc}
\toprule
 & & & vs Euler & \multicolumn{3}{c}{by \#givens (determined)}\\
absorption policy & solve-acc & $95\%$ CI & (McNemar) & $22$--$25$ & $26$--$29$ & $30$--$34$\\
\midrule
entropy guided (adaptive) & \textbf{0.922} & $[.905,.938]$ & $+.486$, $p{<}10^{-12}$ & 0.837 & 0.949 & 0.967\\
logical-forcedness (MRV) & 0.621 & $[.591,.651]$ & $+.185$, $p{<}10^{-12}$ & --- & --- & ---\\
fixed external score & 0.527 & $[.496,.558]$ & $+.091$, $p{=}9{\times}10^{-7}$ & 0.288 & 0.493 & 0.775\\
arbitrary-order absorption & 0.446 & $[.415,.477]$ & --- & --- & --- & ---\\
Euler@64 & 0.436 & $[.406,.467]$ & --- & 0.190 & 0.358 & 0.740\\
\midrule
 & & & vs entropy & & & \\
probability margin & 0.915 & $[.897,.932]$ & $-.007$, $p{=}0.23$ & 0.824 & 0.935 & 0.973\\
maximum probability & 0.920 & $[.903,.936]$ & $-.002$, $p{=}0.80$ & 0.834 & 0.943 & 0.970\\
Info-Gain (window $8$) & 0.856 & $[.834,.877]$ & $-.066$, $p{=}3{\times}10^{-9}$ & 0.732 & 0.868 & 0.952\\
\midrule
\multicolumn{7}{@{}l}{\textit{21.3M denoiser, same puzzles}}\\
entropy guided (adaptive) & \textbf{0.964} & $[.952,.975]$ & --- & 0.929 & 0.970 & 0.988\\
Info-Gain (window $8$) & 0.947 & $[.933,.961]$ & $-.017$, $p{=}0.027$ & 0.892 & 0.957 & 0.985\\
\bottomrule
\end{tabular}}
\end{table}

\begin{table}[t]\centering\small
\caption{Decision-rule control (Nikoli, 6.4M, $n{=}100$, 5 seeds): absorption \emph{priority}
crossed with value \emph{selection}, all single-position with per-step reevaluation. Priority dominates at
both absorption rules. The argmax-vs-sample rule is a minor effect. The unordered\,+\,argmax cell is the
arbitrary-order policy follows the uniform first-hitting choice of~\cite{zheng2025masked}. The
entropy-guided and Euler entries are the Table~\ref{tab:sudoku} values for this cohort. Puzzle-level
CIs and paired McNemar tests are in Table~\ref{tab:scaleup}.}
\label{tab:control}
\begin{tabular}{lcc}
\toprule
absorption priority & argmax value & sampled value\\
\midrule
entropy guided & $\mathbf{0.845}$ & $0.874\pm0.008$\\
arbitrary order (uniform choice, \cite{zheng2025masked}) & $0.380\pm0.031$ & $0.280\pm0.026$\\
\midrule
\multicolumn{3}{l}{\footnotesize Euler@64 (non-absorbing flow, argmax): $0.410$}\\
\bottomrule
\end{tabular}
\end{table}

\subsubsection{Sensitivity to Sudoku denoiser capacity}\label{app:sudoku-capacity}
Table~\ref{tab:sudoku-capacity} checks whether the ordering result persists under the stronger
Sudoku denoiser described in Appendix~\ref{app:setup}. Because capacity, number of training steps, and
training distribution change together, this is a robustness sensitivity rather than a controlled
parameter-scaling study. These ordering diagnostics use the shared checkpoint at each capacity under the uniform discrete flow. Within this diagnostic, entropy-guided absorption is the strongest sampler on both Nikoli and Sudoku-Extreme under both training regimes.

\paragraph{The absorption-policy ladder.} The middle block of Table~\ref{tab:sudoku-capacity} directly
tests the priority policy: the denoiser, argmax value rule, absorption count, and number of sampling steps
are held fixed. Selecting positions by increasing predictive entropy, in an arbitrary order, and by
decreasing predictive entropy produces a monotone ladder at both capacities: $0.900/0.340/0.110$ on Nikoli at
6.4M and $0.970/0.820/0.550$ at 21.3M. Three features of the ladder are worth noting. The
reversal is the worst configuration in the table, below uniformization and every corrected sampler,
which separates the absorption-priority effect from the discretization effect. Process-level samplers occupy a
narrow range, whereas changing only the order spans 0.79 at 6.4M. Arbitrary order is close to
Euler (0.340 vs.\ 0.410 on Nikoli), further indicating that absorption priority drives the difference.
The ladder
compresses as the denoiser improves, from a span of 0.79 at 6.4M to 0.42 at 21.3M, which is
qualitatively consistent with the oracle limit in Theorem~\ref{thm:order}(ii). Because the models
also differ in training steps and data mixture, this comparison does not attribute the compression
to parameter count alone.
Table~\ref{tab:order-ladder} places the fixed-score and arbitrary-order absorption baselines
on the same continuum and extends the comparison to the generated set.

\begin{table}[t]
\centering
\small
\caption{\textbf{Sensitivity to Sudoku denoiser capacity} (solve accuracy). Parameter counts are
exact up to one decimal place. The 6.4M entries of the \name{} and Euler rows are the values of Table~\ref{tab:sudoku}. The models also differ in training steps and data mixture, so the
comparison demonstrates robustness across denoisers rather than an isolated size effect. The middle
block isolates absorption priority by holding the denoiser and argmax value rule fixed. Accuracy
decreases monotonically from entropy-guided to arbitrary and then high-entropy-first absorption.}
\label{tab:sudoku-capacity}
\begin{tabular}{lcccc}
\toprule
& \multicolumn{2}{c}{Nikoli$\uparrow$} & \multicolumn{2}{c}{Sudoku-Extreme$\uparrow$}\\
\cmidrule(lr){2-3}\cmidrule(lr){4-5}
Sampler & 6.4M & 21.3M & 6.4M & 21.3M\\
\midrule
Uniformization & 0.240 & 0.630 & 0.074 & 0.194\\
Euler & \underline{0.410} & \underline{0.820} & 0.078 & 0.224\\
Fixed external score & \textit{0.420} & \textit{0.870} & \textit{0.158} & \textit{0.312}\\
FreeCorrection & 0.400 & 0.780 & \underline{0.126} & \underline{0.270}\\
Time-corrected & \underline{0.410} & \underline{0.820} & 0.078 & 0.227\\
Location-corrected & 0.360 & 0.790 & 0.096 & 0.246\\
\midrule
\multicolumn{5}{l}{\textit{Absorption-policy ablation (same denoiser and value rule, priority varied alone)}}\\
arbitrary absorption & 0.340 & \underline{0.820} & 0.104 & 0.256\\
high-entropy-first absorption & 0.110 & 0.550 & 0.051 & 0.117\\
\midrule
\textbf{\name{}} (ours) & \textbf{0.845} & \textbf{0.970} & \textbf{0.269} & \textbf{0.514}\\
\bottomrule
\end{tabular}
\end{table}

\begin{table}[t]\centering\small
\caption{\textbf{The absorption-order ladder.} Solve accuracy for adaptive low-entropy, offline-score, arbitrary, and high-entropy orders. Absorbing policies share the denoiser and argmax rule within each capacity; the non-absorbing Euler row is a separate reference. These results test position ordering under the uniform discrete flow. The larger spread for the weaker denoiser is an empirical observation, not a prediction of Theorem~\ref{thm:order} without further assumptions.}
\label{tab:order-ladder}
\setlength{\tabcolsep}{5pt}
\resizebox{\columnwidth}{!}{%
\begin{tabular}{lcccccc}
\toprule
\multirow{2}{*}{Absorption order} & \multicolumn{2}{c}{Nikoli$\uparrow$} & \multicolumn{2}{c}{Sudoku-Extreme$\uparrow$} & \multicolumn{2}{c}{Generated$\uparrow$}\\
\cmidrule(lr){2-3}\cmidrule(lr){4-5}\cmidrule(lr){6-7}
& $6.4$M & $21.3$M & $6.4$M & $21.3$M & $6.4$M & $21.3$M\\
\midrule
Low-entropy-first (ours) & \textbf{0.845} & \textbf{0.970} & \textbf{0.269} & \textbf{0.514} & \textbf{0.865} & \textbf{0.996}\\
Offline $\sigma_{\mathrm{rel}}$ (deployable) & 0.420 & 0.870 & 0.143 & 0.301 & 0.840 & 0.960\\
Euler @$64$ (flow) & 0.410 & 0.820 & 0.078 & 0.224 & 0.610 & 0.924\\
Arbitrary single-position & 0.340 & 0.820 & 0.104 & 0.256 & 0.756 & 0.948\\
High-entropy-first (reversal) & 0.110 & 0.550 & 0.051 & 0.117 & 0.580 & 0.860\\
\bottomrule
\end{tabular}}
\\[3pt]
{\footnotesize Within each capacity, the absorbing policies share the denoiser and per-cell argmax rule, and unabsorbed positions continue under the uniform-flow update. The checkpoint is shared with the corresponding capacity in the main experiments, and the $6.4$M entries for the low-entropy and Euler rows are the values of Table~\ref{tab:sudoku}. The Euler row is a reference baseline, not an absorption-order variant.}
\end{table}

\subsubsection{Why Euler and Time-corrected nearly coincide on the puzzle tasks}\label{app:absorbing-collapse}
Table~\ref{tab:sudoku} reports the same accuracy for Euler and the time-corrected sampler on all six
puzzle columns, while the three mathematics columns differ. Both rows run the same uniform discrete
flow as Section~\ref{sec:prelim}, with no selective absorption. The near-coincidence follows from
the discretization rather than from a reporting artifact.

The two integrators differ only in how they discretize the schedule hazard within a grid interval.
On the grid $t_{k-1}<t_{k}$ with $\Delta t=t_{k}-t_{k-1}$, the per-step transition probability of a
position whose current token differs from its predicted target is
\[
p^{\mathrm{euler}}_{k}=1-\exp\!\Big(\!-\frac{\Delta t}{1-t_{k-1}}\Big),
\qquad
p^{\mathrm{tc}}_{k}=1-\frac{1-t_{k}}{1-t_{k-1}}=\frac{\Delta t}{1-t_{k-1}},
\]
under the linear time parameterization used on the puzzle tasks. Both expressions discretize the
same hazard $\Delta t/(1-t_{k-1})$, one exponentially and one linearly. They agree to first order in
$\Delta t$ and differ at $O(\Delta t^{2})$.

On the puzzle tasks the vocabulary contains nine digits and every puzzle has a unique solution, so
once the denoiser's predicted target at a position stabilizes, both integrators move that position to
the same digit and the $O(\Delta t^{2})$ discrepancy changes only \emph{when} within the grid the
move occurs, not \emph{which} digit is written. Under common random numbers the two samplers
therefore produce the same final grid except when the discrepancy places a transition on opposite
sides of a step boundary, which changes the context seen by the denoiser at the next step and can
alter the solved grid on a small fraction of puzzles.

The measurements match this account. Across our sampler sweeps the two agree exactly on roughly half
of the (benchmark, $K$) pairs and differ by 0.001 to 0.006 on the rest, with no systematic sign:
on Nikoli at $K{=}32$ and $K{=}64$ they are identical (0.730 and 0.820 at 21.3M), while on
Sudoku-Extreme at 21.3M they separate slightly at every $K$ (0.224 vs.\ 0.227 at $K{=}64$,
$n{=}700$). The residual is therefore a discretization effect, an order of
magnitude below the +0.29 that the absorption policy contributes on the same benchmark, and it is
smaller than the reporting resolution of the Sudoku splits in Table~\ref{tab:sudoku}.

In the mathematics columns (MathVista, MathVerse, GSM8K) the Euler and corrected rows run
FUDOKI's native uniform flow over the full subword vocabulary with selective absorption disabled.
The \name{}, Top K-margin, and Info-Gain rows in the same columns apply the cosine absorption
schedule of Appendix~\ref{app:ledflow-details} on that native text flow and differ only in the
ranking score. With a large vocabulary and open-ended answers, the predicted target changes more
often between steps, the two updates cease to be near-equivalent, and the gaps in
Table~\ref{tab:sudoku} (0.254 vs.\ 0.241 on MathVista, 0.026 vs.\ 0.022 on GSM8K) are the
genuine difference between the exponential and the linearized step. The contrast between the two
regimes is itself evidence for the paper's claim: on constrained puzzles, discretization
corrections move accuracy by at most a few tenths of a point, while the absorption policy moves it
materially.

\subsubsection{Can an external priority beat confidence? Two induced-miscalibration probes}\label{sec:exp-miscal}
On Sudoku our adaptive entropy policy is determined by the model's internal confidence, so we ask
whether a fixed external priority can ever beat it. This comparison is outside the local optimality
claim of Theorem~\ref{thm:order}, but tests whether an external score can help when confidence is
miscalibrated. We induce that regime in two ways (Table~\ref{tab:miscal}).
\textbf{(a) Density shift} (train dense givens 44 to 56, test sparse 26 to 31): the model is
verifiably overconfident (ECE rises from 0.008 to 0.074) but its confidence priority beats the
external priority by a \emph{wider} margin than on the calibrated model. \textbf{(b) Wrong constraint}
(train box-blind Latin squares, test Sudoku): a sign flip appears \emph{only} in the hardest band
where the box-blind model is floored (a non-significant ${\sim}2$-puzzle margin). Wherever the model
is capable, confidence wins. In these probes, absorption-policy quality follows the confidence
\emph{ranking} more closely than calibration or absolute correctness. Models capable of solving the
task rank cells well enough for their internal priority to win, whereas models with poor rankings rarely
solve the puzzle. Neither manipulation produces a capable but systematically misranked denoiser.
The same pattern appears when logical forcedness trails confidence (Table~\ref{tab:scaleup}). We
therefore make no accuracy claim for an external priority over internal confidence.

\subsubsection{Temperature scaling of the ordering entropy}\label{app:temp-entropy}
A related question is whether the entropy used for ordering should first be calibrated. We rescale
the logits by a temperature $T$ before computing the entropy, $H_\theta^j\!\left(\mathrm{softmax}
(\ell^j/T)\right)$, so that $T$ changes only the \emph{order} in which positions are absorbed. The
absorbed value remains the argmax of the unscaled posterior. We report the change in solve accuracy
relative to the default $T{=}1$ within a single sweep, so the comparison is internal to this
ablation and the absolute values of Table~\ref{tab:sudoku} remain the reported ones.

Solve accuracy is essentially flat in $T$ (Table~\ref{tab:temp-entropy}). On generated and Nikoli
puzzles the whole sweep moves the solve rate by at most 0.02 in either direction. On
Sudoku-Extreme, sharpening helps marginally, +0.007 at $T{=}0.5$, but the effect is about five
puzzles out of 700 and is non-monotone, since $T{=}2$ returns to +0.002. Temperature mostly preserves the relative ranking of positions, so a flat response is the
expected outcome. We read it as evidence that the ordering is robust to this knob rather than as
evidence that the denoiser is calibrated, which the probes above address directly.

\begin{table}[t]\centering\small
\caption{\textbf{Temperature scaling of the ordering entropy} (6.4M denoiser, $K{=}64$).
$T$ rescales the logits used for the ordering score only. The absorbed value is the
argmax of the unscaled posterior. Entries are changes in solve accuracy relative to the default
$T{=}1$ of Table~\ref{tab:sudoku}, measured within one sweep, so no entry restates an absolute
accuracy for a configuration reported elsewhere.}
\label{tab:temp-entropy}
\begin{tabular}{@{}lccc@{}}
\toprule
$T$ & Generated & Nikoli & Sudoku-Extreme\\
\midrule
$0.5$ & $+0.005$ & $-0.010$ & $\mathbf{+0.007}$\\
$0.7$ & $+0.005$ & $-0.010$ & $+0.005$\\
$1.0$ (default) & $0.000$ & $0.000$ & $0.000$\\
$1.5$ & $\mathbf{+0.010}$ & $-0.020$ & $-0.011$\\
$2.0$ & $+0.005$ & $-0.010$ & $+0.002$\\
\bottomrule
\end{tabular}
\end{table}

%==============================================================================

\subsection{Entropy-guided absorption and generality}\label{app:results-generality}
The transfer comparison in Figure~\ref{fig:theory-aligned-analysis}(b) replaces each model's default generation order with the \name{} order through that model's own sampler, leaving its weights unchanged. Several transfer differences are not significant, and MMaDA MathVista changes by -0.001.
This subsection complements Figure~\ref{fig:theory-aligned-analysis}(a,b) with results across
additional constraint structures and pretrained model families. The comparisons retain the same
absorption count while varying the priority policy, isolating whether the low-entropy order and
its transfer persist beyond the primary Sudoku setting.

\paragraph{Ordering-analysis cohort.}\label{app:ordering_cohort}
Figure~\ref{fig:theory-aligned-analysis}(a) reports puzzle solve accuracy on $n=1600$ puzzles, pooled over the two denoiser capacities of \S\ref{app:sudoku-capacity} and over the 100 Nikoli and 700 Sudoku-Extreme puzzles, with seeds averaged per puzzle and sets weighted by their size. Because the pool mixes two capacities and two difficulty levels, its accuracies are lower than the per-set Nikoli values of Table~\ref{tab:sudoku} and should not be compared with them directly. Intervals are puzzle-level bootstrap percentiles. This pooled comparison orders policies from adaptive low entropy through arbitrary and non-absorbing Euler to high entropy.

\subsubsection{Additional constraint structures: Latin squares, graph coloring, molecular validity}\label{app:exp-latin}
To check that the absorption-policy effect is a property of \emph{constraint structure} and not of Sudoku
specifically, we repeat it on $9{\times}9$ Latin squares. The structure-dependence reproduces cleanly:
entropy-guided absorption reaches 0.998 against 0.752 arbitrary-order absorption and 0.690 non-absorbing Euler flow (+0.31), and the
external logical-forcedness order again underperforms confidence (0.948 vs.\ 0.998). The decisive
order benefit is therefore not Sudoku-specific.

\paragraph{Off the grid: graph coloring and molecular validity.} On random-graph 3-coloring, the paper's \emph{other} separated
effect dominates: on hard instances the single-position absorption samplers all reach ${\sim}1.0$ while the non-absorbing
Euler flow floors at 0.45 to 0.63 (a +0.37 to +0.55 total-correlation gap, Lemma~\ref{lem:tc}), while
the priority among single-position absorption policies saturates (the near-oracle regime of
Theorem~\ref{thm:order}(ii)). On masked infilling of drug-like molecules from MOSES
SMILES~\citep{polykovskiy2020moses},
entropy-guided absorption is the most valid and the high-entropy-first policy the least, with the gap over Euler \emph{growing}
with the masked fraction (at mask 0.7: $0.896>0.843>0.828>0.767$, +0.068 over Euler,
$z{\approx}4$). Across these domains, the dominant effect is consistent with the corresponding
theoretical term: the conditional term on unique-solution grids with an imperfect denoiser, and the
joint term on flexible coloring.

\subsubsection{Transfer to pretrained diffusion models}\label{app:exp-pretrained}
We vary only the absorption order through each model's native sampler, using the protocols in
Appendix~\ref{app:setup}. The baseline in every row is the model's own default generation order, so
each comparison replaces that default with the \name{} order while leaving weights, prompting, and
the rest of the decoding pipeline unchanged. For LLaDA-8B that default is low-confidence remasking,
which ranks positions by maximum probability, so the comparison there is between two confidence
scores rather than against an uninformative order.
Absolute accuracies are specific to our protocol (sample size, number of sampling steps, prompting, and
extraction in Appendix~\ref{app:setup}) and are not comparable to numbers published with each
model's own evaluation harness. For MMaDA-8B on GSM8K in particular, both arms sit below the
published figure because of these protocol differences, so only the within-row difference is
meaningful. The entropy-guided order improves over the default order by
0.045 on LLaDA, 0.121 on Dream, and 0.228 on MMaDA for GSM8K. On BBH it improves LLaDA and
Dream by 0.063 and 0.176. For MMaDA, the effect is strongest on GSM8K, positive on Track,
ARC-Challenge, and CommonsenseQA, and nearly unchanged on BBH.

For open multimodal mathematics, the differences on MathVista and MathVerse are not significant for either model, providing the
generality boundary summarized in Figure~\ref{fig:theory-aligned-analysis}(b).
Table~\ref{tab:generality-full} reports the complete model-family results and available significance
information omitted from the compact main-text visualization.

\begin{table}[t]\centering\small
\caption{\textbf{Full transfer results across pretrained diffusion-model families.} We vary only
the absorption policy through each model's native sampler. The text-reasoning experiments use matched
wall-clock time. Higher is better. The label n.s. denotes a difference that is not statistically
significant. Every cell shown in Figure~\ref{fig:theory-aligned-analysis}(b) appears here. The cells
left grey in that figure are not applicable rather than pending: LLaDA-8B and Dream-7B are text-only
diffusion language models and are not evaluated on the image-conditioned benchmarks, while LLaDA-V is
an understanding-only multimodal model without a generation head and is therefore evaluated on
multimodal mathematics alone.}
\label{tab:generality-full}
\resizebox{\textwidth}{!}{%
\begin{tabular}{lllccc}
\toprule
Model & Benchmark & Backbone family & Default order & \name{} order & $\Delta$\\
\midrule
\multicolumn{6}{l}{\textit{Text reasoning (native sampler, order-only ablation)}}\\
LLaDA-8B-Instruct~\citep{nie2025llada} & GSM8K & masked diffusion LM & 0.725 & \textbf{0.770} & $+0.045$\\
Dream-7B~\citep{ye2025dream} & GSM8K & masked diff.\ LM (uniform prior) & 0.371 & \textbf{0.492} & $+0.121$ ($z{=}5.5$, $p{\approx}5{\times}10^{-8}$)\\
MMaDA-8B~\citep{yang2025mmada} & GSM8K & unified MM diffusion & 0.246 & \textbf{0.474} & $+0.228$ ($p{\approx}2{\times}10^{-18}$)\\
MMaDA-8B & BBH logical deduction & unified MM diffusion & 0.370 & \textbf{0.373} & $+0.003$ ($\chi^2{=}0.00$, n.s.)\\
MMaDA-8B & Tracking shuffled objects & unified MM diffusion & 0.180 & \textbf{0.240} & $+0.060$ ($\chi^2{=}3.11$, $p{\approx}0.08$)\\
MMaDA-8B & ARC-Challenge & unified MM diffusion & 0.383 & \textbf{0.417} & $+0.033$ ($\chi^2{=}0.71$, n.s.)\\
MMaDA-8B & CommonsenseQA & unified MM diffusion & 0.367 & \textbf{0.400} & $+0.033$ ($\chi^2{=}4.61$, $p{\approx}0.03$)\\
LLaDA-8B-Instruct & BBH logical deduction & masked diffusion LM & 0.583 & \textbf{0.645} & $+0.063$ ($p{\approx}10^{-3}$)\\
LLaDA-8B-Instruct & Tracking shuffled objects & masked diffusion LM & 0.476 & \textbf{0.592} & $+0.116$\\
LLaDA-8B-Instruct & ARC-Challenge & masked diffusion LM & 0.803 & \textbf{0.830} & $+0.027$\\
LLaDA-8B-Instruct & CommonsenseQA & masked diffusion LM & \textbf{0.767} & 0.763 & $-0.004$\\
Dream-7B & BBH logical deduction & masked diffusion LM & 0.429 & \textbf{0.605} & $+0.176$ ($p{\approx}6{\times}10^{-16}$)\\
Dream-7B & Tracking shuffled objects & masked diff.\ LM (uniform prior) & 0.248 & \textbf{0.296} & $+0.048$\\
Dream-7B & ARC-Challenge & masked diff.\ LM (uniform prior) & 0.820 & \textbf{0.890} & $+0.070$\\
Dream-7B & CommonsenseQA & masked diff.\ LM (uniform prior) & 0.783 & \textbf{0.833} & $+0.050$\\
\midrule
\multicolumn{6}{l}{\textit{Open multimodal mathematics (order-only ablation)}}\\
LLaDA-V~\citep{you2025lladav} & MathVista~\citep{lu2024mathvista} & masked diffusion VLM & 0.483 & \textbf{0.499} & $+0.016$ (n.s.)\\
LLaDA-V & MathVerse~\citep{zhang2024mathverse} & masked diffusion VLM & 0.218 & \textbf{0.222} & $+0.004$ (n.s.)\\
MMaDA-8B & MathVista & unified MM diffusion & \textbf{0.329} & 0.328 & $-0.001$ (n.s.)\\
MMaDA-8B & MathVerse & unified MM diffusion & 0.204 & \textbf{0.206} & $+0.002$ (n.s.)\\
\bottomrule
\end{tabular}}
\end{table}

%==============================================================================

\subsection{Global lookahead error and efficiency}\label{app:results-efficiency}
Block size trades sequential cost against accuracy: singleton absorption requires one evaluation per generated position, whereas batching reduces the evaluation count and reintroduces the joint dependence term. In the reported puzzle ablations, larger blocks lower accuracy (Appendix~\ref{app:budget-ablation}).

The main-text timing in Figure~\ref{fig:theory-aligned-analysis}(d) is for the profiled configuration. A separate GenEval profile at $K=16$ reports Info-Gain about 7.2 times slower than \name{}. These timings do not imply constant overhead across sequence lengths and vocabularies. The block-size study is in Appendix~\ref{app:budget-ablation}.
\paragraph{Info-Gain at its minimal window is not the entropy policy.}\label{app:window_one_comparison} Because Info-Gain scores a
candidate by its own entropy minus the mean entropy of the remaining positions after a one-step
lookahead, one might expect it to coincide with \name{} at window 1. It does not. On the 6.4M
denoiser, evaluated along a shared \name{} trajectory, the two select the same position in only
21.8\% of steps on Nikoli and 33.6\% on Sudoku-Extreme, every puzzle diverges at least once, and
the final grids agree on 80\% of Nikoli puzzles and 24\% of Sudoku-Extreme puzzles. Where they
diverge, \name{} is at least as accurate (0.900 against 0.830 on Nikoli, and 0.296 against 0.284
on Sudoku-Extreme). The downstream-entropy term therefore reorders selections even at the smallest
window, and Figure~\ref{fig:theory-aligned-analysis}(c) measures how that reordering degrades as the
window grows.

The fixed-state window experiment underlying Figure~\ref{fig:theory-aligned-analysis}(c) varies
only the number of unresolved-position entropies aggregated by Info-Gain. The mean absolute score
error increases from 0.17 with one term to 1.76 under full aggregation, while absorption
accuracy peaks at window 8 and then falls to 0.31. Because window 8 is the maximum of that
curve, we use it as the Info-Gain configuration in every accuracy and runtime comparison reported in
the main text, so the baseline is never evaluated at a setting we have shown to be suboptimal. This directly complements the
runtime comparison in panel~(d): local entropy selection avoids both the accumulated
counterfactual estimates and their additional denoiser evaluations.

\subsubsection{Wall-clock comparison}
\begin{table}[t]\centering\small
\caption{Sudoku efficiency (6.4M denoiser, Nikoli $n{=}100$, mean 55.7 blanks/puzzle, five
seeds, one H100). The single-position \name{} policy evaluates the denoiser once per absorbed
position. Solve accuracies for \name{} and Euler@64 are the Table~\ref{tab:sudoku} values.
Table~\ref{tab:scaleup} reports puzzle-level bootstrap intervals and paired McNemar tests.}
\label{tab:efficiency}
\resizebox{\linewidth}{!}{%
\begin{tabular}{lccc}
\toprule
sampler & denoiser evals. & ms/puzzle & solve-acc\\
\midrule
\textbf{\name{} (single-position absorption)} & \textbf{55.7} & \textbf{10.5} & $\mathbf{0.845}$\\
Euler@16  & 16  & 3.6  & $0.238\pm0.014$\\
Euler@32  & 32  & 5.7  & $0.290\pm0.038$\\
Euler@64  & 64  & 9.6  & $0.410$\\
Euler@128 & 128 & 17.5 & $0.334\pm0.022$\\
Euler@256 & 256 & 33.3 & $0.350\pm0.030$\\
\bottomrule
\end{tabular}}
\end{table}

\begin{table}[t]\centering\small
\caption{Profiled wall-clock (generated puzzles, $n{=}200$, mean ${\sim}52$ blanks,
one H100, and ms/puzzle includes each policy's score computation and denoiser reevaluation).
\name{}'s accuracy advantage over the non-absorbing baselines grows with denoiser quality (to
0.970 at 21.3M), while uniformization remains less accurate at comparable wall-clock time.}
\label{tab:wallclock}
\begin{tabular}{lcccc}
\toprule
 & \multicolumn{2}{c}{$6.4$M} & \multicolumn{2}{c}{$21.3$M}\\
sampler & acc & ms & acc & ms\\
\midrule
\textbf{\name{} (single-position absorption)} & 0.925 & 5.7 & \textbf{0.970} & 9.7\\
Euler@64 / @256 & 0.520/0.560 & 8.2/28.2 & 0.830/0.810 & 12.8/46.1\\
uniformization (exact, unord.) & 0.425 & 7.3 & 0.750 & 11.3\\
\bottomrule
\end{tabular}
\end{table}

For GenEval at $K{=}16$, Flow/Euler, \name{}, probability-margin sampling, and Info-Gain require
2.96, 2.96, 2.98, and 21.21 seconds per image and rank, respectively. Info-Gain is therefore
7.2 times slower than \name{}.

\paragraph{Where the per-step cost goes.} To separate the cost of the ordering score from the rest of
the step, we profile the components of a single \name{} step over the Nikoli puzzles. The denoiser forward pass takes 94.8\% of the step, the ordering score (entropy, sorting,
and applying the absorption count) 3.6\%, and the absorption update itself 1.6\%. Two consequences
follow. First, scoring is a small fraction of a step, so the choice among entropy, probability margin,
and maximum probability cannot by itself produce a large wall-clock difference. Timing gaps of that
size reflect implementation rather than the score. Second, the cost of Info-Gain comes from its extra
denoiser evaluations, not from aggregating entropies. These shares were measured on CPU while the
GPUs were occupied. We report the relative split rather than absolute times, and the split is
insensitive to the device because the forward pass dominates on any hardware.

\subsubsection{Block size and the joint term}\label{app:budget-ablation}
Two ablations address the block size $b$ that Theorem~\ref{thm:order} leaves free and the joint term
of Proposition~\ref{prop:kl-decomp} that it does not optimize. Both use the 6.4M denoiser, entropy
ordering, and argmax absorption, and vary only the number of positions absorbed per step.

\paragraph{Block size.} Table~\ref{tab:budget-ablation} compares
single-position absorption with fixed blocks of 2, 4, and 8
positions. Fixed blocks cut denoiser
evaluations from 58 to 8 but lower solve accuracy from 0.900 to 0.720 on Nikoli and from
0.928 to 0.830 on generated puzzles, and the fraction of absorption events that absorb a wrong
value rises monotonically with $b$. Absorbing several positions at once from independent marginals
ignores their dependence, which is the $b{>}1$ penalty the decomposition predicts.

\begin{table}[t]\centering\small
\caption{\textbf{Block size} (Sudoku, 6.4M denoiser, entropy ordering).
Each cell reports puzzle solve accuracy / cell accuracy / fraction of incorrect absorptions, with the
number of denoiser evaluations per puzzle. Larger blocks trade accuracy for fewer evaluations.}
\label{tab:budget-ablation}
\setlength{\tabcolsep}{5pt}
\begin{tabular}{@{}lcccc@{}}
\toprule
Block size & \multicolumn{2}{c}{Nikoli ($n{=}100$)} & \multicolumn{2}{c}{Generated ($n{=}500$)}\\
\cmidrule(lr){2-3}\cmidrule(lr){4-5}
 & solve / cell / bad & NFE & solve / cell / bad & NFE\\
\midrule
Single ($b{=}1$) & $0.900$ / $0.936$ / $0.065$ & $58$ & $0.928$ / $0.969$ / $0.033$ & $59$\\
\cdashline{1-5}[2pt/1pt]
Fixed $b{=}2$ & $0.880$ / $0.922$ / $0.079$ & $29$ & $0.918$ / $0.966$ / $0.036$ & $30$\\
Fixed $b{=}4$ & $0.850$ / $0.918$ / $0.083$ & $15$ & $0.892$ / $0.959$ / $0.043$ & $15$\\
Fixed $b{=}8$ & $0.720$ / $0.875$ / $0.127$ & $8$ & $0.830$ / $0.941$ / $0.061$ & $8$\\
\bottomrule
\end{tabular}
\end{table}

\paragraph{Isolating the joint term on multi-solution puzzles.} On unique-solution puzzles the
oracle conditional is a point mass and the total correlation of any block is zero
(Lemma~\ref{lem:tc}), so the loss in Table~\ref{tab:budget-ablation} mixes the joint term with the
denoiser's own error. To isolate the joint term we build puzzles with 22, 26, or 30 givens by
blanking complete grids, which at these densities are almost all multi-solution, and score grid
validity (every row, column, and box a permutation of 1 to 9) rather than agreement with one
reference solution. Table~\ref{tab:tc-validity} shows that single-position absorption stays valid on
essentially every puzzle, because re-conditioning after each absorption collapses the remaining
ambiguity onto one consistent branch, whereas absorbing a block of mutually dependent cells from
independent marginals mixes digits from different solution branches and breaks the constraints:
validity falls to 0.51 to 0.71 at $b{=}4$ and 0.21 to 0.46 at $b{=}8$, and worsens as puzzles
become more under-determined. On the few unique-solution puzzles in the same sweep, block absorption
is perfectly valid at every block size. The penalty therefore appears only when the jointly absorbed
cells are dependent, which is the total-correlation term itself.

\begin{table}[t]\centering\small
\caption{\textbf{The joint term isolated} (Sudoku, 6.4M denoiser, entropy ordering, $n{=}150$
puzzles per givens level). Fraction of valid final grids on multi-solution puzzles as a function of
block size. The last column reports the unique-solution puzzles in the same sweep.}
\label{tab:tc-validity}
\setlength{\tabcolsep}{6pt}
\begin{tabular}{@{}lccccc@{}}
\toprule
Givens & Multi-solution & Single ($b{=}1$) & Block $b{=}4$ & Block $b{=}8$ & Unique, any $b$\\
\midrule
$22$ & $150/150$ & $1.000$ & $0.507$ & $0.267$ & ---\\
$26$ & $150/150$ & $1.000$ & $0.667$ & $0.207$ & ---\\
$30$ & $147/150$ & $0.993$ & $0.708$ & $0.460$ & $1.000$\\
\bottomrule
\end{tabular}
\end{table}

\subsubsection{Comparison with entropy-bounded block unmasking}\label{app:eb-sampler}
EB-Sampler~\citep{benhamu2025ebsampler} absorbs, at each step, the lowest-entropy position together
with any further positions whose cumulative entropy stays under a threshold $\epsilon$. At $\epsilon{=}0$
it reduces to single-position \name{}. We reimplement it on the puzzle tasks with the same denoiser,
argmax absorption, and ordering (Table~\ref{tab:eb-sampler}). At $\epsilon{=}0.5$ it cuts denoiser
evaluations about thirteen-fold (55.7 to 4.2 on Nikoli) and loses about fourteen points of solve
accuracy (0.900 to 0.760, and 0.928 to 0.792 on generated puzzles), and both the accuracy loss and
the incorrect-absorption fraction grow monotonically with $\epsilon$. EB-Sampler's adaptive batching is
slightly more evaluation-efficient than fixed blocks at matched accuracy, but it sits on the same
frontier as Table~\ref{tab:budget-ablation}: fewer steps are bought by absorbing dependent positions
together, and single-position \name{} occupies the accuracy-optimal end.

\begin{table}[t]\centering\small
\caption{\textbf{EB-Sampler reimplementation on Sudoku} (6.4M denoiser). Cells report solve
accuracy / cell accuracy / fraction of incorrect absorptions and the number of denoiser evaluations per
puzzle. $\epsilon$ is the cumulative-entropy threshold in nats. $\epsilon{=}0$ is \name{}.}
\label{tab:eb-sampler}
\setlength{\tabcolsep}{5pt}
\begin{tabular}{@{}llcccc@{}}
\toprule
Sampler & $\epsilon$ & \multicolumn{2}{c}{Nikoli ($n{=}100$)} & \multicolumn{2}{c}{Generated ($n{=}500$)}\\
\cmidrule(lr){3-4}\cmidrule(lr){5-6}
 & & solve / cell / bad & NFE & solve / cell / bad & NFE\\
\midrule
\name{} (single) & $0$ & $0.900$ / $0.936$ / $0.065$ & $55.7$ & $0.928$ / $0.969$ / $0.033$ & $53.2$\\
\cdashline{1-6}[2pt/1pt]
EB-Sampler & $0.5$ & $0.760$ / $0.898$ / $0.104$ & $4.2$ & $0.792$ / $0.932$ / $0.070$ & $3.7$\\
EB-Sampler & $1.0$ & $0.670$ / $0.871$ / $0.130$ & $3.6$ & $0.664$ / $0.903$ / $0.100$ & $3.2$\\
EB-Sampler & $2.0$ & $0.520$ / $0.829$ / $0.173$ & $3.0$ & $0.528$ / $0.879$ / $0.124$ & $2.7$\\
EB-Sampler & $4.0$ & $0.270$ / $0.791$ / $0.211$ & $2.6$ & $0.334$ / $0.846$ / $0.158$ & $2.3$\\
\bottomrule
\end{tabular}
\end{table}

%==============================================================================
 % E: additional experimental analysis
\section{Extended related work}\label{app:extended-related}

\paragraph{Discrete flow simulation and numerical correction.}
Discrete flow models generate categorical data through continuous-time probability paths and
associated CTMC velocities~\citep{campbell2024generative,shaul2025flow}. Euler and tau-leaping
approximate these dynamics on a finite grid, while corrected samplers reduce temporal or spatial
discretization error and uniformization simulates the learned CTMC more directly
~\citep{wan2026corrected,zhang2026error,zhao2025informed}. Related work also generalizes the
interpolating path between masked and uniform corruption~\citep{rutte2025generalized}, and
optimizes the time discretization itself, allocating steps where the discrete diffusion process
changes fastest~\citep{foresti2026improvedsamplingschedulesdiscrete}. These methods improve how the native transition process is
integrated, and the schedule axis is orthogonal to ours: an improved step allocation sets the grid on which the absorption count $m_k$ is spent, whereas \name{} decides which positions leave that grid. In contrast, \name{} augments inference with an irreversible absorption decision while
leaving the learned posterior, base probability path, and native update on active positions
unchanged. The two design axes are complementary: a corrected update may be applied to active
positions before the next absorption decision.

\paragraph{Confidence-guided absorption and revision.}
Confidence-based easy-first selection has a long history in iterative masked generation. MaskGIT
fixes high-confidence image tokens over successive iterations, while later methods learn planners,
use model confidence, or reconsider uncertain predictions
~\citep{chang2022maskgit,kim2025train,kwok2026tablecellattentionbetter,ye2024beyond,peng2025path,hong2026improving,xu2026scheduling,ouyang2026training}.
Remasking samplers reintroduce revision into masked diffusion at inference
time~\citep{wang2026remaskingdiscretediffusionmodels}, and confidence thresholds decode several
positions per step~\citep{wu2025fastdllmtrainingfreeaccelerationdiffusion}.
Info-Gain instead uses counterfactual lookahead to estimate how one decision changes uncertainty over
the remaining positions~\citep{yang2026informationgain}. On the theoretical side,
\citet{cai2026confidencebaseddecodingprovablyefficient} show that confidence-based decoding in
masked diffusion language models is provably efficient, in the sense that committing high-confidence
positions first controls the number of denoiser calls needed to reach a target quality. Their
analysis takes the absorbing (masked) corruption as given, so every position is committed exactly
once and never revised. \name{} places this policy question inside
a uniform discrete flow: it absorbs the lowest-entropy active positions and suppresses their future
velocity, while all other positions continue under the native flow. Our analysis separates the
joint factorization and conditional prediction errors of this local decision, derives entropy from
an upper bound on the latter, and quantifies how prediction error accumulates with the global lookahead
window.

\paragraph{Entropy-bounded and gradient-steered samplers.}
Two recent samplers for masked diffusion apply KL analysis to unmasking, and the gap between what
they control and what irreversible absorption costs is the gap this paper closes.
EB-Sampler~\citep{benhamu2025ebsampler} bounds the dependence discarded within one parallel block and
spends that bound on speed, enlarging the block to reach 2 to 3 times fewer function evaluations at
matched quality. Its analysis never asks which position should be made terminal. The ordering
criterion is taken from existing local proxies. Under a revisable sampler that omission is benign,
since a poor early choice can be overwritten. Under absorption it is not, because the absorption count is
fixed before any ordering is chosen and therefore cannot see the error that ordering determines.
Section~\ref{sec:relorder} supplies exactly that missing quantity, and the two mechanisms are
composable: the entropy threshold bounds the block, the entropy ranking orders it.
BoE steering~\citep{saini2026tabes} approximates one-step lookahead by differentiating
successor-state entropy with respect to the injected embedding, which places it on the lookahead side
of the contrast drawn in Section~\ref{sec:relorder}, and inherits the window sensitivity that
Lemma~\ref{lem:gain-amplification} identifies. It is also not portable to our setting: its token
importance score is a first-order expansion along $e_i(\alpha)=e_{m}+\alpha(\tilde e_i-e_{m})$,
anchored at the mask embedding $e_{m}$, and a uniform discrete flow has no absorbing mask symbol to
supply that basepoint. Reporting a number for it would require inventing one and would measure our
surrogate rather than the published method, so we do not. Its authors additionally scope the
underlying relaxation away from ``symbolic reasoning with brittle constraints,'' the regime in which
our largest gains occur. Neither method has released code at the time of writing.

\paragraph{Guidance, training, and benchmark scope.}
Guidance methods target reward- or evidence-weighted distributions, and reinforcement-learning
methods fine-tune the denoiser for downstream rewards
~\citep{discreteguidance2026,wan2026dflowgrpo}. These approaches alter the target, learned model, or
training objective. \name{} is a training-free inference policy. Standard multimodal-understanding
benchmarks often require short answers, leaving limited output structure for an absorption policy
to exploit. We therefore evaluate across a spectrum: understanding measures behavior on weakly
constrained outputs, whereas text-to-image generation and structured reasoning test outputs with
stronger dependencies (Tables~\ref{tab:understanding}, \ref{tab:mm-geneval}, and~\ref{tab:sudoku}).
 % F: extended related work

\end{document}